\DeclareMathAlphabet{\mathpzc}{OT1}{pzc}{m}{it}
\newcommand{\opn}[1]{\operatorname{#1}}						%Operators
\newtheorem{theorem}{Theorem}[section]
\newtheorem*{theorem*}{Theorem}				    			%Theorem
\newtheorem{lemma}[theorem]{Lemma}			                %Lemma
\newtheorem{?}[theorem]{Question}	                        %Question
\newtheorem{proposition}[theorem]{Proposition}				%Proposition
\newtheorem{corollary}[theorem]{Corollary}					%Corollary
\newtheorem*{assumption}{Assumption}                        %Assumption
\theoremstyle{remark} % Changes style of all following newtheorems e.g. def or remark
\newtheorem{remark}{Remark}
\newtheorem*{notation}{Notation}
\newcommand{\shortminus}[0]{\scalebox{0.75}[1.0]{\( - \)}}
\newcommand{\fr}[2]{\frac{#1}{#2}}							%Fractions
\newcommand{\paran}[1]{\left(#1\right)}	                    %Paranthesis
\newcommand{\set}[1]{\left\{#1\right\}}				        %Set
\newcommand{\abs}[1]{\left|#1\right|}	                    %Absolute Value
\newcommand{\norm}[1]{\left\|#1\right\|}					%Norm
\newcommand{\bv}[1]{\bm{#1}}								%Bold Vector
\newcommand{\hatbv}[1]{\hat{\bv {#1}}}				    	%Unit Vector #1
\newcommand{\uv}[1]{\hat{\mathbf{#1}}}						%Unit Vector #2
\newcommand{\guv}[1]{\hat{\bm{#1}}}							%Greek Unit Vector
\renewcommand{\vec}[1]{\langle #1 \rangle}
\newcommand{\C}{\MC{C}}
\newcommand{\RR}[0]{\mathbb{R}}
\newcommand{\hquad}{\hspace{0.5em}}
\let\tilde\widetilde 				
\newenvironment{definition}[1][Definition.]{\begin{trivlist}		
\item[\hskip \labelsep {\bfseries #1}]}{\end{trivlist}}		%Definition
\renewcommand{\C}[1]{\cos( \theta_{#1} )}
\renewcommand{\S}[1]{\sin( \theta_{#1} )}
\definecolor{orcidlogocol}{HTML}{A6CE39}
\DeclareRobustCommand{\orcidicon}{%
	\begin{tikzpicture}
	\draw[lime, fill=lime] (0,0) 
	circle [radius=0.16] 
	node[white] {{\fontfamily{qag}\selectfont \tiny ID}};
	\draw[white, fill=white] (-0.0625,0.095) 
	circle [radius=0.007];
	\end{tikzpicture}
	\hspace{-2mm}
}
\newcommand{\orcid}[1]{\href{https://orcid.org/#1}{\textcolor[HTML]{A6CE39}{\orcidicon}}}
\begin{document}

\title{A Geometric Approach to the Kinematics of the Canfield Joint}

\author{Christian~Bueno\orcid{0000-0002-3903-7615}, %~\IEEEmembership{Member,~IEEE,}
        Kristina~V.~Collins\orcid{0000-0002-3816-1948}, %~\IEEEmembership{Fellow,~OSA,}
        Alan Hylton, %~\IEEEmembership{Member,~IEEE,}
        and~Robert~Short\orcid{0000-0003-2087-5528}%,~\IEEEmembership{Life~Fellow,~IEEE}% <-this % stops a space
\thanks{Christian Bueno is in the Department of Mathematics, University of California, Santa Barbara, CA 93106. email: christianbueno@ucsb.edu}
\thanks{Kristina Collins is in the Department of Electrical, Computer, and Systems Engineering, Case Western Reserve University, Cleveland, Ohio, 44106 USA. email: kvc2@case.edu}% <-this % stops a space
\thanks{Alan Hylton and Robert Short are with NASA Glenn Research Center, Cleveland, Ohio, 44135 USA. email: alan.g.hylton@nasa.gov, robert.s.short@nasa.gov}}% <-this % stops a space

% The paper headers
% \markboth{Draft as of \today}%
% {Shell \MakeLowercase{\textit{et al.}}: Bare Demo of IEEEtran.cls for IEEE Journals}

% make the title area
\maketitle

\begin{abstract}
This paper details an accessible geometric derivation of the forward and inverse kinematics of a parallel robotic linkage known as the Canfield joint, which can be used for pointing applications. The original purpose of the Canfield joint was to serve as a human wrist replacement, and it can be utilized for other purposes such as the precision pointing and tracking of antennas, telescopes, and thrusters. We build upon previous analyses, and generalize them to include the situation where one of the three legs freezes; the kinematics are also substantially generalized beyond failure modes, detailed within. The core of this work states and clarifies the assumptions necessary to analyze this type of parallel robotic linkage.  Specific guidance is included for engineering use cases. 

\end{abstract}

% \begin{IEEEkeywords}
% kinematics, Canfield joint, parallel robot
% \end{IEEEkeywords}

\IEEEpeerreviewmaketitle

\section{Introduction}\label{sec:intro}

The Canfield joint is a parallel robotic linkage with three degrees of freedom (DoF)  \cite{patent}. Originally designed to mimic a wrist joint, it has been explored for deep space optical communications \cite{KCThesis, canfield_validation} and other end effectors, including solar cells and thrusters.

The Canfield joint bears some resemblance to a delta robot, but has a 3-\underline{R}UR kinematic chain rather than a delta robot's 3-\underline{R}UU one.\footnote{Here \underline{R}, R, and U denote an actuated revolute joint, passive revolute joint, and passive universal joint respectively. See \cite{parallelcontrolbook}.} The position and orientation of its distal plate, on which an end effector is mounted, is controlled entirely by three actuators, one at the base of each leg. Developed in 1997 \cite{patent}, it is intended for use as a gimbal mechanism.  Compared to a traditional three-axis concentric gimbal, the Canfield joint offers superior size, weight, and power (SWaP) characteristics: because of its parallel load paths, it can articulate a larger end effector mass than a gimbal of comparable size, and cables can be routed through its center, obviating the need for an additional cable tray.

The Canfield joint also offers a hemispherical workspace and robust pointing operation when reduced to two degrees of freedom. One of the key results of this paper is to elaborate on this robustness. In the event that one of the legs locks into place, such as due to motor failure, we show it is often still possible to precisely point at celestial objects. Along the way to this result, we solve other inverse kinematics problems and formalize, generalize, and simplify existing approaches.  The robustness of the joint under this problem has been explored in prior work \cite{BobTopCJPaper, ChristianPoster}, but we give a more precise and effective description for inverse kinematics here. 

Despite being a parallel linkage, the Canfield joint has elegant kinematics when midplane symmetry is assumed. Given this assumption, we show it is possible to rigorously and completely solve various kinematic problems in a more general setting than previously addressed. In particular, the assumption of midplane symmetry enables us to study a generalized model of the Canfield joint which allows for variation of relative dimensions, including non-equilateral plates and unequal leg lengths. With the assumption of midplane symmetry, we can readily identify singularities of the forward kinematic mapping. Lastly, we demonstrate new relationships between quantities of interest and provide novel methods and visualizations. 

\section{Model Construction and Conventions}\label{sec:construction}

To obtain rigorous results, we must first introduce a model of the Canfield joint as a spatial kinematic chain. We will do this in two steps: First we will introduce what we term a ``half-joint'' and second we define the Canfield joint as two half-joints glued together along their corresponding free ends.
\begin{figure}[ht]
\centering
\includegraphics[width=0.45\textwidth]{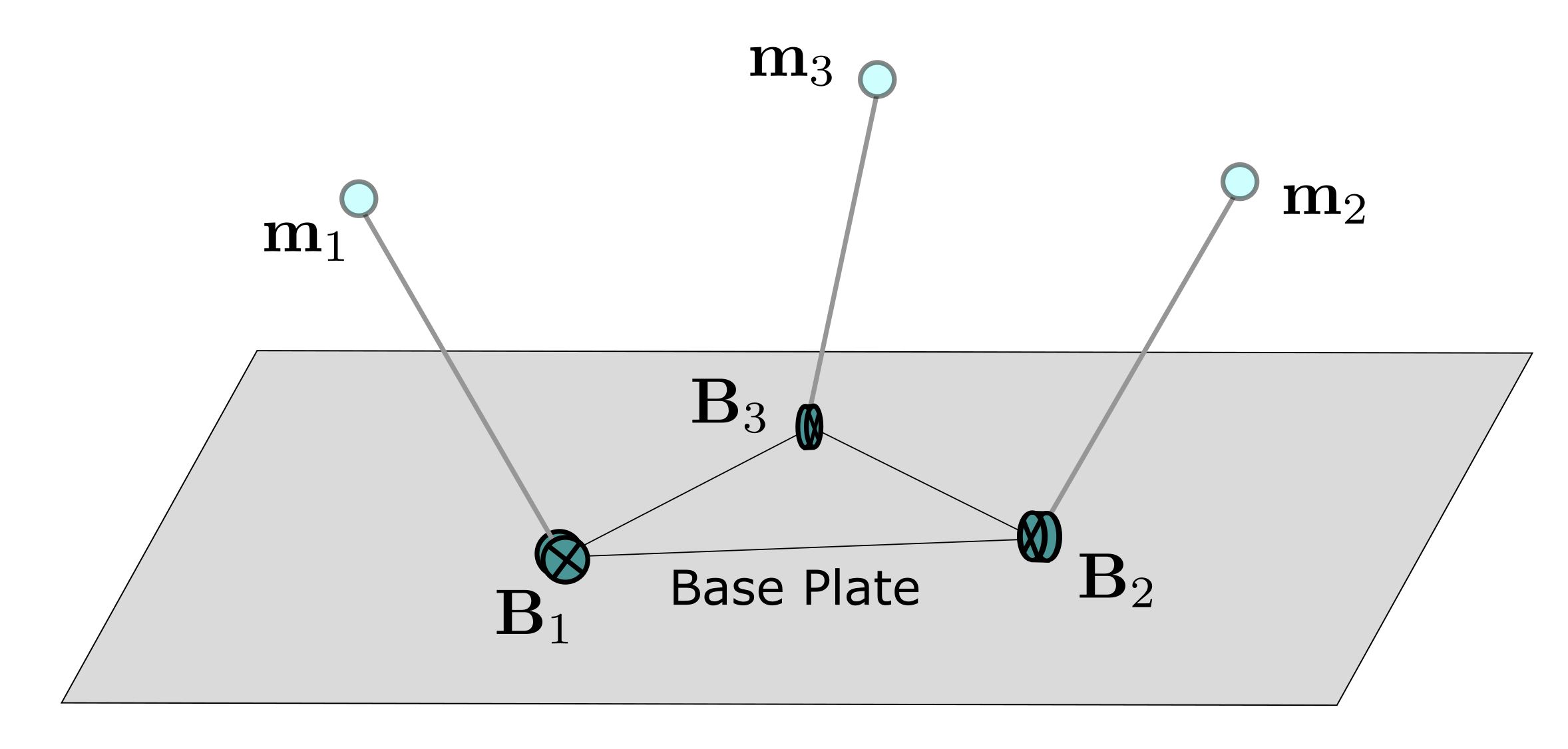}
\caption{Picture of a half-joint.} 
\label{fig:halfjoint}
\end{figure}

\begin{definition}
A \textbf{half-joint} is a triangular plate of nonzero area with links of nonzero length attached at each corner via a revolute joint (Figure \ref{fig:halfjoint}). The free endpoints are termed \textbf{midjoints} and are denoted $\bv m_1$, $\bv m_2$, and $\bv m_3$. 
\end{definition}

Note that in the above definition, there is no restriction on how the revolute joint is oriented. Indeed its axis of rotation is free to be chosen, but fixed thereafter. This will lead to a substantial generalization of the mechanism introduced in \cite{Canfield1998}.

\begin{figure}[ht]
	\centering
	\includegraphics[width=.45\textwidth]{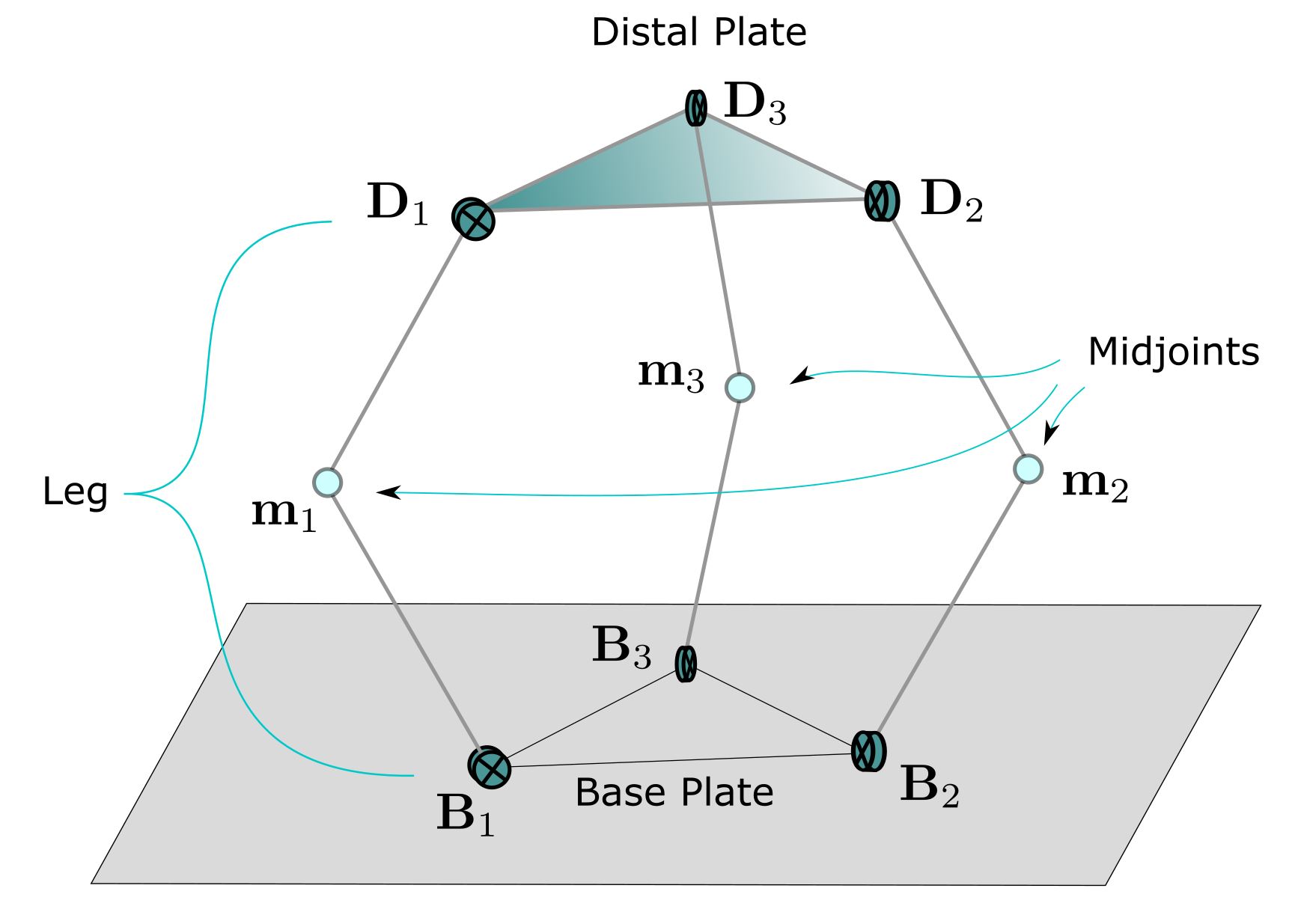}
	\captionsetup{justification=centering}
	\caption{Illustration of Canfield joint and variables.}
	\label{fig:canfieldjoint}
\end{figure}
\begin{figure}[ht]
\centering
\includegraphics[width=0.35\textwidth]{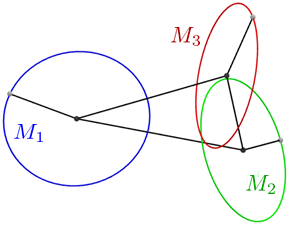}
\caption{Midcircles, relative to half-joint.} 
\label{fig:midcircles}
\end{figure}

\begin{definition}
A \textbf{Canfield joint} consists of two identical half-joints attached at their corresponding midjoints via spherical\footnote{The original prototype presented in \cite{Canfield1998} uses a universal joint which induces a slightly more restricted range of motion.} joints (Figure \ref{fig:canfieldjoint}). We designate one of the half-joints as the \textbf{base half-joint} and refer to the other one as the \textbf{distal half-joint}.
\begin{itemize}
    \item The \textbf{base plate} (\textbf{distal plate})\footnote{Many implementations place the hinges on the lateral midpoints of the triangular plates instead of the corners. Nevertheless, these hinges will form a triangle and it's these hinge triangles which we call the base/distal plates.} is the triangular plate on the base (distal) half-joint and the plane containing it is called the \textbf{base plane} (\textbf{distal plane}). 
    \item The \textbf{base hinges} and \textbf{distal hinges} are the revolute joints on the base plate and distal plate. Their locations are denoted by $\bv B_i$ and $\bv D_i$ for $i=1,2,3$ respectively ordered counterclockwise and such that $\bv B_i$ and $\bv D_i$ corresponds to $\bv m_i$. Only the base hinges are driven and we denote their axes of revolution by $\uv N_i$ for $i=1,2,3$ respectively.
    \item The $i$-th \textbf{midcircle} $M_i$ is the circle of all possible positions for the midjoint $\bv m_i$ (Figure \ref{fig:midcircles}). The plane containing $M_i$ is called the $i$-th \textbf{midcircle plane} and is denoted $P_i$. 
    % \item Each midjoint is constrained to move on a circle called a \textbf{midcircle} denoted by $M_1$, $M_2$, and $M_3$ respectively (see Figure \ref{fig:midcircles}). The plane containing $M_i$ is called the $i$-th midcircle plane and is denoted $P_i$. 
    \item The \textbf{base normal} $\uv N_B$ and \textbf{distal normal} $\uv N_D$ are unit vectors normal to the base and distal planes respectively, such that the hinges are numbered in counter-clockwise order relative to said vector's orientation. Each can be concretely given by,
    \[
        \uv N_B\!=\!\tfrac{(\!\bv B_2\shortminus\bv B_1\!)\times (\!\bv B_3\shortminus\bv B_1\!)}{\|(\!\bv B_2\shortminus\bv B_1\!) \times (\!\bv B_3\shortminus\bv B_1\!)\|},\quad
        \uv N_D\!=\!\tfrac{(\!\bv D_2\shortminus\bv D_1\!)\times (\!\bv D_3\shortminus\bv D_1\!)}{\|(\!\bv D_2\shortminus\bv D_1\!) \times (\!\bv D_3\shortminus\bv D_1\!)\|}.
    \]
    \item The \textbf{base backward-normal} and \textbf{distal backward-normal} are respectively given by $-\uv N_B$ and $-\uv N_D$. 
\end{itemize}
\end{definition}

In this paper, the \textbf{base center} $\bv B_c$ is a distinguished point which can be located anywhere on the base plane. The \mbox{{\textbf{distal center}}} $\bv D_c$ is the point on the distal plane whose position relative to the distal hinges is the same as the base center's position relative to the base hinges (i.e. the point corresponding to $\bv B_c$ in the distal plane).\footnote{In practice, the distal center is in the interior of the distal plate and is where the tool is located. However, we do not need to enforce such a restriction.} We will give a simpler characterization of the distal center in Proposition \ref{prop:forwardkinematics}.

For this paper, any right-handed XYZ coordinate system with origin $\bv B_c=\bv 0$, $z$-direction $\uv N_D =\uv z$, and base plane as the XY plane, will suffice. For a more concrete coordinate system, we offer the following notion of a \textbf{base frame}: Let $\uv x:=\tfrac{\bv B_2\shortminus\bv B_1}{\|\bv B_2\shortminus\bv B_1\|}$, $\uv z:=\uv N_B$, and $\uv y:=\uv z\times\uv x$. A \textbf{distal frame} can be defined analogously using the distal hinges and distal normal, but translated so that $\bv D_c=\bv 0$.

To simplify our analysis, we will allow different pieces of the mechanisms to intersect or even pass through one another.
We also assume all the elements are infinitely thin. Since there are many ways to construct a Canfield joint, we leave it up to the end-user to discard solutions which are physically impossible or irrelevant for their use-case.

% Old version
%In this model, it is possible for the different components to intersect %or even pass through one another. For example, it is possible for the %distal plate and base plate to intersect when the half-joints are %aligned. In this paper, we are allowing self-intersections in order to %permit as much freedom as possible.

%As a simplification for our analysis, we will allow for the links and plates to intersect and pass through one another. For example, starting with the configuration in Figure \ref{fig:canfieldjoint} it is permissable to flatten the whole Canfield joint by pushing the distal plate down onto the base plate. We can even push past this flattened configuration by pushing the distal plate below the base plate. By not worrying about these issues, we can focus our attention on understanding the inverse kinematics in the most generous settings. If a solution does not exist while allowing for self-intersections, then it won't exist for a physical model either. Conversely, if a solution does exists while allowing for self-intersection, we can readily check on paper or in a simulation whether this configuration is physically realizable. 

A Canfield joint model following the above definition requires three types of parameters to be specified: the distance from each hinge to its corresponding midjoint (denoted $\ell_i$ for $i =1,2,3$), the distances between adjacent base hinges (denoted $b_i$ for $i = 1,2,3$) and the orientation of the base hinge axes (introduced earlier as $\uv N_i$). This gives a total of $3+3+2\cdot3=12$ design parameters to be chosen.  Using the above notations, we can describe the $i$-th midcircle $M_i$ as the set of $\bv x$ satisfying the following constraints:
\begin{align*}
    \uv N_i\cdot(\bv x - \bv B_i) &= 0, \\
    \norm{\bv x - \bv B_i}^2 &= \ell_i^2.
\end{align*}

Note that this level of generality is not taken in the construction established in \cite{Canfield1998}. In the \textbf{standard Canfield joint}, the base and distal plates are equilateral triangles, each hinge axis is parallel to the triangle side opposite said hinge, all the distances from a hinge to its connected midjoint are the same (denoted $\ell$), and that the distances between any two adjacent hinges are the same (denoted $b$).  In this simplified setup, it is typical to choose $\bv B_c$ to be the center of the base equilateral plate. The base hinge positions can then be given by:
%As an example, we can situate the base hinges in this frame of reference with the following formulas and where the hinges rotate about the axis in the XY plane perpendicular to the hinge location vector. \CBnote{I'm confused by last sentence}
%\CBnote{Alternate form
\[
\bv B_1 = \tfrac{b}{\sqrt{3}}\uv e_1, \quad \bv B_2= \opn{Rot}_z(\pi/3) \bv B_1, \quad \bv B_3 = \opn{Rot}_z(\pi/3) \bv B_2.
\]
Where $\uv e_1$ is the standard basis vector in the X direction and $\opn{Rot}_z(\theta)$ rotates vectors by angle $\theta$ with $\uv z$ as the axis of rotation. To wit:  
\[
    \bv B_1\!= \!\vec{\tfrac{b}{\sqrt{3}},0,0},\quad
    \bv B_2\!=\! \vec{\shortminus\tfrac{b}{2\sqrt{3}},\tfrac{b}{2}, 0},\quad
    \bv B_3\!=\!\vec{\shortminus\tfrac{b}{2\sqrt{3}}, \shortminus\tfrac{b}{2}, 0}.
\]
% \begin{align*}
%     \bv B_1 &= \vec{\tfrac{b}{\sqrt{3}},0,0},\\
%     \bv B_2 &= \vec{-\tfrac{b}{2\sqrt{3}},\tfrac{b}{2}, 0}, \\
%     \bv B_3 &= \vec{-\tfrac{b}{2\sqrt{3}}, -\tfrac{b}{2}, 0}.
% \end{align*}

We will refer to this example on occasion throughout the paper.  However, the results we have do not depend on this setup to proceed.

% \begin{definition}\RSnote{Tri}
% A Canfield joint with parameters $\ell$ and $b$ has:
% \begin{itemize}
%     \item three base hinges located at
%     \begin{itemize}
%         \item $\bv B_1 = \vec{\tfrac{b}{\sqrt{3}},0,0}$,
%         \item $\bv B_2 = \vec{-\tfrac{b}{2\sqrt{3}},\tfrac{b}{2}, 0}$, and
%         \item $\bv B_3 = \vec{-\tfrac{b}{2\sqrt{3}}, -\tfrac{b}{2}, 0}$;
%     \end{itemize}
%     where each base hinge rotates about the axis in the $xy$-plane which is perpendicular to the hinge location vector;
%     \item the center of the base plate located at $\bv B_c = \vec{0,0,0}$;
%     \item three midjoints, with each connected by a leg of length $\ell$ to a base hinge;
%     \item three distal hinges, with each connected by a leg of length $\ell$ to a midjoint, and each distance $b$ from each other.
% \end{itemize}

% \end{definition}

The core of our treatment is the following notion of midplane symmetry.  We will use this idea throughout the paper.

\begin{definition}
A configuration of the Canfield joint has \textbf{midplane symmetry} if there is some plane passing through all three midjoints such that the reflection of the base half-joint over the the plane gives the distal half-joint.  Such a plane is called the \textbf{midplane} for that configuration.
\end{definition}

There are infinitely many planes that pass through all three midjoints if they are colinear. This can happen for certain design parameters, and can happen even in the case of the standard Canfield joint. However, if the configuration of the Canfield joint satisfies midplane symmetry, there is only one midplane for that configuration.  Moreover, when the midjoints are not colinear, they may be used directly to uniquely identify the midplane.  In this case, a normal vector to the midplane can be given by $\bv N = (\bv m_2 - \bv m_1) \times (\bv m_3 - \bv m_1)$.

Assuming midplane symmetry, specifying the base half-joint configuration and a plane through the midjoints determines a unique configuration of the Canfield joint with midplane symmetry. This unique configuration can be constructed by joining the base half-joint to its reflection across the provided plane. Conversely, given a configuration with midplane symmetry, the base half-joint configuration and midplane are uniquely determined (any two planes which cause the same reflection of the base half-joint must be the same). This correspondence allows us to focus on the midplane as an inverse kinematic tool which we informally describe below:
\[
    \begin{matrix}
    \text{Midplane-symmetric} \\
    \text{Canfield joint} \\
    \text{configurations}
    \end{matrix}
    \quad\longleftrightarrow\quad
    \begin{matrix}
    \text{Half-joint configurations} \\
    \text{equipped with a} \\
    \text{plane through midjoints}
    \end{matrix}
\]

Although not every possible configuration of a Canfield joint will satisfy midplane symmetry, these are the only configurations we have seen reliably utilized in practice \cite{Canfield1998,ganino,PowerCube,CJGoogleSite,BobTopCJPaper,HATTScomparison}.  Nevertheless, even with midplane symmetry, the cases where the midjoints are colinear allow for infinitely many choices of midplanes with which to complete the configuration. Such configurations are often associated with mechanical singularities that threaten a user's ability to control a Canfield joint. These are worth further study, as some configurations like these can be physically realized, however that is beyond the scope of this paper. Fortunately, these singularities seem to be rare and indeed, in \cite{Canfield1998}, Canfield claimed that the eponymous mechanism had ``A large, singularity free workspace.'' As such, we will assume the following:

\begin{assumption}
For the rest of this paper, we will assume that midplane symmetry is always satisfied by the configurations we are considering, unless otherwise stated.
\end{assumption}

% We will also see in Section \ref{sec:fwdkin} that we can always construct a configuration with midplane symmetry for a specified lower half-joint setting.

% \CBnote{Since midplane symmetry is paramount, we should probably make a clear axiom/assumption block stating that we only consider midplane symmetric situations. Or something that makes it very clear that this part is important to read and internalize. That or add the assumption to all places that use it. Also, worth noting that there is at most one midplane of symmetry, and possibly important to show reader how to construct midplane of symmetry if it exists.} \RSnote{Changed this paragraph.  Please reread and offer advice.  I think this critique has been (mostly) addressed.  Also, we give them a point and normal vector, so I don't think we need to show them how to construct a midplane.} \CBnote{That construction of the midplane fails sometimes, for example if the midjoints are colinear, so phrasing is pretty tricky}

\section{Forward Kinematics}\label{sec:fwdkin} Since only the base angles are driven, we need a way of determining the entire configuration from this information alone, i.e. the forward kinematics. In \cite{Canfield1998}, Canfield provides a description of the forward kinematics which implicitly uses midplane symmetry in an essential way. We will make this assumption explicit as well as reformulate and generalize the forward kinematics.

Our first task is to unite the base angles and midjoints.  The key to this are the midcircles $M_1$, $M_2$, and $M_3$. 
As an example, in the standard Canfield joint constructed in \cite{Canfield1998}, we can parametrize the midcircles using the base angles $\theta_1$, $\theta_2$, and $\theta_3$.  Using this parametrization, we can locate the midjoints using the formulas below:
% \begin{proposition}[Formulas for Midjoint Locations] \label{prop:midcircle_parametrization}\RSnote{Tri}
% Given a Canfield joint with parameters $\ell$ and $b$, the base angles $\theta_1$, $\theta_2$, and $\theta_3$, parametrize the corresponding midcircles.  Given specified angles, the following formulas locate the midjoints:
\begin{align*}
\bv m_1 &=  \vec{ \ell \C{1} + \tfrac{b}{\sqrt{3}}, 0 , \ell \S{1} },\\
\bv m_2 &= \vec{ -\tfrac{\ell}{2}\C{2} - \tfrac{b}{2\sqrt{3}}, \tfrac{\ell \sqrt{3}}{2}\C{2} + \tfrac{b}{2}, \ell \S{2} },\\
\bv m_3 &= \vec{ -\tfrac{\ell}{2} \C{3} - \tfrac{b}{2\sqrt{3}}, -\tfrac{\ell \sqrt{3}}{2}\C{3}-\tfrac{b}{2}, \ell \S{3} }.
\end{align*}

% \end{proposition}

% \begin{proof}
% Proof to go here
% \end{proof}

Since we can define the midjoints in terms of the base angles, it suffices to phrase our remaining results in terms of just the midjoints.  Detailed computations with specific values for $\ell_i, b_i$, and $\uv N_i$ are left to the interested reader.

% Combining this with the definition of the midplane above, we have the following corollary:

% \begin{corollary}[Formula for midplane]
% Given a Canfield joint with parameters $\ell$ and $b$ and base angles $\theta_1$, $\theta_2$, and $\theta_3$, an equation for the midplane is:
% \[
   
% \]
% % $$ Dis gun B uggly$$
% \end{corollary}

Since we are assuming midplane symmetry, the distal half-joint must be the reflection of the base half-joint over the midplane. This will allow us to readily locate features of the distal half-joint. To simplify exposition, we introduce the following notation:

\begin{notation}
We use $R_P(\bv a)$ to denote the reflection of a point $\bv a$ over a plane $P$. If the plane $P$ is given by $\bv N\cdot(\bv x- \bv q)=0$ then we may use $R_{[\bv N, \bv q]}(\bv a)$ to denote this reflection and it can be explicitly given by
\[ 
    R_P(\bv a) = R_{[\bv N, \bv q]}(\bv a)  = \bv a - 2\frac{\bv N \cdot (\bv a - \bv q)}{\norm{\bv N}^2}\bv N.
\]
\end{notation}
\begin{remark}\label{rk:reflection}
For convenience, we also note two special cases:
\begin{align*}
    R_{[\bv N, \bv 0]}(\bv a)  &=  \bv a - 2\frac{\bv N \cdot \bv a}{\norm{\bv N}^2}\bv N,\\%\,\, \mathrm{ and} \\
    R_{[\bv N,\bv q]}(\bv 0) &= 2\frac{\bv N \cdot \bv q}{\norm{\bv N}^2}\bv N,
\end{align*}
% \[
%     R_{[\bv N, \bv 0]}(\bv a)  =  \bv a - 2\frac{\bv N \cdot \bv a}{\norm{\bv N}^2}\bv N,
%     \quad 
%     R_{[\bv N,\bv q]}(\bv 0) = 2\frac{\bv N \cdot \bv q}{\norm{\bv N}^2}\bv N,
% \]
and a consequence, \[R_{[\bv N,\bv q]}(\bv a+\bv b)=R_{[\bv N,\bv q]}(\bv a)+R_{[\bv N,\bv 0]}(\bv b).\]
\end{remark}

Once the midplane has been defined, the distal half-joint is determined by reflecting the base plate over the midplane.  This gives us the following proposition:

\begin{proposition}[Forward Kinematics]\label{prop:forwardkinematics}
If a configuration has midplane $P$ which is normal to $\bv N$ and contains $\bv q$, then
\[
    \bv D_i = R_{[\bv N, \bv q]}(\bv B_i), \hquad \bv D_c = R_{[\bv N, \bv q]}(\bv 0), \hquad \uv N_D = R_{[\bv N,\bv 0]}(-\uv z).
\]
We can choose $\bv N=(\bv m_2-\bv m_1)\times(\bv m_3-\bv m_1)$ and $\bv q=\bv m_i$ for any $i=1,2,3$ if the midjoints are not colinear.
% If the midjoints are not colinear, then we can use $\bv q=\bv m_i$ for any $i=1,2,3$, and $\bv N=(\bv m_2-\bv m_1)\times(\bv m_3-\bv m_1)$.
% 
\end{proposition}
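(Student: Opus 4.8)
The plan is to unwind the definitions: by the standing assumption, the configuration has midplane symmetry, so by definition there is a plane $P$ such that reflecting the base half-joint across $P$ yields the distal half-joint. Since $P$ is normal to $\bv N$ and passes through $\bv q$, the reflection map is $R_{[\bv N,\bv q]}$ as in the Notation. The base half-joint consists of the base plate (with hinge locations $\bv B_i$) together with the links; reflecting it sends each base hinge $\bv B_i$ to a point which, by the corresponding-midjoints convention ($\bv B_i \leftrightarrow \bv m_i \leftrightarrow \bv D_i$), must be exactly $\bv D_i$. This gives $\bv D_i = R_{[\bv N,\bv q]}(\bv B_i)$ directly.

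For the distal center, recall $\bv D_c$ is defined as the point on the distal plane occupying the same position relative to $\bv D_1,\bv D_2,\bv D_3$ as $\bv B_c$ does relative to $\bv B_1,\bv B_2,\bv B_3$. Since a reflection is an isometry (in fact affine), it preserves barycentric/affine combinations, so the point corresponding to $\bv B_c$ under the reflection is $R_{[\bv N,\bv q]}(\bv B_c)$. With the coordinate convention $\bv B_c = \bv 0$, this is $R_{[\bv N,\bv q]}(\bv 0)$, which Remark \ref{rk:reflection} simplifies to $2\frac{\bv N\cdot\bv q}{\norm{\bv N}^2}\bv N$. For the distal normal, note the base normal is $\uv N_B = \uv z$ in our coordinate convention (base plane $=$ XY plane, hinges counterclockwise relative to $\uv z$); but after reflection, the induced orientation on the distal plate flips, so the vector normal to the distal plane with distal hinges counterclockwise is the reflection of $-\uv z$ (the base backward-normal), not of $\uv z$. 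Since a normal vector transforms under a reflection as a direction (basepoint-independent), this is $R_{[\bv N,\bv 0]}(-\uv z)$, using the linear form of the reflection from Remark \ref{rk:reflection}.

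The main subtlety — and the step I would be most careful about — is the orientation bookkeeping in the $\uv N_D$ formula: one must check that the counterclockwise hinge ordering relative to $\uv N_B = \uv z$ becomes a clockwise ordering relative to the reflected $\uv z$, hence the sign flip to $-\uv z$ before reflecting. This is exactly why $\uv N_D = R_{[\bv N,\bv 0]}(-\uv z)$ rather than $R_{[\bv N,\bv 0]}(\uv z)$, and it is the one place where a purely formal reflection argument could slip. Everything else is a direct substitution into the Notation/Remark formulas.

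Finally, for the last sentence: if the midjoints are not colinear, then they determine $P$ uniquely (as observed just before the Assumption), so any normal vector and any point of $P$ are valid choices; $\bv N = (\bv m_2-\bv m_1)\times(\bv m_3-\bv m_1)$ is nonzero and orthogonal to all three $\bv m_j - \bv m_i$, hence normal to the plane through the midjoints, and each $\bv m_i$ lies on $P$, so $\bv q = \bv m_i$ works for any $i$. Substituting these into the three displayed formulas completes the proof.
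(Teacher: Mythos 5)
Your proposal is correct and follows essentially the same route as the paper: reflect the base hinges and base center via $R_{[\bv N,\bv q]}$, handle $\uv N_D$ by transforming the normal direction under the linear part $R_{[\bv N,\bv 0]}$ with a sign flip forced by orientation reversal, and use non-colinearity to justify the explicit choice of $\bv N$ and $\bv q$. The only cosmetic difference is that you push $-\uv z$ forward to the distal plane, whereas the paper pulls $\uv N_D$ back to the base plane and then inverts the reflection; the key orientation-reversal observation you flag as the subtle step is exactly the one the paper relies on.
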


\begin{proof}
From definitions and midplane symmetry, we have that $R_P(\bv B_i) = \bv D_i$ and $R_P(\bv B_c)=R_P(\bv 0) = \bv D_c$.  This is what is meant by the phrase ``the point corresponding to $\bv B_c$ in the distal plane'' in the definition of $\bv D_c$.

Next, note $\uv N_D=(\uv N_D+\bv D_c)-\bv D_c$ is normal to the distal plane. Thus, by reflection symmetry, $R_P(\uv N_D+\bv D_c)-R_P(\bv D_c)$ is normal to the base plane. By the above, we know that $R_P(\bv D_c)=\bv B_c=\bv 0$, hence $R_P(\uv N_D+\bv D_c)$ is normal to the base plane. By Remark \ref{rk:reflection} we have that
\[
R_P(\bv D_c+\uv N_D)
= R_P(\bv D_c)+R_{[\bv N,\bv 0]}(\uv N_D)
= R_{[\bv N,\bv 0]}(\uv N_D).
\]
This must be $\pm\uv z$ since the above is normal to the base plane and is also a unit vector (since $\uv N_D$ was). Since reflections reverse orientations, the hinge numbering is now clockwise (instead of counter-clockwise) with respect to this vector. Thus, it is the case that $R_{[\bv N,\bv 0]}(\uv N_D)=-\uv N_B=-\uv z$. Applying $R_{[\bv N,\bv 0]}$ to both sides yields $\uv N_D=R_{[\bv N,\bv 0]}(-\uv z)$.

Finally, the midplane must contain all the midjoints by definition and if the midjoints are not colinear, the midplane is normal $\bv N=(\bv m_2-\bv m_1)\times(\bv m_3-\bv m_1)$.
\end{proof}

When the midjoints are not colinear, the midplane is uniquely determined and it is the one described in the above result. In this case, we can convert midjoints to a midplane and then get the distal features by reflection of base features. However, if the midjoints are colinear and the midplane is unknown, then the full configuration cannot be determined. This is a forward kinematic singularity which can be understood without appeal to traditional singularity analysis techniques involving Jacobians. %Indeed, the forward kinematic Jacobian would be undefined for colinear midjoints. 
Some examples of these singularities are discussed in \cite{KCThesis}. Isolating and identifying the importance of the midplane symmetry assumption substantially clarifies why midjoint colinearity results in a breakdown of the forward kinematics described in \cite{Canfield1998}.%Isolating and identifying the importance of the midplane symmetry assumption substantially clarifies the singularities which result when the midplane is unknown. %the area of the triangle described by the three midjoints is reduced to zero.

Because the distal plate location is entirely determined by midplane reflection, there is no rotation component that might ``twist'' the distal plate relative to the base plate. The only possible exception is the theoretical and unrealistic case in which the distal center and base center are in the same place. Thus, any rotation required for assets on the distal plate needs to be attached and run independently to the Canfield joint.

\subsection{Pointing the Distal Plate}

Once we have the distal plate constructed, we should determine what objects can be pointed at by the configuration. To do so, we define some notions that will prove useful to us in the inverse kinematics construction.

\begin{definition}
The \textbf{forward-pointing ray} is the ray starting at the distal center which points along the distal normal and the \textbf{backward-pointing ray} has the same initial point but opposite direction. We say a configuration of a Canfield joint ``\textbf{points at $\bv{T}$}'' if the forward-pointing ray contains $\bv T$ and ``\textbf{backward-points at $\bv T$}'' if the backward-pointing ray contains $\bv T$. The \textbf{pointing axis} is the union of these two rays.
\end{definition}

% \begin{lemma}\label{lem:distal_normal}\RSnote{Free}
% Given Canfield joint parameters $\ell$ and $b$ and base angles $\theta_1$, $\theta_2$, and $\theta_3$ symmetric over the midplane $M$, a normal vector to the distal plane $\uv N_D$ is given by:
% \[ \uv N_D = R_M(-\uv z ) - \bv D_c
% \]
% \end{lemma}

% \begin{proof}
% Notice that in our construction, $-\uv z - \bv B_c = -\uv z$ is a normal vector for the base plane.  Then, since the distal plane is the reflection of the base plane over the midplane $M$, it follows that a normal vector to the distal plane is given by $\uv N_D = R_M(-\uv z - \bv B_c)$.  Linearity and the fact that $\bv D_c = R_M(\bv B_c)$ then yields the result.
% \end{proof}

% , we say a midplane ``points at $\bv x$'' 
% if the forward-pointing ray from the distal plane contains $\bv T$. 

If we have a configuration of the Canfield joint with midplane $P$, the pointing axis can be given by
\[ 
\bv r (t) = \bv D_c + t \uv N_D, \quad t\in \RR
\]

The forward-pointing ray is the pointing axis with $t\geq 0$, and the backward-pointing ray is the pointing axis with $t\leq 0$.  

% \begin{proof}
% By the construction from Lemma \ref{lem:distal_normal}, we have a normal vector that points away from the base plate.  The ray of points at which the distal center points will be a line emanating from point $\bv D_c$ and in direction $\uv N_D$.  This yields the first formula in the theorem.

% For the second formula, we can use Lemma \ref{lem:distal_normal} to get:
% \begin{align*}
%     \bv D_c + t\uv N_D &= \bv D_c + t (R_M(-\uv z)-\bv D_c)\\
%     &= (1-t)\bv D_c + tR_M(-\uv z)
% \end{align*} 
% \end{proof}

The following lemma gives us a means of connecting the forward-pointing ray to a construction based on the base half-joint which will prove useful in computing inverse kinematics.

\begin{notation}
Let $Z$ denote the $z$-axis, $Z_{\geq 0}$ the non-negative side of the $z$-axis, and $Z_{\leq 0}$ the non-positive side of the $z$-axis in our chosen reference frame.  
\end{notation}

% \CBnote{V3}
\begin{lemma}[Pointing Lemma]\label{lem:pointing}
A configuration of a Canfield joint with midplane $P$ will point at $\bv T$ iff\,\footnote{The word iff is a standard contraction of ``if and only if''.}$R_P(\bv T)\in Z_{\leq 0}$. This holds for backward-pointing by replacing $Z_{\leq 0}$ with $Z_{\geq 0}$.
\end{lemma}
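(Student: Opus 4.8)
The key is that reflection over the midplane $P$ interchanges the distal and base half-joints, and in particular sends the forward-pointing ray to a ray through $\bv 0$ in the base frame. By Proposition \ref{prop:forwardkinematics} we have $\bv D_c = R_P(\bv 0)$ and $\uv N_D = R_{[\bv N, \bv 0]}(-\uv z)$. So the forward-pointing ray is $\bv r(t) = \bv D_c + t\uv N_D = R_P(\bv 0) + t R_{[\bv N,\bv 0]}(-\uv z)$, and by the additivity identity in Remark \ref{rk:reflection}, $\bv r(t) = R_P(\bv 0 + t(-\uv z)) = R_P(-t\uv z)$ for $t \in \RR$. Hence the forward-pointing ray is exactly $R_P(Z_{\leq 0})$, i.e. the image under $R_P$ of the non-positive $z$-axis (as $t$ ranges over $t \geq 0$, $-t\uv z$ ranges over $Z_{\leq 0}$).

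First I would establish the identity $\bv r(t) = R_P(-t\uv z)$ as above, being careful that the additivity in Remark \ref{rk:reflection} is exactly what is needed: $R_{[\bv N,\bv q]}(\bv a + \bv b) = R_{[\bv N,\bv q]}(\bv a) + R_{[\bv N,\bv 0]}(\bv b)$ with $\bv a = \bv 0$, $\bv b = -t\uv z$. Then the configuration points at $\bv T$ iff $\bv T$ lies on the forward-pointing ray, i.e. iff $\bv T = R_P(-t\uv z)$ for some $t \geq 0$. Since $R_P$ is an involution (it is its own inverse), this is equivalent to $R_P(\bv T) = -t\uv z$ for some $t \geq 0$, which is precisely the statement $R_P(\bv T) \in Z_{\leq 0}$. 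Applying $R_P$ once to both sides of the membership and using that $R_P$ is a bijection handles the biconditional cleanly.

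The backward-pointing case is identical with the sign of $t$ flipped: the backward-pointing ray is $\bv r(t)$ for $t \leq 0$, which equals $R_P(-t\uv z)$ with $-t \geq 0$, so it is $R_P(Z_{\geq 0})$, giving the condition $R_P(\bv T) \in Z_{\geq 0}$.

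I do not expect a serious obstacle here — the lemma is essentially a repackaging of Proposition \ref{prop:forwardkinematics} via the involutivity of $R_P$. The one point requiring care is making sure the additivity identity from Remark \ref{rk:reflection} is applied with the correct base point (so that the translational part of the reflection is counted exactly once, coming from $R_P(\bv 0) = \bv D_c$, and the linear part acts on $-t\uv z$), and noting explicitly that $R_P \circ R_P = \mathrm{id}$ so that $\bv T$ is on the ray iff $R_P(\bv T)$ is on the preimage ray $Z_{\leq 0}$.
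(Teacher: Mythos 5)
Your proposal is correct and follows essentially the same route as the paper: the paper's proof also rests on the observation that the forward-pointing ray is the midplane-reflection of $Z_{\leq 0}$ (and concludes via the involutivity of $R_P$), though it asserts this directly from midplane symmetry where you verify it explicitly using Proposition \ref{prop:forwardkinematics} and the additivity identity of Remark \ref{rk:reflection}. Your extra computation is a sound and slightly more rigorous filling-in of the step the paper leaves to ``definitions and midplane symmetry.''
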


\begin{proof}
Follows from definitions and midplane symmetry. The configuration points at $\bv{T}$ if and only if $\bv{T}$ is contained in its forward-pointing ray. By midpane symmetry, the midplane-reflection of $Z_{\leq 0}$ is the forward-pointing ray (and vice versa). It follows that the midplane reflection $R_P(\bv{T})$ lies in $Z_{\leq 0}$ if and only if $\bv T$ lies on the forward-pointing ray. 

Analogously, midplane symmetry ensures the backward-pointing ray is the midplane reflection of $Z_{\geq 0}$ (and vice versa). Thus, by following an analogous proof we arrive at the Pointing Lemma with `point at' replaced with `backward-point at' and $Z_{\leq 0}$ replaced with $Z_{\geq 0}$.
\end{proof}

\section{Inverse Kinematics}\label{sec:invkin}
Next we handle the inverse kinematics; a means of converting where we want the Canfield joint to point into base angles that achieve the pointing goal. In the context of a 2DoF gimbal, this pointing goal is typically expressed in terms of azimuth and elevation; in some cases, it may be expressed as a target point in XYZ space. In the case of the Canfield joint, however, constraining only the azimuth and elevation  of the distal normal will result in an infinite number of solutions, as shown by the example illustrated in Figure \ref{fig:trivialcase}. Thus, if we want finitely many solutions, we will need appropriate constraints, some of which we explore here and in Section \ref{sec:constrained}.
% constraining by only two degrees of freedom will result in an infinite number of solutions, as shown by the trivial example illustrated in Figure \ref{fig:trivialcase}. 

\begin{figure}[ht]
\centering
\includegraphics[width=.5\textwidth]{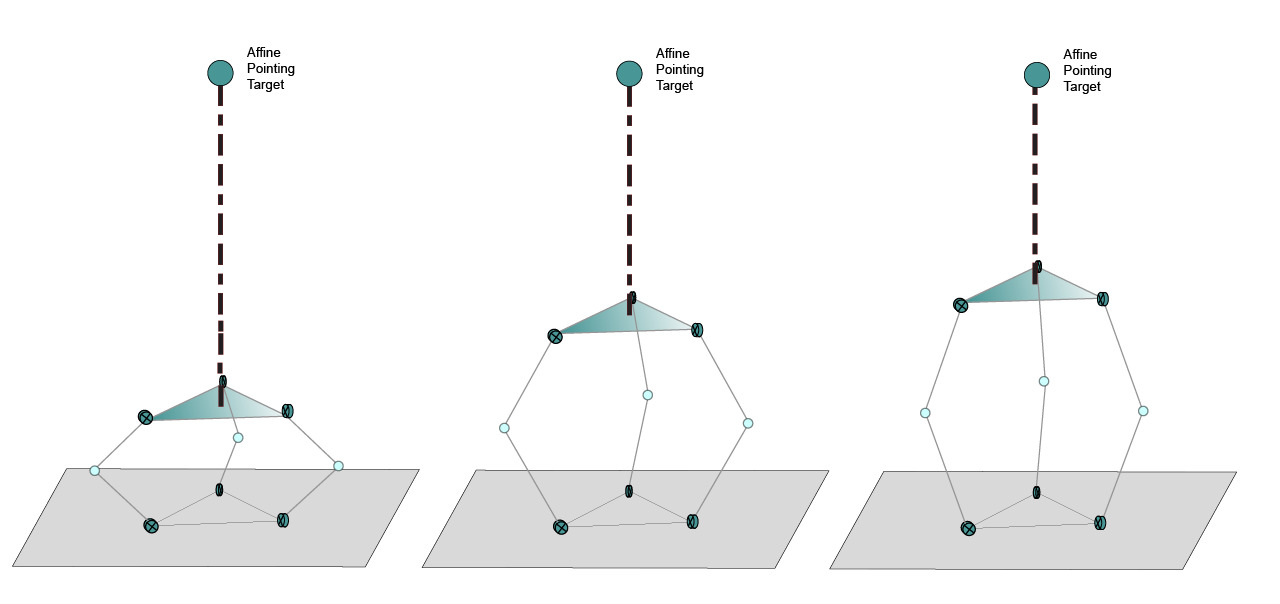}
\caption{Case of multiple configurations with the same pointing result.}
% \CBnote{Make text bigger if necessary}
\label{fig:trivialcase}
\end{figure}

% In order to restrict this set of solutions, we can constrain one degree of freedom. 
How we choose those constraints will change how the inverse kinematic solution is found.  There are three main cases for inverse kinematics, shown in Figure~\ref{fig:inversekinematicsforms}: 

% \begin{figure}
%   \centering
%   \subfigure[]{\includegraphics[width=0.5\textwidth]{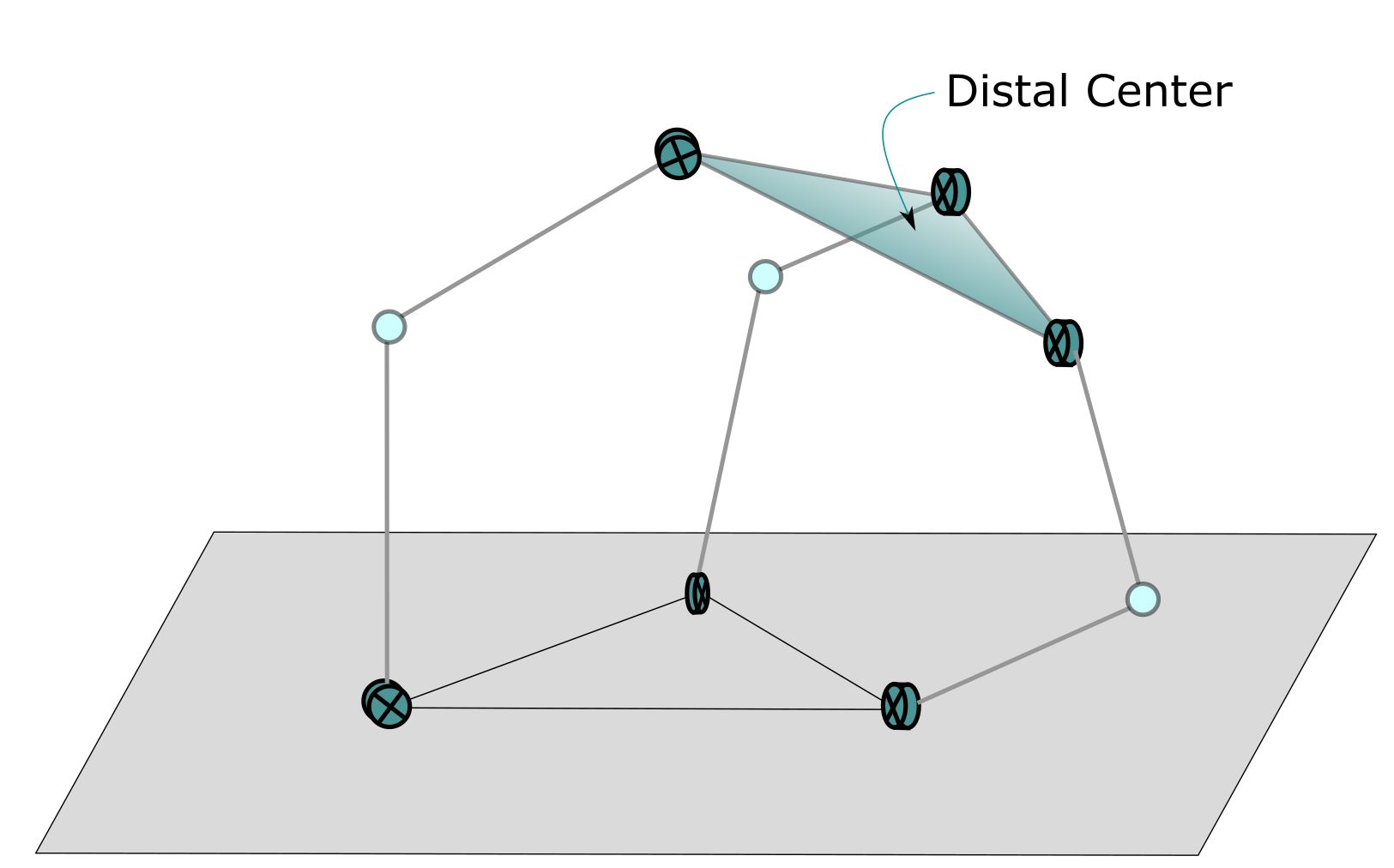}\label{fig:distalcenterpointing}}
%   \subfigure[]{\includegraphics[width=0.5\textwidth]{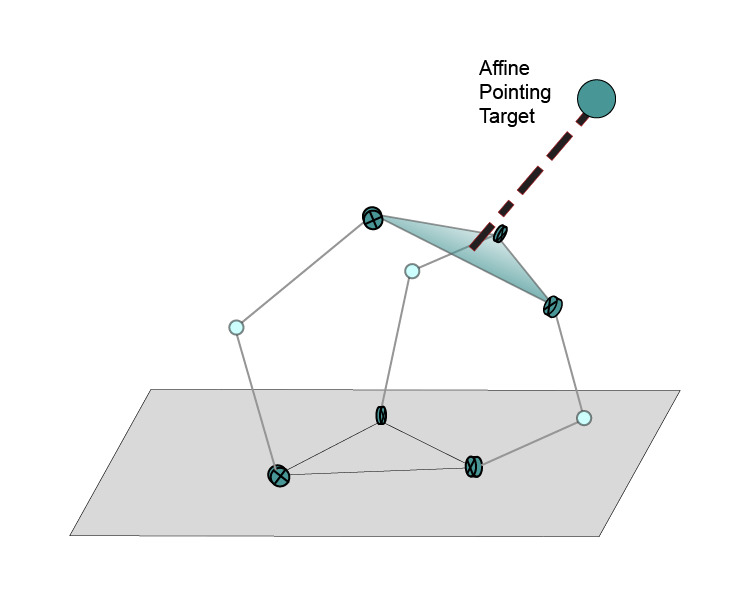}\label{fig:affinepointing}}
%   \subfigure[]{\includegraphics[width=0.5\textwidth]{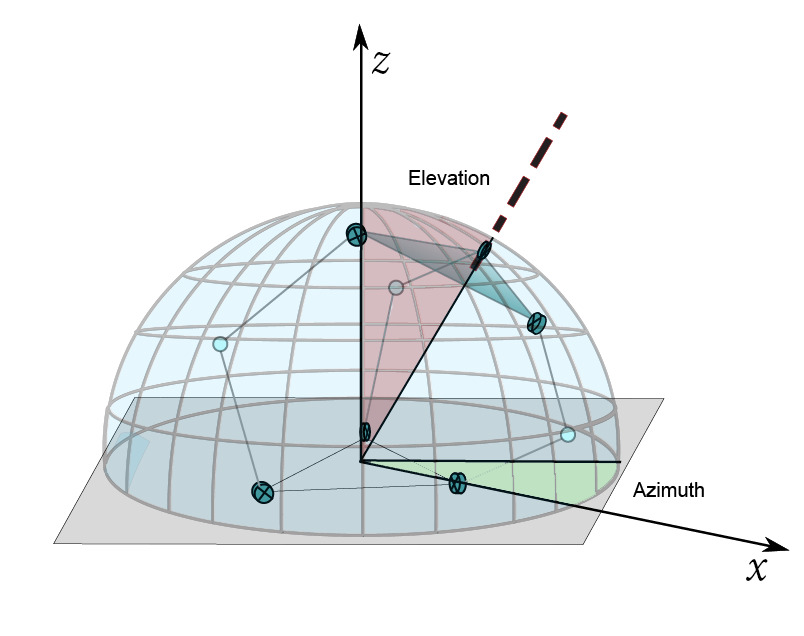}\label{fig:azelcanfield}}
%   \caption{Illustrations of inverse kinematics forms. \RSnote{Add subfigure captions.}}
%   \label{fig:inversekinematicsforms}
% \end{figure}

\begin{figure}
\centering
\subfloat[Inverse kinematics may be calculated for a given distal center location...]{\includegraphics[width=0.5\textwidth]{Figures/distalcenterpointing.jpg}\label{fig:distalcenterpointing}}\hfill
\subfloat[...or a point in 3-space that the configuration should point towards... ] {\includegraphics[width=0.5\textwidth]{Figures/affinepointing.jpg}\label{fig:affinepointing}}\hfill
\subfloat[...or a direction in which the distal normal vector should point. NB: The positioning of the depicted configuration does not actually correspond to the distal normal depicted (see Theorem \ref{thm:distalfield}).] {\includegraphics[width=0.5\textwidth]{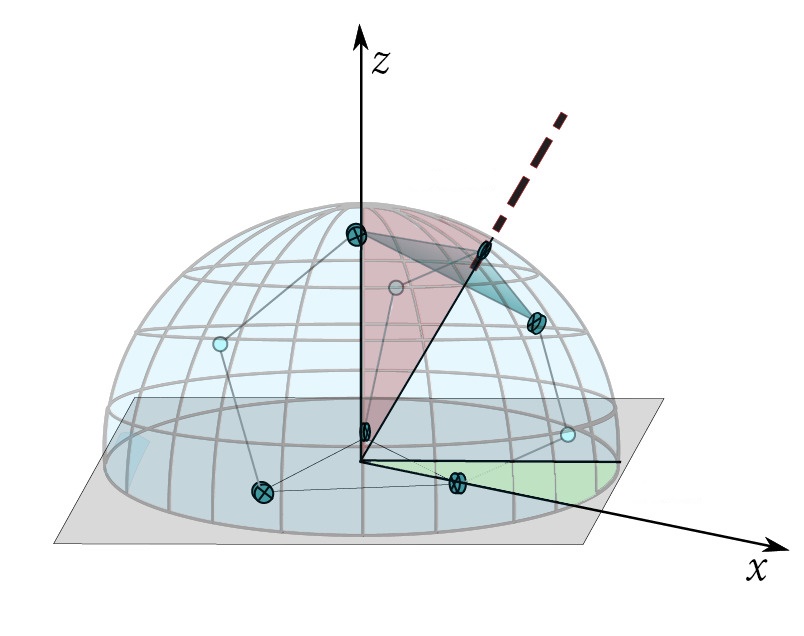}\label{fig:azelcanfield}}\hfill
\caption{Illustrations of inverse kinematics forms.}  \label{fig:inversekinematicsforms}
\end{figure}

\begin{enumerate}
    \item \textbf{Distal Center (3DoF):} A position for the distal center is desired in XYZ space, without regard to orientation of the distal normal vector. This case is straightforward to derive. It may be desirable for a situation with a passively gimbaled end effector, such as in additive manufacturing. This is illustrated in Figure \ref{fig:distalcenterpointing}.
    % \item \textbf{Plunge Distance:} A normal vector for the distal plate is desired, and the plunge distance is restricted to a particular plunge sphere.
    \item \textbf{Affine pointing (3DoF):} The end effector is directed to point at a desired point in XYZ space. This form may be desirable in laser machining, or for directing an instrument to point at a relatively near-field object. This is illustrated in Figure \ref{fig:affinepointing}.
    \item \textbf{Az/El Pointing (2DoF):} The Canfield joint is required to point in a certain direction, i.e. the distal normal has a prescribed orientation (often given in terms of azimuth and elevation, hence the name). This may be used in solar tracking \cite{ntrs1}, or for a survey telescope. This form is illustrated in Figure \ref{fig:azelcanfield}. This provides a good approximation to affine pointing when targets are sufficiently far away.
\end{enumerate}

Each of these three cases ultimately relies upon the inverse kinematic solution to a half-joint. Cases 2) and 3) have analogously defined backward-pointing versions as well. Since we are assuming midplane symmetry, we will be able to determine the solutions to these problems using simple geometric approaches.\footnote{These geometric techniques are sometimes reminiscent of compass and straightedge constructions. As a result, we found them to be readily implementable in related software such as \textit{Geogebra}.}

\tikzstyle{rect} = [rectangle, rounded corners, minimum width=3cm, minimum height=1cm, text width=4cm, text centered, draw=black]
\tikzstyle{rect2} = [rectangle, rounded corners, minimum width=3cm, minimum height=1cm, text width=3cm, text centered, draw=black]
\tikzstyle{arrow} = [thick,->,>=stealth]

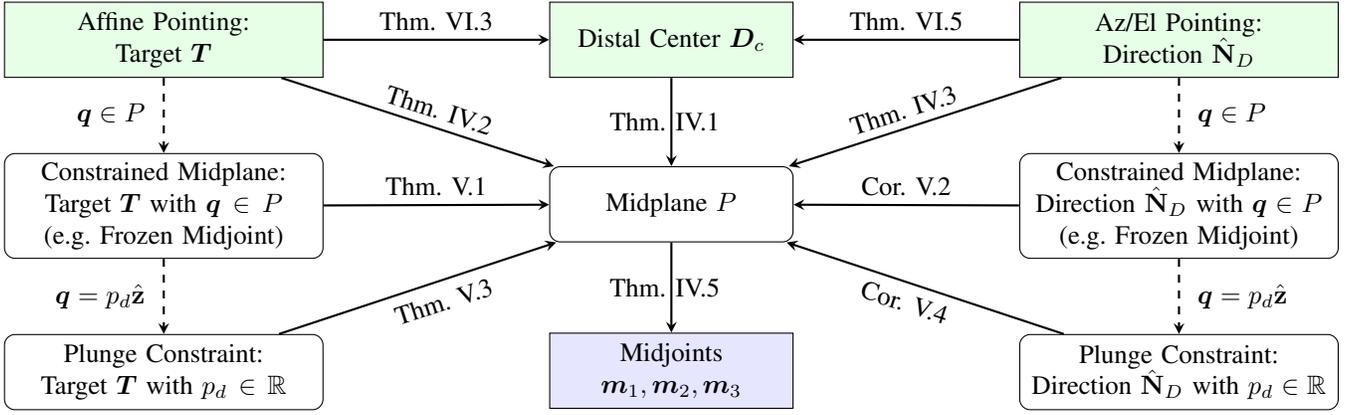
\begin{figure*}[ht]
\centering
\begin{tikzpicture}[node distance=1cm and 3cm]
\node (DC) [rect2, sharp corners, fill=green!10] {Distal Center $\bv D_c$};
\node (MP) [rect2, below=of DC, yshift=-0.175cm] {Midplane $P$};
\node (MJ) [rect2, sharp corners, fill=blue!10, below=of MP, yshift=-0.2cm] {Midjoints \\$\bv m_1,\bv m_2,\bv m_3$};

\node (AP) [rect, sharp corners, fill=green!10, left=of DC] {Affine Pointing:\\ Target $\bv T$};
\node (AP_q) [rect, below=of AP] {Constrained Midplane:\\ Target $\bv T$ with $\bv q\in P$\\(e.g. Frozen Midjoint)};
\node (AP_p) [rect, below=of AP_q] {Plunge Constraint:\\ Target $\bv T$ with $p_d\in\RR$};

\node (AE) [rect, sharp corners, fill=green!10, right=of DC] {Az/El Pointing:\\ Direction $\uv N_D$};
\node (AE_q) [rect, below=of AE] {Constrained Midplane:\\ Direction $\uv N_D$ with $\bv q\in P$\\(e.g. Frozen Midjoint)};
\node (AE_p) [rect, below=of AE_q] {Plunge Constraint:\\ Direction $\uv N_D$ with $p_d\in\RR$};

% % without constraint labels
% \draw [arrow,dashed] (AP) -- (AP_q);
% \draw [arrow,dashed] (AP_q) -- (AP_p);
% \draw [arrow,dashed] (AE) -- (AE_q);
% \draw [arrow,dashed] (AE_q) --(AE_p);

%with constraint labels
\draw [arrow,dashed] (AP) -- node[anchor=east] {$\bv q\in P\,\,$} (AP_q);
\draw [arrow,dashed] (AP_q) -- node[anchor=east] {$\bv q=p_d\uv z\,\,$} (AP_p);
\draw [arrow,dashed] (AE) -- node[anchor=west] {$\,\,\bv q\in P$}(AE_q);
\draw [arrow,dashed] (AE_q) -- node[anchor=west] {$\,\,\bv q=p_d\uv z$}(AE_p);

\draw [arrow] (DC) -- node {Thm. \ref{thm:DC2MP}\phantom{h}} (MP);
\draw [arrow] (MP) -- node {Thm. \ref{thm:MP2MJ}\phantom{h}} (MJ);

\draw [arrow] (AP) -- node[anchor=south] {Thm. \ref{thm:AffineLocusParametrized}} (DC);
\draw [arrow] (AP) -- node[anchor=south west, pos=0.34, yshift=-0.5cm] {\rotatebox{-18}{Thm. \ref{thm:AP2MP}}} (MP);
\draw [arrow] (AP_q) -- node[anchor=south] {Thm. \ref{thm:AP2MP-constrained}} (MP);
\draw [arrow] (AP_p) -- node[anchor=north west, pos=0.375, yshift=0.43cm] {\rotatebox{18}{Thm. \ref{thm:AP2MP-plunge}}} (MP);

\draw [arrow] (AE) -- node[anchor=south] {Thm. \ref{thm:AzElLocusParametrization}} (DC);
\draw [arrow] (AE) -- node[anchor=south east, pos=0.34, yshift=-0.5cm] {\rotatebox{18}{Thm. \ref{thm:AE2MP}}} (MP);
\draw [arrow] (AE_q) -- node[anchor=south] {Cor. \ref{cor:AE2MP-constrained}} (MP);
\draw [arrow] (AE_p) -- node[anchor=north east, pos=0.375, yshift=0.4cm] {\rotatebox{-18}{Cor. \ref{cor:AE2MP-plunge}}} (MP);

\end{tikzpicture}
\vspace*{2.5mm}
\caption{A visual guide to the inverse kinematic. Dashed arrows indicate the introduction of a restriction. Solid arrows indicate inverse kinematics in direction of the arrow. Arrow labels reference the results in this paper corresponding to inverse kinematics of the arrow. Green boxes on the top row are starting positions and the blue box in the center bottom is the terminal position.}
\label{fig:IKflowchart}
\end{figure*}

\subsection{Distal Center to Midplane}\label{sec:DistalCenterIK}

Our first inverse kinematic form begins with a known location for the distal center.  Once the location is known, there is a simple way to construct a midplane.

\begin{theorem}\label{thm:DC2MP}
A Canfield joint configuration has distal center $\bv D_c\neq\bv 0$ iff its midplane is given by
\[
\bv D_c \cdot (\bv x - \tfrac{1}{2}\bv D_c ) = 0.
\]
It has distal center $\bv D_c=\bv 0$ iff its midplane contains the origin.
\end{theorem}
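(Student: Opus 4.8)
The plan is to reduce the entire statement to a classical fact about reflections by invoking Proposition~\ref{prop:forwardkinematics}, which says that a configuration with midplane $P$ (normal to $\bv N$, containing $\bv q$) has distal center $\bv D_c = R_{[\bv N,\bv q]}(\bv 0)$; that is, the distal center is exactly the mirror image of the base center $\bv 0$ across the midplane. By the midplane symmetry assumption the midplane of a configuration is unique, so $R_P(\bv 0)$ is well-defined from the configuration (even when the midjoints are colinear). The fact I want to use is: the mirror plane of a reflection sending $\bv 0$ to a point $\bv v \neq \bv 0$ is the perpendicular bisector plane of the segment $[\bv 0,\bv v]$, and a reflection fixes $\bv 0$ precisely when $\bv 0$ lies on its mirror plane.

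For the case $\bv D_c \neq \bv 0$, I would first do the forward direction: given a configuration whose distal center is $\bv D_c$, its midplane $P$ satisfies $R_P(\bv 0) = \bv D_c$, and since $R_P$ is an isometric involution fixing $P$ pointwise, the midpoint $\tfrac{1}{2}\bv D_c$ of $\bv 0$ and $\bv D_c$ is fixed by $R_P$ and hence lies on $P$, while the displacement $\bv D_c - \bv 0 = \bv D_c$ is normal to $P$; as a plane is pinned down by one of its points together with a normal vector, $P$ must be $\{\bv x : \bv D_c\cdot(\bv x - \tfrac{1}{2}\bv D_c) = 0\}$. For the converse, I would assume the configuration's midplane is that plane and apply the explicit formula $R_{[\bv N,\bv q]}(\bv 0) = 2\tfrac{\bv N\cdot\bv q}{\norm{\bv N}^2}\bv N$ from Remark~\ref{rk:reflection} with $\bv N = \bv D_c$ and $\bv q = \tfrac{1}{2}\bv D_c$; this collapses to $\bv D_c$, so by Proposition~\ref{prop:forwardkinematics} the configuration's distal center is $\bv D_c$ (and in particular nonzero).

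For the case $\bv D_c = \bv 0$, both directions come at once: the distal center of the configuration is $R_P(\bv 0)$, and a reflection satisfies $R_P(\bv 0) = \bv 0$ if and only if $\bv 0$ lies on the mirror plane $P$, i.e.\ the midplane contains the origin.

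I do not expect a serious obstacle; the only things needing care are invoking midplane symmetry so that ``the midplane of the configuration'' is meaningful, and keeping the two regimes cleanly separated, since the perpendicular-bisector description degenerates exactly when $\bv D_c = \bv 0$ (the normal vector $\bv D_c$ vanishes), which is why that case is stated and handled on its own.
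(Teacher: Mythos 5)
Your proposal is correct and follows essentially the same route as the paper's own proof: both identify $\bv D_c=R_P(\bv 0)$ via the forward kinematics, characterize $P$ as the perpendicular bisector of the segment from $\bv 0$ to $\bv D_c$ (normal vector $\bv D_c$, containing the midpoint $\tfrac{1}{2}\bv D_c$), verify the converse by checking that this plane reflects $\bv 0$ to $\bv D_c$, and handle $\bv D_c=\bv 0$ by noting a reflection fixes $\bv 0$ iff $\bv 0$ lies on the mirror plane. The only cosmetic difference is that you justify the midpoint's membership in $P$ via the fixed-point property of the involution and compute the converse explicitly from Remark~\ref{rk:reflection}, where the paper uses an equal-distances argument and asserts the converse ``by construction.''
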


\begin{proof}
By midplane symmetry, $\bv D_c$ is the reflection of the origin ($\bv B_c=\bv 0$) over the midplane.  As such, $\bv D_c - \bv B_c = \bv D_c$ is a normal vector to the midplane when $\bv D_c\neq\bv 0$. In addition, the distance from the origin to the midplane has to be the same as the distance from the midplane to $\bv D_c$.  Thus, a point on the midplane is given by $\tfrac{1}{2}\bv D_c$. Thus, the midplane is $\bv D_c \cdot (\bv x - \tfrac{1}{2}\bv D_c)=0$. Conversely, this plane reflects $\bv B_c$ to $\bv D_c$ by construction.

If $\bv D_c=\bv 0$, then midplane reflection leaves $\bv 0$ unchanged. This can happen only if $\bv 0$ lies on the midplane. Conversely, planes containing $\bv 0$ produces $\bv D_c=\bv 0$ via forward kinematics. 
\end{proof}

\begin{remark}\label{rk:DCis0}
The case where $\bv D_c = \bv 0$ stands out as unusual since any plane passing through $\bv 0$ can be used to construct a configuration of a Canfield joint with that distal center. As such, any distal normal can be achieved when $\bv D_c = \bv 0$. Thus, for any $\bv T$ there is a plane through the origin that points at it.
\end{remark}

% \CBnote{Note sure if we need this part anymore}
% Using this result we can show that any given point can be realized as the distal center of a configuration of some Canfield joint.  Given the proposed distal center and origin we can use Theorem \ref{thm:DistalCentertoMidplane} to find the midplane associated to any viable configuration. Then, once we fix the shape of the base plate, reflecting the base plate over the midplane yields the distal plate.  Connecting each base hinge location to its reflection and then placing a universal joint at the midpoint yields the arms and midjoints. Together this yields a valid Canfield joint and configuration with $\bv D_c = \bv x$.  This means the every point is the distal center of some Canfield joint configuration. The following corollary codifies this result.

% \begin{corollary}
% Given a point $\bv x\in\RR^3$, there exists a Canfield joint with a configuration such that $\bv D_c = \bv x$. 
% \end{corollary}
% \begin{proof}
% We can actually do this using only standard Canfield joints. Let $\bv x$ be our arbitrary point. Let $\bv B_i$ form an equilateral triangle in the base plane centered at the origin. Let theorem
% \end{proof}

\subsection{Affine Pointing to Midplane}

Now we consider the second inverse kinematic form: pointing at a target $\bv T$. As we did for the distal center, we determine the necessary and sufficient criterion to construct midplanes compatible with the objective.  

\begin{theorem}\label{thm:AP2MP}
A configuration with midplane $P$ points at $\bv T$ iff $\bv T\in P\cap Z_{\leq 0}$ or there exists $\bv T\neq\bv K\in Z_{\leq 0}$ such that $P$ can be given by 
\[
    (\bv T-\bv K)\cdot(\bv x - \tfrac{\bv T+\bv K}{2})=0.
\]
This holds for backward-pointing by replacing $Z_{\leq 0}$ with $Z_{\geq 0}$.
\end{theorem}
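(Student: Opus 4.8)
The plan is to reduce the statement entirely to the Pointing Lemma (Lemma \ref{lem:pointing}), which says that a configuration with midplane $P$ points at $\bv T$ if and only if $R_P(\bv T)\in Z_{\leq 0}$, and then to recognize the plane appearing in the theorem as the perpendicular bisector plane of the segment joining $\bv T$ to its midplane reflection. So the whole argument is: ``$R_P(\bv T)\in Z_{\leq 0}$'' $\iff$ ``$\bv T\in P\cap Z_{\leq 0}$ or $P$ is the bisector plane of $\bv T\bv K$ for some $\bv K\neq\bv T$ in $Z_{\leq 0}$''.

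For the forward direction, suppose the configuration points at $\bv T$, and set $\bv K:=R_P(\bv T)$, so $\bv K\in Z_{\leq 0}$ by the Pointing Lemma. If $\bv K=\bv T$, then $\bv T$ is fixed by $R_P$, hence $\bv T\in P$; together with $\bv T=\bv K\in Z_{\leq 0}$ this gives $\bv T\in P\cap Z_{\leq 0}$. If instead $\bv K\neq\bv T$, then since $R_P$ is an involution we also have $R_P(\bv K)=\bv T$, so $R_P$ swaps $\bv T$ and $\bv K$; the unique plane with this property is the perpendicular bisector plane of the segment $\bv T\bv K$, which has normal $\bv T-\bv K$ and passes through the midpoint $\tfrac{\bv T+\bv K}{2}$, i.e.\ $(\bv T-\bv K)\cdot(\bv x-\tfrac{\bv T+\bv K}{2})=0$.

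Conversely, if $\bv T\in P\cap Z_{\leq 0}$, then $R_P(\bv T)=\bv T\in Z_{\leq 0}$ and the Pointing Lemma gives the conclusion. If there is $\bv K\in Z_{\leq 0}$ with $\bv K\neq\bv T$ and $P$ is given by the displayed equation, apply the reflection formula from the Notation with $\bv N=\bv T-\bv K$ and $\bv q=\tfrac{\bv T+\bv K}{2}$: then $\bv T-\bv q=\tfrac12\bv N$, so $R_P(\bv T)=\bv T-2\tfrac{\bv N\cdot(\bv T-\bv q)}{\norm{\bv N}^2}\bv N=\bv T-\bv N=\bv K\in Z_{\leq 0}$, and the Pointing Lemma again closes the case. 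The backward-pointing statement follows verbatim using the backward-pointing half of the Pointing Lemma, with $Z_{\leq 0}$ replaced by $Z_{\geq 0}$ throughout.

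I do not expect a genuine obstacle here; the only thing requiring care is the bookkeeping of the degenerate case $R_P(\bv T)=\bv T$. The hypothesis $\bv T\neq\bv K$ is exactly what makes $\bv T-\bv K$ a nonzero vector, so that the displayed equation defines a plane at all, and the first alternative $\bv T\in P\cap Z_{\leq 0}$ is precisely the ``$\bv K=\bv T$'' situation that the displayed equation cannot encode. It is also worth noting (to see that the two alternatives are exhaustive and mutually compatible with the claim) that when $\bv T\in P$ but $\bv T\notin Z_{\leq 0}$ neither alternative holds, consistent with $R_P(\bv T)=\bv T\notin Z_{\leq 0}$ meaning such a configuration does not point at $\bv T$.
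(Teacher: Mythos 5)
Your proposal is correct and follows essentially the same route as the paper's proof: both reduce to the Pointing Lemma, set $\bv K := R_P(\bv T)$, split on whether $\bv K = \bv T$, and identify $P$ as the perpendicular bisector plane of $\overline{\bv T\bv K}$ in the nondegenerate case. Your converse direction is slightly more explicit (you verify $R_P(\bv T)=\bv K$ by direct computation with the reflection formula, where the paper appeals to the bisector property), but the argument is the same.
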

\begin{proof}
Let $C$ be a configuration with midplane $P$. If $C$ points at $\bv T$, then by the Pointing Lemma \ref{lem:pointing} ${\bv K:=R_P(\bv T)\in Z_{\leq 0}}$. If $\bv T\neq\bv K$, then by symmetry, $P$ orthogonally bisects the line segment $\overline{\bv T\bv K}$. Thus, $P$ can be given by ${(\bv T-\bv K)\cdot(\bv x - \tfrac{\bv T+\bv K}{2})=0}$. If $\bv T=\bv K$ then $R_P(\bv T)=\bv T$ which means $\bv T\in P$. Thus $\bv T\in P\cap Z_{\leq 0}$.

Conversely, suppose $P$ satisfies the theorem's plane equation. Then it orthogonally bisects $\overline{\bv T\bv K}$ for $\bv K\in Z_{\leq 0}$ and so $R_P(\bv T)=\bv K\in Z_{\leq 0}$. Likewise, if $\bv T\in P\cap Z_{\leq 0}$ then $R_P(\bv T)=\bv T\in Z_{\leq 0}$. Thus, either way, $C$ points at $\bv T$ by the Pointing Lemma \ref{lem:pointing}.

The argument for backward-pointing is analogous and uses the backward-pointing version of the Pointing Lemma.
\end{proof}

When $\bv T\notin Z_{\leq 0}$, the above shows us there is a nice way to think of all the solution midplanes for pointing at $\bv T$: they are orthogonal bisectors of line segments  $\overline{\bv T\bv K}$ where $\bv K\in Z_{\leq 0}$.

\subsection{Az/El Pointing to Midplane}

Next we consider the third inverse kinematic form: pointing in direction $\uv N$, i.e. having distal normal $\uv N_D=\uv N$. As before, we solve for the midplane.

\begin{theorem}\label{thm:AE2MP}
A configuration has distal normal
\begin{enumerate}
    \item $\uv N_D\neq -\uv z$ iff its midplane is orthogonal to $\uv N_D+\uv z$,
    \item $\uv N_D= -\uv z$ iff its midplane is parallel to $Z$.
\end{enumerate}
Holds for backward-normals by replacing $\uv N_D$ with $\shortminus\uv N_D$.
\end{theorem}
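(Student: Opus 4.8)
The plan is to reduce the whole theorem to the single formula $\uv N_D = R_{[\bv N,\bv 0]}(-\uv z)$ supplied by Proposition~\ref{prop:forwardkinematics} (with $\bv N$ a normal vector to the midplane $P$) together with the explicit reflection formula of Remark~\ref{rk:reflection}. The geometric content is simply that $\uv N_D$ is the mirror image of $-\uv z$ across the linear plane with normal $\bv N$: their sum $\uv N_D+\uv z$ is the component that gets flipped, hence is a scalar multiple of $\bv N$, while their difference lies in the plane. Both halves of the theorem are restatements of this observation, so the proof is just careful bookkeeping plus a one-line computation.

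For part (1), fix any configuration, let $P$ be its midplane and $\bv N$ a normal to $P$. By Remark~\ref{rk:reflection},
$\uv N_D+\uv z = R_{[\bv N,\bv 0]}(-\uv z)+\uv z = 2\tfrac{\bv N\cdot\uv z}{\norm{\bv N}^2}\,\bv N$,
a scalar multiple of $\bv N$. If the distal normal satisfies $\uv N_D\neq-\uv z$, then $\uv N_D+\uv z\neq\bv 0$, the scalar is nonzero, and $\bv N\parallel\uv N_D+\uv z$; i.e.\ $P$ is orthogonal to $\uv N_D+\uv z$. Conversely, suppose $P$ is orthogonal to the nonzero vector $\uv N_D+\uv z$, so we may take $\bv N=\uv N_D+\uv z$ in the reflection formula (the choice of normal is immaterial, as any scalar cancels). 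Using $\norm{\uv N_D+\uv z}^2 = 2(1+\uv N_D\cdot\uv z)$ and $1+\uv N_D\cdot\uv z\neq 0$, a short simplification gives $R_{[\bv N,\bv 0]}(-\uv z)=\uv N_D$, so the configuration has distal normal $\uv N_D$.

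For part (2), $\uv N_D=-\uv z$ iff $R_{[\bv N,\bv 0]}(-\uv z)=-\uv z$ iff $\tfrac{\bv N\cdot\uv z}{\norm{\bv N}^2}\bv N=\bv 0$ iff $\bv N\cdot\uv z=0$ (as $\bv N\neq\bv 0$), which says precisely that the $z$-direction lies in $P$, i.e.\ that $P$ is parallel to $Z$. Finally, the backward-normal assertion follows by the stated substitution: by linearity of $R_{[\bv N,\bv 0]}$ the distal backward-normal equals $-\uv N_D=R_{[\bv N,\bv 0]}(\uv z)$, so rerunning the identical argument with $-\uv z$ replaced by $\uv z$ yields the claim with $\uv N_D$ replaced by $-\uv N_D$.

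I do not expect any genuine obstacle here; the only points requiring care are (i) invoking ``orthogonal to $\uv N_D+\uv z$'' only when that vector is nonzero, which is exactly the $\uv N_D\neq-\uv z$ hypothesis, and (ii) in the converse direction of part (1), checking that reflecting $-\uv z$ across the plane with normal $\uv N_D+\uv z$ returns the prescribed $\uv N_D$ itself and not merely something parallel to it — this is where the identity $\norm{\uv N_D+\uv z}^2=2(1+\uv N_D\cdot\uv z)$ is used.
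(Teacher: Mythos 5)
Your proof is correct and follows essentially the same route as the paper's: both reduce the theorem to the forward-kinematics identity $\uv N_D = R_{[\bv N,\bv 0]}(-\uv z)$ and the resulting computation $\uv N_D + \uv z = 2\tfrac{\bv N\cdot\uv z}{\norm{\bv N}^2}\bv N$, then read off the two cases according to whether $\bv N\cdot\uv z$ vanishes. Your explicit check in the converse of part (1) that taking $\bv N=\uv N_D+\uv z$ reproduces exactly $\uv N_D$ (via $\norm{\uv N_D+\uv z}^2=2(1+\uv N_D\cdot\uv z)$) is a small extra verification the paper leaves implicit, but it does not change the argument.
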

\begin{proof}
Consider a configuration with midplane $P$ given by $\uv N\cdot(\bv x-\bv q)=0$. By the forward kinematics, the distal normal is given by $\uv N_D = R_{[\uv N,\bv 0]}(-\uv z) = -\uv z+2N_z\uv N$. 
Thus 
\[
    \uv N_D + \uv z= 2N_z\uv N.
\]

Observe that $\uv N_D=-\uv z$ iff $N_z=0$ iff $P$ is parallel to $Z$. On the other hand, $\uv N_D\neq -\uv z$ iff $N_z\neq 0$ iff $\uv N_D+\uv z$ and $\uv N$ are non-zero scalar multiples. Thus $\uv N_D\neq -\uv z$ iff $\uv N_D+\uv z$ is normal to $P$.

The argument for the distal backward-normal is analogous and merely requires replacing $\uv N_D$ with $-\uv N_D$.
\end{proof}

\begin{remark}
We can write $\uv N_D=\sin\theta \guv\rho + \cos\theta \uv z$ where $\guv\rho\perp\uv z$ and $\theta$ is the smallest angle between $\uv N_D$ and $\uv z$ (the polar angle). Applying trigonometric identities we get that
\[
    \fr{\uv N_D+\uv z}{\|\uv N_D+\uv z\|} = \fr{\sin\theta \guv\rho+ (1+\cos\theta) \uv z}{\sqrt{2+2\cos\theta}}= \sin\!\paran{\tfrac{\theta}{2}}\guv\rho + \cos\!\paran{\tfrac{\theta}{2}}\uv z.
\]
Replacing the plane normal $\uv N_D +\uv z$ with $\sin\!\paran{\tfrac{\theta}{2}}\guv\rho + \cos\!\paran{\tfrac{\theta}{2}}\uv z$ in plane equations avoids numerical instability when ${\uv N_D+\uv z\approx \bv 0}$. Similarly useful, we have ${\fr{\uv N_D-\uv z}{\|\uv N_D-\uv z\|} =  \cos\!\paran{\tfrac{\theta}{2}}\guv\rho - \sin\!\paran{\tfrac{\theta}{2}}\uv z}$.
\end{remark}

\subsection{Midplane to Midjoints}\label{sec:MP2MJ}

As we saw in the past three subsections, we can find the associated midplanes for each of our three inverse kinematic forms. However, this does not form a complete solution to the inverse kinematic problem since we ultimately need to arrive at the base settings and not merely a midplane. In the following, we will see how to take a plane $P$ and compute all possible base settings that can realize the plane $P$ as a midplane. Or, if this is not possible, how to determine infeasibility. 

Without loss of generality, we can work with the position of the midjoints instead of base angles since midjoint positions can be readily computed from the base angles and vice versa via trigonometry. For this reason, the Cartesian product of the midcircles $M:=M_1\times M_2\times M_3$ captures all possible base half-joint settings. 

Given a plane $P$ which we wish to realize as a midplane, we will need to find where $P$ intersects each midcircle $M_i$. If $P$ is given by $\bv N \cdot (\bv x - \bv q) = 0$, this means solving for the midjoint $\bv m_i$ in the following system:
\begin{align*}
\bv N\cdot(\bv m_i - \bv q) &= 0, \\
\uv N_i\cdot(\bv m_i - \bv B_i) &= 0,  \\
\norm{\bv m_i - \bv B_i}^2 &= \ell_i^2.
\end{align*}

Because each $M_i$ is contained within its midcircle plane $P_i$, we have that either $P$ is parallel to $P_i$ or not.  The following results deals first with the more difficult non-parallel case and then the simpler parallel cases.

% \begin{lemma}\label{lem:Midplane2MidjointDiscriminant}
% Let $P$ be a plane given by $\bv N \cdot (\bv x - \bv q) = 0$ which is not parallel to the plane containing $M_i$ which is defined by $\uv N_i \cdot (\bv x - \bv B_i)=0$.  Let $\bv q_i$ be some fixed point in the intersection of these two planes and $\bv \delta_i = \bv q_i-\bv B_i$. Let $\bv\nu_i = \bv N \times \uv N_i$ and $\guv\nu_i=\tfrac{\bv\nu_i}{\norm{\bv\nu_i}}$.  Then, there exists a point in $P\cap M_i$ iff
% \[ 
%     \Delta_i := (\hatbv \nu_i \cdot \bv \delta_i )^2 - (\norm{\bv \delta_i}^2 - \ell_i^2) \geq 0. 
% \]
% Moreover, the $i$-th midjoint location(s) are given by 
% \[
%     \bv m_i = \bv q_i - (\hatbv\nu_i\cdot\bv \delta_i \pm \sqrt{\Delta_i}) \hatbv \nu_i. 
% \]
% \end{lemma}

\begin{lemma}\label{lem:Midplane2MidjointDiscriminant}
Let $P$ be a plane given by ${\bv N \cdot (\bv x - \bv q) = 0}$. Suppose $P$ is not parallel to the $i$-th midcircle plane $P_i$ given by $\uv N_i \cdot (\bv x - \bv B_i)=0$.  Let $\bv q_i$ be some fixed point in the intersection of these two planes and $\bv \delta_i = \bv q_i-\bv B_i$. Let ${\bv\nu_i = \bv N \times \uv N_i}$ and $\guv\nu_i=\tfrac{\bv\nu_i}{\norm{\bv\nu_i}}$.  Then, there exists a point in $P\cap M_i$ iff
\[ 
    \Delta_i := (\hatbv \nu_i \cdot \bv \delta_i )^2 - (\norm{\bv \delta_i}^2 - \ell_i^2) \geq 0. 
\]
Moreover, the $i$-th midjoint location(s) are given by 
\[
    \bv m_i = \bv q_i - (\hatbv\nu_i\cdot\bv \delta_i \pm \sqrt{\Delta_i}) \hatbv \nu_i. 
\]
If instead $P$ is parallel to $P_i$ but $P\neq P_i$, then $P\cap M_i=\varnothing$. If $P=P_i$, then $P\cap M_i=M_i$.
\end{lemma}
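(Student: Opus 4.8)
The plan is to reduce everything to a one-dimensional quadratic equation along the line $P\cap P_i$. Since $P$ is not parallel to $P_i$, the two planes meet in a line $\ell$; fix any point $\bv q_i\in\ell$ (as in the statement) and note that $\ell$ is directed by $\bv\nu_i=\bv N\times\uv N_i$, which is nonzero precisely because the planes are not parallel. So every point of $\ell$ is of the form $\bv q_i + s\,\guv\nu_i$ for $s\in\RR$. A point $\bv m_i$ lies in $P\cap M_i$ iff it lies on $\ell$ \emph{and} satisfies the sphere condition $\norm{\bv m_i-\bv B_i}^2=\ell_i^2$ (the two plane equations are exactly what cut out $\ell$). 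Substituting $\bv m_i=\bv q_i+s\,\guv\nu_i$ and writing $\bv\delta_i=\bv q_i-\bv B_i$, the sphere condition becomes $\norm{s\,\guv\nu_i+\bv\delta_i}^2=\ell_i^2$, i.e. $s^2 + 2s(\guv\nu_i\cdot\bv\delta_i) + \norm{\bv\delta_i}^2-\ell_i^2=0$, using $\norm{\guv\nu_i}=1$.

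Next I would solve this quadratic in $s$ by completing the square: $\bigl(s+\guv\nu_i\cdot\bv\delta_i\bigr)^2 = (\guv\nu_i\cdot\bv\delta_i)^2-(\norm{\bv\delta_i}^2-\ell_i^2)=\Delta_i$. Hence real solutions exist iff $\Delta_i\ge 0$, and when they do, $s=-(\guv\nu_i\cdot\bv\delta_i)\pm\sqrt{\Delta_i}$, giving $\bv m_i=\bv q_i-(\guv\nu_i\cdot\bv\delta_i\mp\sqrt{\Delta_i})\guv\nu_i$, which matches the claimed formula (the $\pm$ ambiguity is symmetric so the sign convention is immaterial). I should double-check that every point on $\ell$ satisfying the sphere equation actually lies on $M_i$: by definition $M_i$ is the intersection of $P_i$ with the sphere of radius $\ell_i$ about $\bv B_i$, and points of $\ell$ already lie in $P_i$, so this is automatic.

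For the parallel cases: if $P\parallel P_i$ but $P\neq P_i$, then $P\cap P_i=\varnothing$, and since $M_i\subseteq P_i$ we get $P\cap M_i\subseteq P\cap P_i=\varnothing$. If $P=P_i$, then $M_i\subseteq P_i=P$, so $P\cap M_i=M_i$. Both are immediate once one recalls $M_i\subseteq P_i$.

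I do not anticipate a serious obstacle here; the only things to be careful about are (i) justifying that the directional vector $\guv\nu_i$ is well-defined, i.e. $\bv\nu_i\neq\bv 0$, which follows from non-parallelism of the two planes (their normals $\bv N$ and $\uv N_i$ are not collinear), and (ii) confirming that the pair of linear equations defining $P$ and $P_i$ really does cut out the line through $\bv q_i$ in direction $\guv\nu_i$ and nothing more — i.e. that parametrizing by $s$ loses no solutions. Both are routine linear algebra. The mild bookkeeping point is matching the sign conventions in the final formula, but since the $\pm$ is symmetric this is cosmetic.
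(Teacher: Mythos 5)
Your proposal is correct and follows essentially the same route as the paper: parametrize the line $P\cap P_i$ as $\bv q_i + t\,\guv\nu_i$, substitute into $\norm{\bv x-\bv B_i}^2=\ell_i^2$ to get the quadratic $t^2+2(\guv\nu_i\cdot\bv\delta_i)t+(\norm{\bv\delta_i}^2-\ell_i^2)=0$, read off the discriminant and roots, and dispose of the parallel cases via $M_i\subseteq P_i$. Your extra care about the parametrization losing no solutions and the sign ambiguity in $\pm$ is sound but does not change the argument.
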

\begin{proof}
Suppose $P$ is not parallel to $P_i$ and let $L_i$ be their line of intersection. Since $\bv q_i\in L_i$ and $\hatbv \nu_i$ is a vector oriented parallel to $L_i$, we can parametrize $L_i$ with the formula
\[
    L_i(t) := \bv q_i + t\hatbv \nu_i, \quad t\in\mathbb{R}.
\]

If $P\cap M_i$ is nonempty, then there must be some $t$ for which $L_i(t) \in M_i$.  Since all points of $M_i$ are a distance $\ell_i$ from $B_i$ this means
\begin{align*}
    \ell_i^2 &= \norm{\bv q_i + t\hatbv \nu_i - \bv B_i}^2\\
             &= \norm{\bv q_i - \bv B_i + t\hatbv\nu_i }^2\\
             &= \norm{\bv q_i - \bv B_i}^2 + \norm{\hatbv \nu_i}^2 t^2 + 2t\, \hatbv \nu_i \cdot (\bv q_i - \bv B_i).
\end{align*}
Since $\bv\delta_i = \bv q_i - \bv B_i$ and $\norm{\hatbv\nu_i}=1$ we have
\[
    t^2 + 2(\hatbv \nu_i \cdot \bv \delta_i ) t + (\norm{\bv \delta_i}^2 - \ell_i^2) = 0.
\]
Using the quadratic formula to solve for $t$ yields
\[
    t = -(\hatbv{\nu_i}\cdot\bv\delta_i) \pm \sqrt{(\hatbv{\nu_i}\cdot\bv\delta_i)^2-(\norm{\bv\delta_i}^2-\ell_i^2)}.
\]

Thus, there will be solutions iff the discriminant $\Delta_i \geq 0$, and any member of $P\cap M_i$ will have the form 
\[ 
\bv q_i - (\hatbv\nu_i\cdot\bv \delta_i \pm \sqrt{\Delta_i}) \hatbv \nu_i. 
\]
Lastly, we consider what happens if $P$ is parallel to $P_i$. If ${P\neq P_i}$, then $P$ and $P_i$ are disjoint so $P\cap M_i=\varnothing$. If $P=P_i$, then $M_i$ is contained in $P$ so $P\cap M_i=M_i$.
\end{proof}

% % 2nd attempt at lemma
% \CBnote{define midplane of a half-joint to be a plane that passes through all three midjoints}
\begin{remark}\label{rk:qi}
We can always choose $\bv q_i$ to be the orthogonal projection of $\bv q$ onto $L_i$, in which case it can be given explicitly as $\bv q_i = \bv q - \tfrac{\uv N_i \cdot (\bv q - \bv B_i)}{\uv N_i\cdot(\bv N\times \hatbv \nu_i)}\bv N\times \hatbv\nu_i$. It is immediate that $\bv q_i\in L_i$ because $\bv N\cdot(\bv q_i - \bv q)=0$ and $\uv N_i\cdot(\bv q_i - \bv B_i)= 0$. Also $\hatbv\nu_i\cdot(\bv q_i - \bv q)=0$ and that finishes the argument that $\bv q_i$ is the orthogonal projection of $\bv q$.
\end{remark}

From here, intersecting the candidate midplane with each midcircle will yield the compatible midjoint locations that could support the plane.  Combining these intersections gives the full solution set.

\begin{theorem} \label{thm:MP2MJ}% Wouldn't label this one midplane to midjoint. Maybe Solution Cardinality
Given a Canfield joint and a plane $P$, the set of all midjoint positions $(\bv m_1,\bv m_2,\bv m_3)$ where each $\bv m_i \in P$ is given by
\[
     (M_1\cap P)\times(M_2\cap P)\times(M_3\cap P).
\]
Each factor of the Cartesian product can be computed via Lemma $\ref{lem:Midplane2MidjointDiscriminant}$ and the following are the only possibilities:
\begin{enumerate}
    \item $P$ intersects every midcircle and is not parallel to any midcircle plane. In this case the solution set is finite.
    \item $P$ intersects every midcircle but $P$ is identical to some midcircle plane.  In this case, the solution set is infinite.
    \item $P$ does not intersect one of the midcircles.  In this case, the solution set is empty.
\end{enumerate}
\end{theorem}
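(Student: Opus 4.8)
The plan is to reduce the theorem to repeated application of Lemma~\ref{lem:Midplane2MidjointDiscriminant}. First I would observe that the condition ``$\bv m_i\in P$ for each $i$'' together with $\bv m_i\in M_i$ is exactly the statement that $(\bv m_1,\bv m_2,\bv m_3)\in\prod_i(M_i\cap P)$; this is immediate from the definition of the Cartesian product and the fact that the three midjoint constraints are independent of one another (each $\bv m_i$ only involves $\bv B_i$, $\uv N_i$, $\ell_i$). So the set equality is essentially a tautology once one notes that the system displayed just before the theorem decouples into three separate systems, one per midcircle, each of which is solved by Lemma~\ref{lem:Midplane2MidjointDiscriminant}.

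The substantive part is the trichotomy, and here I would argue by cases on the relationship between $P$ and each midcircle plane $P_i$. For a single index $i$, Lemma~\ref{lem:Midplane2MidjointDiscriminant} already tells us $M_i\cap P$ is either (a) a set of at most two points (when $P\nparallel P_i$ and $\Delta_i\ge 0$), (b) empty (when $P\nparallel P_i$ and $\Delta_i<0$, or when $P\parallel P_i$ with $P\ne P_i$), or (c) all of $M_i$ (when $P=P_i$). Then I would take the product over $i=1,2,3$: if any factor is empty the whole product is empty, giving case 3); if no factor is empty, then either every factor is finite — and a product of finitely many finite sets is finite, giving case 1) — or at least one factor equals some $M_i$, which is infinite, forcing the product to be infinite (the other factors being nonempty), giving case 2). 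I should also note in case 2) that $P$ being equal to a midcircle plane $P_i$ does not obstruct $P$ from meeting the other midcircles, so the case can genuinely occur; and I would remark that cases 1) and 2) are distinguished precisely by whether $P$ coincides with one of the $P_i$.

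One small subtlety I would address explicitly: in case 1) the hypothesis says ``$P$ intersects every midcircle and is not parallel to any midcircle plane,'' whereas the lemma's finiteness conclusion needs $\Delta_i\ge 0$ for each $i$ — but ``$P$ intersects every midcircle'' is exactly the condition $M_i\cap P\ne\varnothing$, which by the lemma (in the non-parallel case) is equivalent to $\Delta_i\ge 0$, so these match up. Similarly the phrase ``$P$ is identical to some midcircle plane'' in case 2) is the lemma's $P=P_i$ branch, and ``$P$ does not intersect one of the midcircles'' in case 3) is $M_j\cap P=\varnothing$ for some $j$. So really the proof is just: decouple the system, apply the lemma three times, and take products, tracking finiteness/emptiness/infiniteness through the product.

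I do not anticipate a genuine obstacle here — the theorem is a packaging result. The only thing requiring care is making sure the three stated possibilities are genuinely exhaustive: given that the product is nonempty (else case 3), every factor is nonempty, and each nonempty factor is either finite or all of $M_i$; these two sub-possibilities are mutually exclusive since a midcircle is infinite, so the nonempty-product case splits cleanly into ``all factors finite'' (case 1, which forces $P\nparallel P_i$ for all $i$ by the lemma's dichotomy, matching the stated hypothesis) versus ``some factor is a full midcircle'' (case 2). That exhaustiveness check is the one place I would write carefully rather than wave at.
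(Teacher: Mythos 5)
Your proposal is correct and follows essentially the same route as the paper's proof: identify the solution set as the Cartesian product of the three intersections, apply Lemma \ref{lem:Midplane2MidjointDiscriminant} to each factor, and track emptiness, finiteness (at most two points per factor, hence at most eight tuples), and infiniteness through the product. Your extra care in verifying that the three cases are exhaustive is a welcome addition but does not change the substance of the argument.
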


\begin{proof}
By definition of intersection, the set of midjoint position tuples is given by $(M_1 \cap P) \times (M_2 \cap P) \times (M_3 \cap P)$. 

If we have $P\cap M_i\neq\varnothing$ and $P$ is not parallel to $P_i$, then $P\cap M_i$ has at most two points by Lemma \ref{lem:Midplane2MidjointDiscriminant}.  If this happens for all $i$ then the solution set is a Cartesian product of three sets of size at most 2, and is therefore a set of size at most 8.

If $P$ is identical to some $P_i$, then $P\cap M_i=M_i$ which is infinite. If $P$ intersects the other two midcircles, then the Cartesian product is a product of two nonempty sets and an infinite set. Thus, the solution set is infinite.

Lastly, if $P\cap M_i= \varnothing$ for some $i$, then we have a Cartesian product with one empty factor.  Thus, the solution set is empty.
\end{proof}

Even in case (1), there may be multiple solutions.  In fact, case (1) can yield up to 8 different midjoint settings by reflecting $\bv m_i$ across the line between $\bv B_i$ and $\bv D_i$ (Figure \ref{fig:multiplesolns}).  However, for both (1) and (2), physical restrictions can limit the number of solutions by making some points of $P\cap M_i$ unobtainable. It is up to the reader to incorporate their physical constraints appropriately and find a solution that suits their needs.
% The solutions to these equations are frequently non-unique.  As an example, see Figure \ref{fig:multiplesolns}.  In some cases, physical restrictions can help limit the number of solutions, but sometimes a choice will need to be made by the controller.  Since our inverse kinematic output is generally going to be a proposed midplane, we characterize the possible solution spaces with the following proposition.\RSnote{Move this discussion and figure to around Algorithm}

% \begin{figure}[ht]
% \centering
% \includegraphics[width=0.5\textwidth]{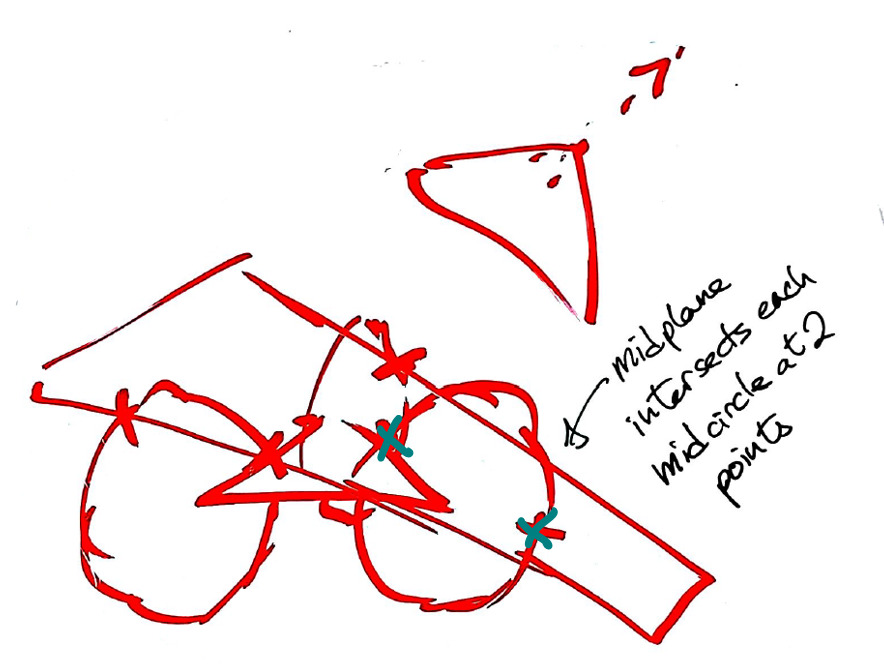}
% \caption{The midplane intersects each midcircle at two points. In this case, there are eight possible solutions to the inverse kinematics - although, in a practical application, several of these will likely not be viable. \KCnote{FIG: Replace notional figure midplaneintersect.jpg.}}
% \end{figure}

\begin{figure}[ht]
\centering
\includegraphics[width=.5\textwidth]{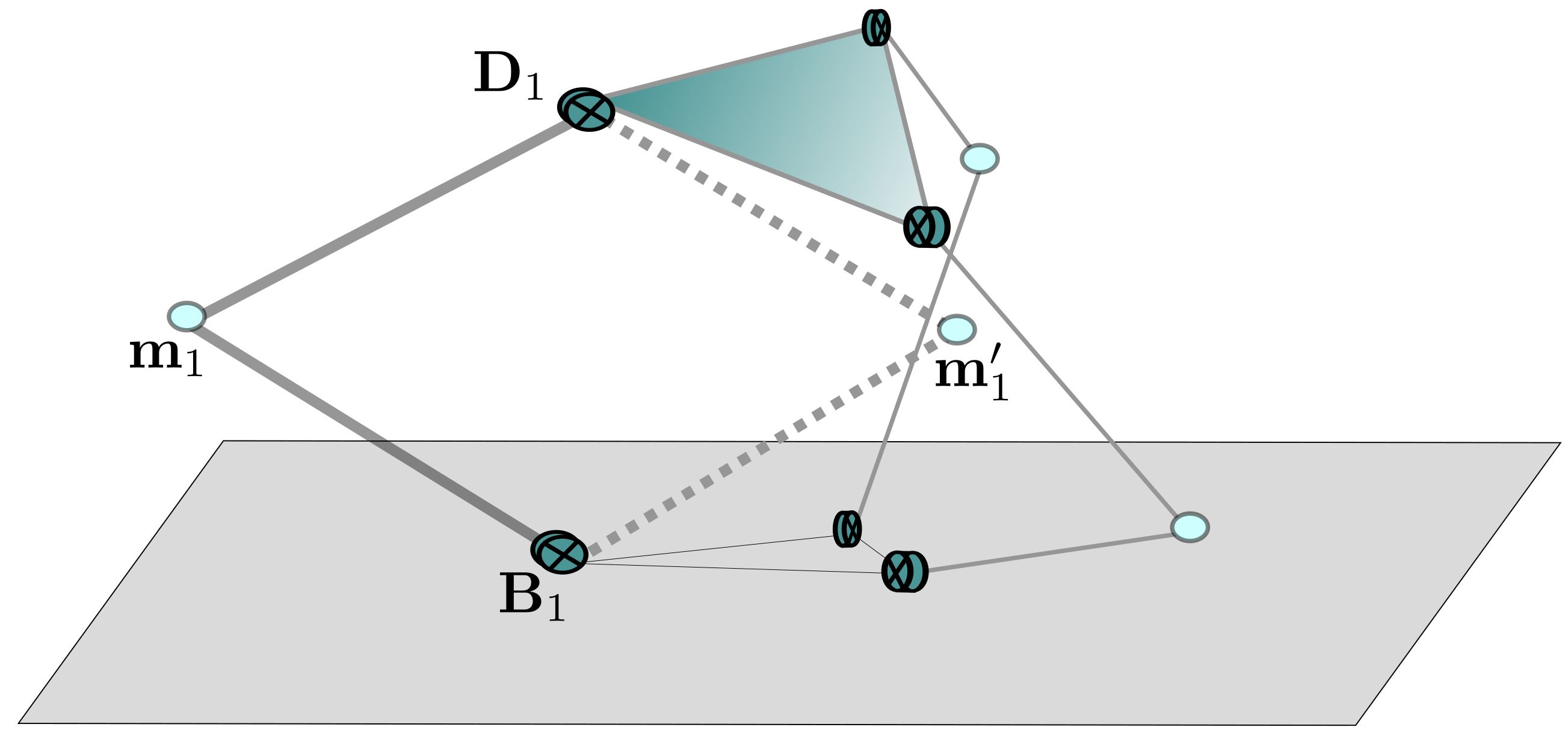}
\caption{\label{fig:multiplesolns}For a given midplane, there may be 1, 2, 4, or 8 inverse kinematic solutions. Here, multiple solutions for one leg are shown.}
\end{figure}

% \begin{proposition} \label{BaseAngleAmbiguity} \RSnote{Tri-Fix Thm to become Free}
% Given a Canfield joint with parameters $\ell$ and $b$ and a plane in XYZ space, exactly one of the following is true:
% \begin{enumerate}
%     \item There are fewer than three midcircles that intersect the plane.
%     \item The plane intersects each midcircle at a single point.
%     \item The plane intersects one midcircle at two points, and the other two at a single point each.
%     \item The plane intersects one midcircle at a single point, and the other two at two points each.
%     \item The plane intersects each midcircle at two points.
% \end{enumerate}
% \end{proposition}

% \RSnote{Q: Do we need to prove this proposition?}\CBnote{A: I think this and previous theorem can be merged into one theorem by using the explicit solution I posted in the group chat}

% For each of these cases, there are a different number of solutions for the equations in Theorem \ref{thm:MidplaneToBaseAngleEqn}.  Specifically: Case 1 yields no solutions, Case 2 yields a unique solution, Case 3 yields two solutions, Case 4 yields four solutions, and Case 5 yields eight solutions.  An implementation of these results should take care that the choice of planes made in any of the inverse kinematics sections do not fall into Case 1.

\section{Constrained Kinematics: \\Frozen Midjoints and Plunge Distance}\label{sec:constrained}
% \RSnote{Seized midjoint vs Seized base hinge distinction}
% In \cite{BobTopCJPaper}, a failure mode for a Canfield joint was discussed in which one of the midjoints is locked in place due to a motor at the base hinge seizing up. We refer to this as a frozen midjoint configuration.  \RSedit{That paper addresses the workspace in a frozen midjoint configuration and shows that i}{I}t is still possible to point in a wide range of directions even in this failure mode.  \RSedit{As such, the inverse kinematics within this failure mode can be treated significantly differently than in the classical case.  In essence, this is because t}{T}he Canfield joint loses a degree of freedom since one of the midjoints is frozen.  \RSedit{Seeing this play out mathematically is a significant challenge that we address here.  As a reminder, we are still assuming that our Canfield joint configurations have midplane symmetry, which allows us to work geometrically.}{}%\RSnote{WRITE:Compare Canfield approach to this approach. Note that restricting plunge distance unnecessarily (and severely) restricts workspace in this case}

In this section we consider the problem of pointing under additional constraints. Specifically, we will be constraining our Canfield joint by requiring that its midplanes always contain some prescribed fixed point $\bv q$. At first, this may seem artificial, but this arises naturally in two  situations. The first is the popular plunge distance approach for controlling the Canfield joint in a gimbal-like manner. The second is a failure mode in which one of the midjoints is locked in place relative to the base plate due to its base hinge seizing up.

The former is well-known and introduced in Canfield's thesis \cite{Canfield1998}. The failure mode is less studied but has been discussed in \cite{BobTopCJPaper} from a different perspective. It is particularly important to study in the context of deep space applications since repair may be impossible. One lesson from \cite{BobTopCJPaper} is that despite the loss of a degree of freedom, it is still possible to point in a wide range of directions within the failure mode. However, in \cite{BobTopCJPaper}, we also see that maintaining a fixed plunge distance within the frozen midjoint failure mode unnecessarily limits the range of motion. As such, inverse kinematic solutions that do not depend on plunge distance are valuable in controlling the Canfield joint stuck in this failure mode.

In this section we shall provide exactly such a solution to the inverse kinematics of the frozen midjoint. We will also generalize the notion of plunge distance and extend the known inverse kinematics to these cases. Furthermore, we will unify the frozen midjoint and generalized plunge distance kinematics into one framework.

\subsection{Constrained Midplane Inverse Kinematics}

The following two results solve the affine and Az/El inverse kinematic problems for Canfield joints with midplanes constrained to contain a prescribed point $\bv q$.

\begin{notation}
We use $S_{(\bv p,\bv q)}$ to denote the sphere centered at $\bv p $ that passes through point $\bv q$ (where sphere means a ball's surface).
\end{notation}

\begin{theorem}\label{thm:AP2MP-constrained}
Consider a Canfield joint constrained to have a fixed point $\bv q$ in its midplane. Let $C$ be a configuration of this constrained Canfield joint, $P$ its midplane, and $\bv T$ some target point. Then the following hold:
\begin{enumerate}[a)]
    %\item $\bv T\in Z_{\leq 0}$ and $P$ contains both $\bv T$ and $\bv q$ or,
    \item If $\bv T\notin Z_{\leq 0}$, then $C$ points at $\bv T$ iff there exists a point $\bv K\in S_{(\bv q,\bv T)}\cap Z_{\leq 0}$ so that $P$ can be given by
    \[
        (\bv T-\bv K)\cdot(\bv x -\bv q)=0.
    \]
    \item If $\bv T\in Z_{\leq 0}$, then $C$ points at $\bv T$ iff $P$ contains both $\bv q$ and $\bv T$, or $P$ is given by $\uv z\cdot(\bv x - \bv q)=0$ where $q_z\leq \tfrac{1}{2}T_z$.
\end{enumerate}
For backward-pointing, replace every `$\leq$' with `$\geq$'.
\end{theorem}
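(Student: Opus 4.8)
The plan is to reduce everything to the unconstrained Affine-Pointing result (Theorem~\ref{thm:AP2MP}) together with the Pointing Lemma~\ref{lem:pointing}, then layer on the single new ingredient: the constraint $\bv q\in P$. The starting observation is that a constrained configuration pointing at $\bv T$ is just an unconstrained one that happens to also have $\bv q$ in its midplane. So for part a), where $\bv T\notin Z_{\leq 0}$, I would invoke Theorem~\ref{thm:AP2MP}: $C$ points at $\bv T$ iff its midplane $P$ is the orthogonal bisector of $\overline{\bv T\bv K}$ for some $\bv K=R_P(\bv T)\in Z_{\leq 0}$ (the alternative $\bv T\in P\cap Z_{\leq 0}$ is vacuous here since $\bv T\notin Z_{\leq 0}$). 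Then I impose $\bv q\in P$. A point $\bv q$ lies on the orthogonal bisector of $\overline{\bv T\bv K}$ exactly when $\bv q$ is equidistant from $\bv T$ and $\bv K$, i.e. $\norm{\bv q-\bv T}=\norm{\bv q-\bv K}$, which says precisely $\bv K\in S_{(\bv q,\bv T)}$. Conversely, if $\bv K\in S_{(\bv q,\bv T)}\cap Z_{\leq 0}$ then the plane $(\bv T-\bv K)\cdot(\bv x-\bv q)=0$ passes through $\bv q$ (obvious) and is the orthogonal bisector of $\overline{\bv T\bv K}$ (since $\tfrac{\bv T+\bv K}{2}$ satisfies it, as $(\bv T-\bv K)\cdot(\tfrac{\bv T+\bv K}{2}-\bv q)=\tfrac12(\norm{\bv T}^2-\norm{\bv K}^2)-(\bv T-\bv K)\cdot\bv q=0$ using $\norm{\bv q-\bv T}^2=\norm{\bv q-\bv K}^2$), so by Theorem~\ref{thm:AP2MP} the corresponding configuration points at $\bv T$. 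One should note the degenerate sub-case $\bv K=\bv T$, which cannot occur here since $\bv T\notin Z_{\leq 0}$ but $\bv K\in Z_{\leq 0}$; this is what lets us always use the bisector form rather than the ``$\bv T\in P$'' branch of Theorem~\ref{thm:AP2MP}.

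For part b), where $\bv T\in Z_{\leq 0}$, the ``$\bv T\in P$'' branch of Theorem~\ref{thm:AP2MP} is now live, and $R_P(\bv T)$ may equal $\bv T$. I would split on whether $R_P(\bv T)=\bv T$. If $R_P(\bv T)=\bv T$ then $\bv T\in P$, and since also $\bv q\in P$ by constraint, we get the first alternative: $P$ contains both $\bv q$ and $\bv T$. If $R_P(\bv T)=\bv K\neq\bv T$, then as before $P$ is the orthogonal bisector of $\overline{\bv T\bv K}$ and $\bv q\in P$ forces $\bv K\in S_{(\bv q,\bv T)}$; but now the Pointing Lemma only requires $\bv K\in Z_{\leq 0}$, and since $\bv T\in Z_{\leq 0}$ we are reflecting a $z$-axis point to another $z$-axis point, so $\bv K$ and $\bv T$ both lie on the $z$-axis. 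A plane that reflects one $z$-axis point to a distinct $z$-axis point must be orthogonal to $\uv z$, hence $P$ is given by $\uv z\cdot(\bv x-\bv q)=0$; and $\bv K=R_P(\bv T)$ has $z$-coordinate $2q_z-T_z$, so $\bv K\in Z_{\leq 0}$ translates to $2q_z-T_z\le 0$, i.e. $q_z\le\tfrac12 T_z$. Conversely each of the two stated forms yields (via Theorem~\ref{thm:AP2MP} or a direct check with the Pointing Lemma) a configuration through $\bv q$ pointing at $\bv T$. Finally the backward-pointing statement follows by replacing $Z_{\leq 0}$ with $Z_{\geq 0}$ throughout and citing the backward-pointing halves of Lemma~\ref{lem:pointing} and Theorem~\ref{thm:AP2MP}, which flips every inequality.

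The main obstacle I anticipate is not any single computation — the equidistance-to-sphere translation and the ``$z$-to-$z$ reflection forces a horizontal plane'' argument are both short — but rather bookkeeping the degenerate cases cleanly so the iff is genuinely tight in both directions. Specifically: making sure that in part a) the excluded branch $\bv T\in P$ really is impossible (so the characterization isn't missing solutions), handling $\bv q=\bv D_c$-type or $\bv q$-on-axis coincidences that could make ``the plane $(\bv T-\bv K)\cdot(\bv x-\bv q)=0$'' degenerate when $\bv T=\bv K$, and in part b) confirming the two alternatives are exhaustive and that the boundary case $q_z=\tfrac12 T_z$ (giving $\bv K=\bv 0$, on the boundary of $Z_{\leq 0}$) is correctly included. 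I would organize the write-up as: (i) forward direction of a) via Theorem~\ref{thm:AP2MP} plus the equidistance computation; (ii) converse of a); (iii) the $R_P(\bv T)=\bv T$ vs.\ $\neq$ dichotomy for b) with the horizontal-plane deduction; (iv) converse of b); (v) one sentence reducing backward-pointing to the above.
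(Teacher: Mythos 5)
Your proposal is correct and follows essentially the same route as the paper: both reduce to the Pointing Lemma, set $\bv K=R_P(\bv T)$, use $\bv q\in P$ to get the equidistance condition $\norm{\bv q-\bv T}=\norm{\bv q-\bv K}$ (hence $\bv K\in S_{(\bv q,\bv T)}$), and in part b) compute $R_{[\uv z,\bv q]}(\bv T)=(2q_z-T_z)\uv z$ to obtain $q_z\le\tfrac12 T_z$. Your explicit detour through Theorem~\ref{thm:AP2MP} is only a packaging difference, since that theorem is itself a direct consequence of the Pointing Lemma.
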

\begin{proof}
Due to the midplane constraint, we can assume all midplanes have the form $\bv N \cdot (\bv x - \bv q)=0$ for some $\bv N\neq\bv 0$.
\begin{enumerate}[a)]
    \item Consider $\bv T\notin Z_{\leq 0}$. Assume there is a $\bv K \in S_{(\bv q,\bv T)}\cap Z_{\leq 0}$.  Note that $\bv T \neq \bv K$ since $\bv T \notin Z_{\leq 0}$ and that $\bv T \in S_{(\bv q,\bv T)}$. Let $P$ be the plane given by $(\bv T - \bv K)\cdot(\bv x - \bv q)=0$. Since $\bv T-\bv K\neq \bv 0$ is orthogonal to $P$ and both $\bv T,\bv K\in S_{(\bv q,\bv T)}$, it follows that $R_P(\bv T) = \bv K$.  Since $\bv K \in Z_{\leq 0}$, by the Pointing Lemma \ref{lem:pointing}, we have that $C$ points at $\bv T$.
    
    \quad Conversely, if $C$ points at $\bv T$, then by the Pointing Lemma \ref{lem:pointing}, $R_P(\bv T)\in Z_{\leq 0}$.  Let $\bv K=R_P(\bv T)$ and note that $\bv T-\bv K\neq \bv 0$ is orthogonal to $P$. Additionally, $\norm{\bv q-\bv T} = \norm{\bv q-\bv K}$ since $\bv q\in P$. Thus $\bv K\in S_{(\bv q,\bv T)}$ and so $\bv K\in S_{(\bv q, \bv T)}\cap Z_{\leq 0}$. Thus $P$ is given by ${(\bv T - \bv K)\cdot (\bv x - \bv q) =0}$.
    
    \item Consider $\bv T\in Z_{\leq 0}$. If $\bv q,\bv T\in P$, then $R_P(\bv T)=\bv T\in Z_{\leq 0}$. If $P$ is given by $\uv z\cdot(\bv x - \bv q)=0$ where $q_z\leq\tfrac{1}{2}T_z$, then since $\bv T=T_z\uv z$ we have
    \begin{align*}
        R_P(\bv T)=R_{[\uv z,\bv q]}(\bv T) = T_z\uv z - 2(T_z-q_z)\uv z = (2q_z-T_z)\uv z.
    \end{align*}
    Since $q_z\leq\tfrac{1}{2}T_z$, we have $R_P(\bv T)\in Z_{\leq 0}$. Thus, either way, $C$ points at $\bv T$ by the Pointing Lemma \ref{lem:pointing}.
    
    \quad Conversely, if $C$ points at $\bv T$, then $R_P(\bv T)\in Z_{\leq 0}$ by the Pointing Lemma \ref{lem:pointing}. If $\bv T\notin P$, then $\bv T\neq R_P(\bv T)$. Since both $\bv T, R_P(\bv T)\in Z_{\leq 0}$, it follows that $\uv z$ is normal to $P$. Thus, $P$ must be given by $\uv z\cdot(\bv x - \bv q)=0$. By our above calculation, $R_P(\bv T)\in Z_{\leq 0}$ only if $q_z\leq\tfrac{1}{2}T_z$. On the other hand, if $\bv T\in P$ then $P$ contains both $\bv q$ and $\bv T$, as desired.
\end{enumerate}
The backward-pointing version of the Pointing Lemma \ref{lem:pointing} yields the analogous result which has every `$\leq$' replaced with `$\geq$'.
\end{proof}

Note that if $\bv T\in Z_{\leq 0}$ and there exists a different point $\bv K\in S_{(\bv q,\bv T)}\cap Z_{\leq 0}$, then $q_z\leq\tfrac{1}{2}T_z$ and $\bv T-\bv K$ is a multiple of $\uv z$. Thus the procedure in a) correctly produces the plane ${\uv z\cdot(\bv x-\bv q)=0}$ from b). However, we can also choose ${\bv K=\bv T}$ as our point of intersection in $S_{(\bv q,\bv T)}\cap Z_{\leq 0}$. In this case ${\bv T-\bv K=0}$, and so the provided plane equation breaks down. However, from b) we know that there are other solutions, and that they can be any plane containing both $\bv T$ and $\bv q$. Thus when ${\bv T\in Z_{\leq 0}}$, we have \textit{infinitely} many new solutions not captured by the technique in a). For this reason we broke this theorem into two cases: $\bv T\notin Z_{\leq 0}$ and $\bv T\in Z_{\leq 0}$. The first case being the most general, useful, and simplest to implement.\footnote{Note that to construct the plane in Theorem \ref{thm:AP2MP-constrained}a) we intersect spheres and rays, and then construct a plane through $\bv q$ which is orthogonal to the line segment $\overline{\bv T\bv K}$. All these steps are very geometric and have a 3D compass and straight-edge flavor. Indeed, this result can be readily implemented in geometric software such as \textit{GeoGebra}.}

\begin{corollary}\label{cor:AE2MP-constrained}
Consider a configuration with midplane $P$ containing $\bv q$ and with distal normal $\uv N_D$. Then
\begin{enumerate}
    \item $\uv N_D\neq -\uv z$ iff $P$ can be given by $(\uv z + \uv N_D)\cdot(\bv x -\bv q)=0$.
    \item $\uv N_D=-\uv z$ iff $P$ is parallel to $Z$ and $\bv q\in P$.
\end{enumerate}
Holds for backward-normal by replacing $\uv N_D$ with $\shortminus\uv N_D$.
\end{corollary}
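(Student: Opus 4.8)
The plan is to obtain this as an immediate specialization of Theorem~\ref{thm:AE2MP} once we impose the standing constraint $\bv q\in P$. Recall that Theorem~\ref{thm:AE2MP} characterizes, for an \emph{unconstrained} configuration, the midplanes compatible with a prescribed distal normal: if $\uv N_D\neq-\uv z$ the midplane is exactly the set of planes orthogonal to the nonzero vector $\uv N_D+\uv z$, and if $\uv N_D=-\uv z$ it is exactly the set of planes parallel to $Z$. So the only thing to add is the elementary fact that a plane is uniquely pinned down once we fix both a normal direction and a point it must contain.

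For part (1), I would argue as follows. Suppose first the configuration has distal normal $\uv N_D\neq -\uv z$. By Theorem~\ref{thm:AE2MP}, its midplane $P$ is orthogonal to $\uv N_D+\uv z$; since we are also assuming $\bv q\in P$, and since $\uv N_D+\uv z\neq\bv 0$ (this is exactly the hypothesis $\uv N_D\neq-\uv z$, and it is why this case is separated out), the plane $P$ must be the unique plane with normal direction $\uv N_D+\uv z$ through $\bv q$, namely $(\uv z+\uv N_D)\cdot(\bv x-\bv q)=0$. Conversely, if $P$ is given by $(\uv z+\uv N_D)\cdot(\bv x-\bv q)=0$, then $P$ is orthogonal to $\uv N_D+\uv z\neq\bv 0$ and contains $\bv q$; by the forward kinematics (equivalently, by the ``if'' direction of Theorem~\ref{thm:AE2MP}, together with the fact that the midplane determines the distal normal uniquely via $\uv N_D+\uv z=2N_z\uv N$) the configuration has distal normal $\uv N_D$. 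Chaining these equivalences gives the stated iff.

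For part (2): by Theorem~\ref{thm:AE2MP}, a configuration has distal normal $\uv N_D=-\uv z$ iff its midplane is parallel to $Z$; combined with the standing hypothesis $\bv q\in P$, this is precisely ``$P$ is parallel to $Z$ and $\bv q\in P$.'' Finally, the backward-normal statement is obtained verbatim by replacing $\uv N_D$ with $-\uv N_D$ throughout, using the backward-normal version of Theorem~\ref{thm:AE2MP}.

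I do not anticipate a genuine obstacle here; the only point requiring a little care is the degenerate direction $\uv N_D+\uv z=\bv 0$, which is exactly the boundary between cases (1) and (2) and is handled by the case split, and the observation that a normal vector plus an incidence point determine a plane uniquely. The proof is therefore essentially a bookkeeping argument on top of Theorem~\ref{thm:AE2MP}.
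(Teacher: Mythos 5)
Your proposal is correct and follows the same route as the paper, which also derives the corollary as an immediate specialization of Theorem~\ref{thm:AE2MP} under the added constraint $\bv q\in P$. The extra detail you supply (that a nonzero normal direction together with an incidence point determines the plane uniquely) is exactly the bookkeeping the paper leaves implicit.
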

\begin{proof}
This follows immediately from Theorem \ref{thm:AE2MP} by adding the constraint that $\bv q\in P$.
\end{proof}

%%%%%%%%%%%%%%%%%%%%%%%%%%%%%%%%%%%%%%%%%%%%%%%%%%%%%%%%%%%%%%%%%%%%%%%%%%%%%%%%%%%%
\subsection{Pointing with a Frozen Midjoint}

% \begin{figure}[h]
% \begin{tikzpicture}
% \begin{scope}[xshift=0cm]
%     \node[anchor=center,inner sep=0] (image) at (0,0) {\includegraphics[width=0.5\textwidth]{Figures/DistalNormal_FrozenFeasibility_Lis2.pdf}};
% \end{scope}
% \end{tikzpicture}
% \caption{Standard Canfield joint with $b=\sqrt{3}$ and $\ell=2$. Top row: Feasible distal normal orientations (accounting for base half-joint geometry). Bottom row: Position of frozen midjoint. }
% \label{fig:feaspointing}
% \end{figure}

\begin{figure}[ht]
\includegraphics[trim={0.5cm 2.5cm 0 0},clip,width=0.5\textwidth]{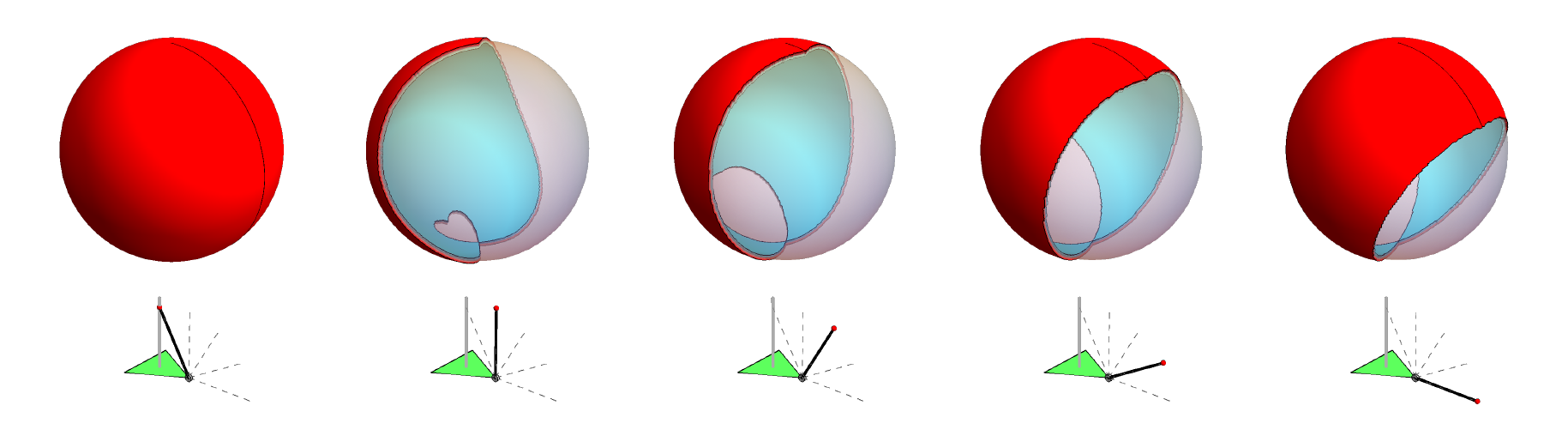}
\caption{A Standard Canfield joint with $b=\sqrt{3}$ and $\ell=2$. Each column assumes a frozen midjoint and is analyzed as in Section \ref{sec:feasibleDN}. Bottom row: Position of frozen midjoint. Top row: Feasible distal normal orientations due to the frozen midjoint position (allowing self-intersection of the mechanism). The feasible region is painted red on the exterior surface and cyan on the interior surface; the unreachable part of the unit sphere is translucent. }
\label{fig:feaspointing_frozen}
\end{figure}

Unless otherwise noted, we will assume that $\bv m_1$ is the frozen midjoint and denote it by $\bv{m}^*=\bv m_1$ since relabeling midjoints will yield analogous results. Freezing a midjoint at $\bv m^*$ implies only considering midplanes that contain $\bv m^*$. Thus, the kinematics in this case is precisely that which we described in the previous subsection but with $\bv q=\bv m^*$. 

As a result, Theorem \ref{thm:AP2MP-constrained} and Corollary \ref{cor:AE2MP-constrained} for $\bv q=\bv m^*$ solve pointing for this failure mode so long as we know the location of the frozen midjoint $\bv m^*$. In practice, this amounts to knowing the base angle for the frozen midjoint, which is something that the user knows or can likely infer. 

However, these theorems do not account for the geometry of the Canfield joint which can make some of the solutions geometrically impossible. To account for that, we would need to check whether the solution midplane intersects each midcircle. Using the result in Section \ref{sec:MP2MJ} we can answer these questions and in Section \ref{sec:feasibleDN} we show how to determine which distal normal orientations can be obtained (accounting for the the joint geometry). Figure \ref{fig:feaspointing_frozen} illustrates the pointing capabilities within 5 failure mode scenarios. Since distal normals are always unit vectors, they have a one-to-one correspondence to points on a unit sphere. For a given frozen midjoint configuration, shown in the second row, the set of distal normals which can be achieved are opaque on the sphere above it. Here we can see that there is still a large amount of accessible distal normal orientations despite having one midjoint locked in place. In Section \ref{sec:frozenplunge} we will explain the extraordinary pointing capability exhibited in the leftmost example, where the set of achievable distal normals encompasses the entire sphere.

When applying Theorem \ref{thm:AP2MP-constrained}a) we need to check that the $\bv K$ we find lies in $Z_{\leq 0}$. Since a sphere intersects a line in either 0, 1, or 2 locations, there can be up to 2 candidate midplanes in this case. It is straightforward to compute what $\bv K$ is; we simply need to find the points on the sphere ${\norm{\bv K-\bv m^*}^2=\norm{\bv m^*-\bv T}^2}$ with $K_x=K_y=0$ and $K_z\leq 0$. Keeping mind that $K_z\leq 0$, this simplifies to solving
\[
    {m^*_x}^2 + {m^*_y}^2 + (K_z-m^*_z)^2 = \norm{\bv m^*-\bv T}^2,
\]
and so by the quadratic equation we have
\[
    K_z = m_z^*\pm\sqrt{\norm{\bv m^*-\bv T}^2-({m^*_x}^2+{m^*_y}^2)}.
\]
We need to determine which (if any) of these two solutions yield $K_z\leq 0$. 

The above instantly tells us that there is no way to point at $\bv T$ whenever $\norm{\bv m^*-\bv T}^2<{m^*_x}^2+{m^*_y}^2$. Checking for when $K_z\leq 0$ introduces a further constraint. Thus the conditions for $\bv T$ that need to be satisfied for the existence of $\bv K\in Z_{\leq 0}$ are 
\begin{alignat*}{2}
    \norm{\bv m^*-\bv T}^2 &\geq {m^*_x}^2+{m^*_y}^2,\qquad && m^*_z\leq 0,\\
    \norm{\bv m^*-\bv T}^2 &\geq \norm{\bv m^*}^2,\qquad && m^*_z\geq 0
\end{alignat*}
% Thus if $m_z^*\leq 0$ for there to exist a pointing solution then $\bv T$ needs to satisfy
% \[
%     \norm{\bv m^*-\bv T}^2 \geq {m^*_x}^2+{m^*_y}^2
% \]
% and if $m_z^*\geq 0$ then $\bv T$ needs to satisfy
% \[
%     \norm{\bv m^*-\bv T}^2 \geq \norm{\bv m^*}^2
% \]
Finally, once a midplane is determined, we can use Theorem \ref{thm:MP2MJ} to find the midjoint positions for the non-frozen midjoints if they exist. For the Az/El problem, things are simpler and we just apply Corollary \ref{cor:AE2MP-constrained} and Theorem \ref{thm:MP2MJ} directly.

%%%%%%%%%%%%%%%%%%%%%%%%%%%%%%%%%%%%%%%%%%%%%%%%%%%%%%%%%%%%%%%%%%%%%%%%%%%%%%%%%%%%
\subsection{Pointing with a Fixed Plunge Distance}

\begin{figure}[ht]
\centering
\includegraphics[width=0.5\textwidth]{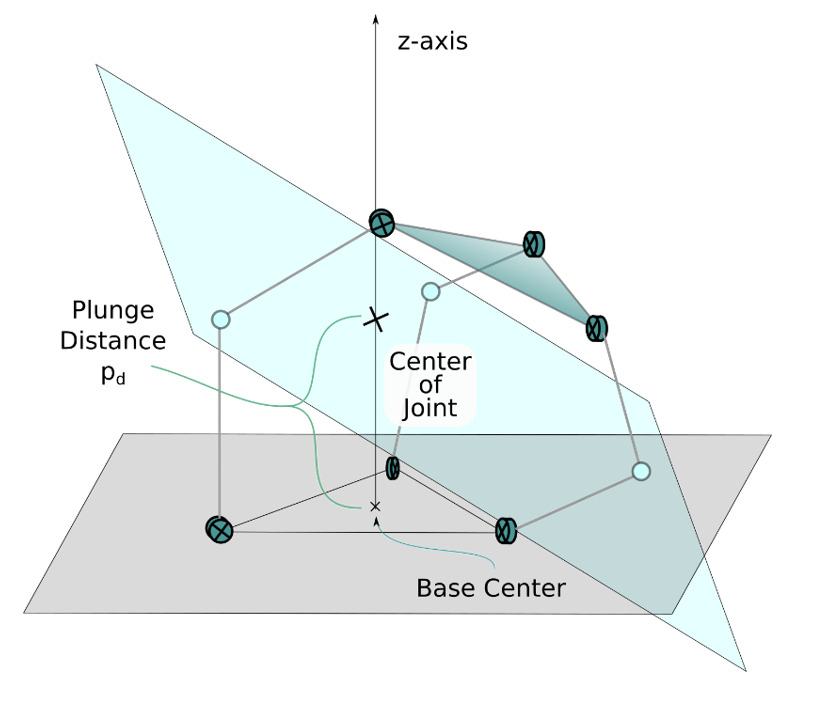}
\caption{Illustration of plunge distance.}
\end{figure}

In \cite{Canfield1998}, the idea of the plunge distance is used to constrain the kinematics so that a standard Canfield joint can be effectively controlled.  As such, we include this here for completeness and to provide our own more general definitions and results.
% We allow for zero, negative, and infinite plunge distance, which are meaningfully different from positive plunge distance in terms of pointing the Canfield joint and not considered in \cite{Canfield1998}. 

\begin{definition}
Given a Canfield joint configuration, we say it has \textbf{plunge distance} $p_d\in\RR$ if the midplane intersects the $Z$ axis at the point $\bv c_J=p_d\uv z$. If the midplane does not intersect the $Z$ axis, then we say $p_d = \infty$ (see Figure \ref{fig:vertmidplane}).\footnote{There is no distinction between $-\infty$ and $\infty$. We choose this because a midplane which doesn't intersect $Z$ in $\RR^3$ intersects $Z$ at exactly one ``point at infinity'' in the projective geometry sense.} If the Canfield joint is constrained to have plunge distance $p_d$, we call $\bv c_J$ the \textbf{center} of the Canfield joint. 
\end{definition}
\begin{remark}
It is possible for the midplane to completely contain the $Z$ axis. In these cases, the above definition implies every finite plunge distance is achieved simultaneously.
\end{remark}

This definition naturally extends the one found in \cite{Canfield1998} by working for arbitrary choice of base center and allowing for zero, negative, and infinite plunge distances. As a result, this captures every way the midplane can intersect the $Z$ axis. It is worth noting that plunge distance can potentially take on values outside the conventional range of ${0<p_d\leq \ell}$. This range has been embraced in past prototypes because it is well-suited to upward hemispherical pointing, but extending the plunge distance enables downward pointing with sufficiently long arms, as shown in Figure \ref{fig:feaspointing}.

Next we show how to obtain the midplane for the affine pointing problem under a finite plunge distance constraint. In principle, we could simply apply Theorem \ref{thm:AP2MP-constrained} with $\bv q=p_d\uv z$ and get our solution. However, knowing the midplane intersects $Z$ is a very special case and it will allow us to obtain very sharp result with extra effort. The following result handles every case except infinite plunge distance. 
%Both of these propositions are wrong.
% \begin{proposition}
% Given a Canfield joint with parameters $\ell$ and $b$ with midplane symmetry where the distal center is such that $D_z >0$, then $p_d < \infty$.
% \end{proposition}

% \RSnote{MATH:  Proof of Prop: Existence of Plunge Distance}

% \begin{proposition} \label{prop:PlungeDistBounds}
% Given a Canfield joint with parameters $\ell$ and $b$, the plunge distance $p_d$ satisfies:
% \[ 0 \leq p_d \leq 2\ell \]
% \end{proposition}
% \RSnote{MATH:  Prop: Proof of Plunge Bounds}

\begin{theorem}\label{thm:AP2MP-plunge}
Consider a Canfield joint constrained to have plunge distance $p_d\in\RR$. Let $C$ be a configuration of this constrained Canfield joint, $P$ its midplane, and $\bv T$ some target point. Then the following hold:
\begin{enumerate}[a)]
    \item If $\bv T\notin Z_{\leq 0}$ and $\norm{\bv T-\bv c_J}\geq\abs{p_d}$, then $C$ points at $\bv T$ iff $P$ is given by 
    \[
        \paran{\uv z + \tfrac{\bv T-\bv c_J}{\norm{\bv T-\bv c_J}}}\cdot(\bv x -\bv c_J)=0
    \]
    \item If $\bv T\notin Z_{\leq 0}$ and $\norm{\bv T-\bv c_J}\leq\abs{p_d}$, then $C$ points at $\bv T$ iff and $p_d\leq 0$ and $P$ is given by one of
    \[
        \paran{\uv z \pm \tfrac{\bv T-\bv c_J}{\norm{\bv T-\bv c_J}}}\cdot(\bv x -\bv c_J)=0
    \]
    \item If $\bv T\in Z_{\leq 0}$, then $C$ points at $\bv T$ iff $P$ contains both $\bv c_J$ and $\bv T$, or $P$ is given by $\uv z\cdot(\bv x - \bv c_J)=0$ where $p_d\leq\tfrac{1}{2}T_z$.
\end{enumerate}
These hold for backward-pointing by replacing `$\leq$' with `$\geq$' wherever $Z_{\leq 0}$, $p_d\leq 0$, and $p_d\leq \tfrac{1}{2}T_z$ occur.
\end{theorem}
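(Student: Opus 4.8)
The plan is to derive everything from Theorem~\ref{thm:AP2MP-constrained} specialized to $\bv q=\bv c_J=p_d\uv z$. This is legitimate: by definition, a configuration has plunge distance $p_d\in\RR$ exactly when its midplane contains $\bv c_J$, so constraining the plunge distance to $p_d$ is the same as constraining the midplane to contain the fixed point $\bv q=\bv c_J$. In particular, part~c) (the case $\bv T\in Z_{\leq 0}$) is immediate: it is Theorem~\ref{thm:AP2MP-constrained}b) with $\bv q=\bv c_J$, reading off $q_z=p_d$ in the condition $q_z\leq\tfrac12 T_z$.

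For parts~a) and~b) I would use Theorem~\ref{thm:AP2MP-constrained}a), which says (with $\bv q=\bv c_J$) that $C$ points at $\bv T\notin Z_{\leq 0}$ iff there is a point $\bv K\in S_{(\bv c_J,\bv T)}\cap Z_{\leq 0}$ with $P$ the plane through $\bv c_J$ orthogonal to $\bv T-\bv K$. The key simplification in the plunge-distance setting is geometric: the sphere $S_{(\bv c_J,\bv T)}$ has its center $\bv c_J$ on the $Z$-axis, so the $Z$-axis is a diameter of this sphere and meets it in exactly the two antipodal points $\bv K_\pm=(p_d\pm r)\uv z$, where $r:=\norm{\bv T-\bv c_J}$. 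Hence $C$ points at $\bv T$ iff at least one of $\bv K_+,\bv K_-$ lies in $Z_{\leq 0}$, and the feasible midplanes are exactly the planes through $\bv c_J$ orthogonal to the corresponding $\bv T-\bv K_\pm$.

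The case split is then just a sign computation on $r$ and $p_d$: $\bv K_-=(p_d-r)\uv z$ lies in $Z_{\leq 0}$ iff $p_d\leq r$, which holds automatically when $r\geq\abs{p_d}$ (the regime of~a)); while in the regime $r\leq\abs{p_d}$ of~b) one checks that $\bv K_-$ or $\bv K_+$ lands in $Z_{\leq 0}$ iff $p_d\leq 0$, in which case both do. To recover the stated plane equations, write $\bv K_-=\bv c_J-r\uv z$ and $\bv K_+=\bv c_J+r\uv z$, so that $\bv T-\bv K_-=r\paran{\uv z+\tfrac{\bv T-\bv c_J}{\norm{\bv T-\bv c_J}}}$ and $\bv T-\bv K_+=-\,r\paran{\uv z-\tfrac{\bv T-\bv c_J}{\norm{\bv T-\bv c_J}}}$; dividing out the nonzero scalar $r$ gives precisely the normal $\uv z+\tfrac{\bv T-\bv c_J}{\norm{\bv T-\bv c_J}}$ in~a) and the pair $\uv z\pm\tfrac{\bv T-\bv c_J}{\norm{\bv T-\bv c_J}}$ in~b). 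The backward-pointing statements come for free by running the identical argument with the backward-pointing Pointing Lemma~\ref{lem:pointing}, which replaces $Z_{\leq 0}$ by $Z_{\geq 0}$ and hence turns each ``$\leq$'' into ``$\geq$'' in the sign analysis.

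I expect the only delicate part to be the degenerate bookkeeping rather than any real computation: one must confirm that $r>0$ whenever a) or~b) actually produces a plane (it fails only if $\bv T=\bv c_J\in Z$, which is either excluded by hypothesis or falls in the no-solution branch), and that the candidate normal $\uv z\pm\tfrac{\bv T-\bv c_J}{\norm{\bv T-\bv c_J}}$ is nonzero exactly when a solution is claimed — it vanishes only when $\bv T$ lies on the $Z$-axis on the ``wrong'' side, which is precisely a no-solution configuration. The mild overlap at $r=\abs{p_d}$ between the hypotheses of~a) and~b) should be read so that the two conclusions are consistent there.
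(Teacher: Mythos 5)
Your proposal is correct and follows essentially the same route as the paper's own proof: reduce to Theorem~\ref{thm:AP2MP-constrained} with $\bv q=\bv c_J=p_d\uv z$, observe that $S_{(\bv c_J,\bv T)}\cap Z$ consists of the two points $(p_d\pm\norm{\bv T-\bv c_J})\uv z$, and then split cases by the sign of $p_d\pm\norm{\bv T-\bv c_J}$ to recover the stated plane normals. Your extra bookkeeping about $r>0$ and the nonvanishing of the candidate normals is a welcome refinement but does not change the argument.
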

\begin{proof}
By the plunge constraint, we can assume all midplanes have the form $\bv N \cdot (\bv x - \bv c_J)=0$ for some $\bv N\neq\bv 0$ and $\bv c_J=p_d\uv z$. 

First assume $\bv T\in Z_{\leq 0}$. Part c) follows immediately by using Theorem \ref{thm:AP2MP-constrained} for $\bv q=\bv c_J$ (and hence $q_z=p_d$).

Next assume $\bv T\notin Z_{\leq 0}$. Using Theorem \ref{thm:AP2MP-constrained} for $\bv q=\bv c_J$, we know that $C$ will point at $\bv T$ iff there is a $\bv K\in S_{(\bv c_J,\bv T)}\cap Z_{\leq 0}$ so that $P$ can be given by $(\bv T-\bv K)\cdot(\bv x-\bv c_J)=0$. Note that $\bv K\in S_{(\bv c_J,\bv T)}\cap Z$ iff $\bv K=K_z\uv z$ and
\[
\norm{\bv T-\bv c_J}^2=\norm{\bv K-\bv c_J}^2=(K_z-p_d)^2.
\]
Solving for $K_z$ via the quadratic equation means
\[
    K_z = p_d \pm \norm{\bv T-\bv c_J}.
\]
Thus, $C$ points at $\bv T\notin Z_{\leq 0}$ iff $P$ can be given by 
\[
    (\bv T-(p_d\pm\norm{\bv T-\bv c_J})\uv z)\cdot(\bv x-\bv c_J)=0,
\]
and $p_d\pm\norm{\bv T-\bv c_J}\leq 0$. Using $\bv c_J=p_d\uv z$ and dividing both sides $\norm{\bv T-\bv c_J}$ this plane equation can be rewritten as
\[
    \paran{\tfrac{\bv T- \bv c_J}{\norm{\bv T-\bv c_J}}\mp\uv z}\cdot(\bv x-\bv c_J)=0.
\]
Now we tackle a) and b):
\begin{enumerate}[a)]
    \item If we additionally assume $\norm{\bv T-\bv c_J}\geq\abs{p_d}$, then $-\norm{\bv T-\bv c_J}\leq p_d\leq \norm{\bv T-\bv c_J}$. From this we get $p_d-\norm{\bv T-\bv c_J}\leq 0$ and $0\leq p_d+\norm{\bv T-\bv c_J}$, meaning $K_z=p_d-\norm{\bv T-\bv c_J}$ is the only option. Thus, under this additional assumption, $C$ points at $\bv T$ iff $P$ is given by
    \[
        \paran{\tfrac{\bv T- \bv c_J}{\norm{\bv T-\bv c_J}}+\uv z}\cdot(\bv x-\bv c_J)=0.
    \] 
    \item If instead we additionally assume $\norm{\bv T-\bv c_J}\leq\abs{p_d}$, then both $0\leq \abs{p_d}\pm\norm{\bv T-\bv c_J}$. If $p_d\geq 0$ then $\abs{p_d}=p_d$, and then neither option yields $K_z\leq 0$. If $p_d\leq 0$, then ${\abs{p_d}=-p_d}$ and this inequality becomes ${0\leq -p_d\pm\norm{\bv T-\bv c_J}}$, i.e. $0\geq p_d\mp\norm{\bv T-\bv c_J}$. Thus if $p_d\leq 0$, both choices lead to $K_z\leq 0$. Thus, under this additional assumption, $C$ points at $\bv T$ iff $P$ is given by one of
    \[
        \paran{\tfrac{\bv T- \bv c_J}{\norm{\bv T-\bv c_J}}\pm\uv z}\cdot(\bv x-\bv c_J)=0.
    \]
\end{enumerate}
The backward-pointing version of the Pointing Lemma \ref{lem:pointing} yields the analogous result. This results in replacing `$\leq$' with `$\geq$' wherever $Z_{\leq 0}$, $p_d\leq 0$, and $p_d\leq \tfrac{1}{2}T_z$ occur.
\end{proof}

Obtaining the midplane for Az/El pointing with constrained plunge distance is a simple application of our earlier efforts.

\begin{corollary}\label{cor:AE2MP-plunge}
Consider a configuration with midplane $P$, plunge distance $p_d\in \RR$, and distal normal $\uv N_D$. Then
\begin{enumerate}
    \item $\uv N_D\neq-\uv z$ iff $P$ can be given by $(\uv z+\uv N_D)\cdot(\bv x-\bv c_J)=0$.
    \item $\uv N_D=-\uv z$ iff $P$ contains $Z$.
\end{enumerate}
Holds for backward-normal by replacing $\uv N_D$ with $\shortminus\uv N_D$.
\end{corollary}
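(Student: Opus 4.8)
The plan is to reduce the corollary to Corollary \ref{cor:AE2MP-constrained} by observing that a finite plunge distance forces the midplane to contain the Canfield joint center $\bv c_J = p_d\uv z$. First I would invoke the definition of plunge distance: a configuration with $p_d\in\RR$ has a midplane $P$ that meets the $Z$-axis, and either $P\cap Z=\{\bv c_J\}$ or $P\supseteq Z$; in both cases $\bv c_J\in P$. Hence the hypotheses of Corollary \ref{cor:AE2MP-constrained} are satisfied with $\bv q=\bv c_J$, and I can simply specialize that result.

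For part (1), Corollary \ref{cor:AE2MP-constrained}(1) with $\bv q=\bv c_J$ directly yields that $\uv N_D\neq-\uv z$ iff $P$ can be written as $(\uv z+\uv N_D)\cdot(\bv x-\bv c_J)=0$; no further work is needed, and the nondegeneracy of the normal $\uv z+\uv N_D$ corresponds precisely to the excluded case $\uv N_D=-\uv z$.

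For part (2), Corollary \ref{cor:AE2MP-constrained}(2) gives $\uv N_D=-\uv z$ iff $P$ is parallel to $Z$ and $\bv c_J\in P$. The one extra ingredient is the elementary fact that, because $\bv c_J\in Z$, a plane parallel to $Z$ that contains the point $\bv c_J$ must contain all of $Z$ (a plane parallel to a line and meeting it contains that line); conversely, any plane containing $Z$ is parallel to $Z$ and contains $\bv c_J$. So ``$P$ parallel to $Z$ and $\bv c_J\in P$'' collapses to ``$P$ contains $Z$'', which is the stated form. The backward-normal statement then follows verbatim by substituting $-\uv N_D$ for $\uv N_D$, exactly as in Theorem \ref{thm:AE2MP} and Corollary \ref{cor:AE2MP-constrained}.

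I do not anticipate a genuine obstacle here: the proof is a direct specialization of already-established results. The only point requiring care --- really bookkeeping rather than difficulty --- is checking that the hypothesis $p_d\in\RR$ rules out the $p_d=\infty$ case, so that $\bv c_J$ is well-defined and automatically lies on $P$; once that observation is recorded, everything else is immediate from Corollary \ref{cor:AE2MP-constrained}.
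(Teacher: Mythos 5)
Your proposal is correct and matches the paper's own proof: the paper likewise specializes Corollary \ref{cor:AE2MP-constrained} with $\bv q=\bv c_J=p_d\uv z$ and observes that the only planes parallel to $Z$ containing $p_d\uv z$ are those containing $Z$ itself. Your additional bookkeeping about $p_d\in\RR$ guaranteeing $\bv c_J\in P$ is a fair elaboration of the same argument.
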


\begin{proof}
Follows immediately from Corollary \ref{cor:AE2MP-constrained} by letting ${\bv q=\bv c_J=p_d\uv z}$ and observing that the only planes parallel to $Z$ containing $p_d\uv z$ are the planes containing $Z$ itself.
\end{proof}

Though the case of $p_d=\infty$ is ommitted in the above results, we can quickly deduce that in that case the distal normal can only be $-\uv z$. To see this, note that $p_d=\infty$ only if the midplane $P$ is parallel to $Z$. Hence by midplane symmetry, $\uv N_D=R_P(-\uv z)=-\uv z$. So only targets $\bv T$ directly below the base plane can be pointed at when $p_d=\infty$, in which case the midplane must given by $(\bv T-T_z\uv z)\cdot(\bv x-\tfrac{\bv T+T_z\uv z}{2})=0$. 

One subtlety with plunge distance is that even with midplane symmetry, a finite plunge distance or well-defined center is not guaranteed.  As an example of an undefined center, consider the configuration in Figure \ref{fig:vertmidplane}.  Here, the midplane does not intersect the $z$-axis, so the plunge distance is $\infty$ and there is no center.  Physical constructions of the Canfield joint, including the design of the original prototype \cite{ganino}, may restrict the range of motion of the base joints, and thereby the configuration space, such that configurations with vertical midplanes are impossible. However, such limitations will remove the ability to point straight down.

\begin{figure}[ht]
\centering
\includegraphics[width=0.5\textwidth]{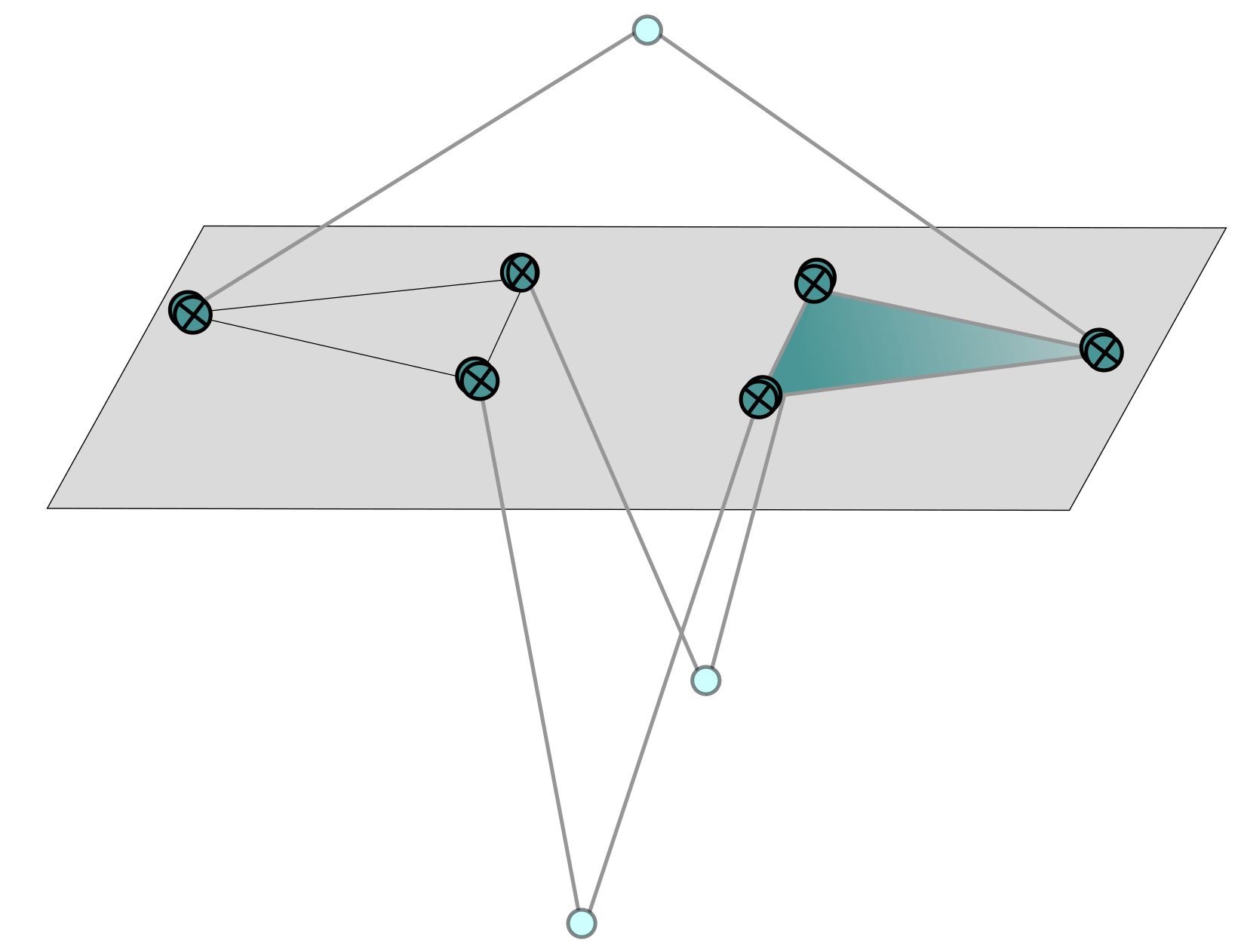}
\caption{A configuration with a vertical midplane and infinite plunge distance (the base triangle center is the base center). \label{fig:vertmidplane}}
\end{figure}

%%%%%%%%%%%%%%%%%%%%%%%%%%%%%%%%%%%%%%%%%%%%%%%%%%%%%%%%%%%%%%%%%%%%%%%%%%%%%%%%%%%%
\subsection{Pointing with a Frozen Midjoint and Fixed Plunge Distance}\label{sec:frozenplunge}

Finally, we revisit why one shouldn't simply use the traditional plunge distance based kinematics when stuck in a frozen midjoint failure mode. Let $P$ be a midplane of a Canfield joint configuration with frozen midjoint $\bv m^*$ and finite plunge distance $p_d\in\RR$. As we have seen, this means both $\bv m^*$ and $\bv c_J=p_d\uv z$ are on the midplane $P$. Additionally suppose $\bv m^*\neq \bv c_J$. Since $\bv m^*\in P$ and by midplane symmetry, $\bv m^*$ must be equidistant to $\bv D_c$ and $\bv B_c=\bv 0$. Thus, $\bv D_c\in S_{(\bv m^*,\bv 0)}$. Similarly, $\bv c_J$ is equidistant to $\bv D_c$ and $\bv B_c=\bv 0$, meaning $\bv D_c\in S_{(\bv c_J,\bv 0)}$. Thus $\bv D_c\in S_{(\bv m^*,\bv 0)}\cap S_{(\bv c_J,\bv 0)}$, i.e. the distal center is forced to lie on the intersection of two distinct spheres. 

The intersection of two distinct spheres is at best a circle. In practice, one can't even access all of this circle because the midplane-to-midjoint inverse kinematics further constrains what midplanes are viable. So we are working with some circular arc of possible distal centers. Thus by choosing to constrain the plunge distance while $\bv m^*$ is frozen, we have severely limited our pointing abilities and have only 1DoF for moving the distal plate.

If by sheer coincidence $\bv m^*$ lies on $Z$, then we can choose $p_d=\norm{\bv m^*}$. In this case $\bv c_J=\bv m^*$ and the two spheres from before are identical. As a result, the intersection is an entire sphere and this means that $\bv D_c$ has 2 degrees of (spherical) freedom. In this very special case, nothing is lost by enforcing the correct plunge constraint. This can be seen in the leftmost example of Figure \ref{fig:feaspointing_frozen}.

\section{Additional Consequences and Visualizations}\label{sec:viz}
In this section we explore some of the consequences of the inverse kinematics from previous section. In particular, we will introduce the distal normal field, the affine pointing locus, and the Az/El pointing locus.
The loci may be used to solve the inverse kinematic problem, as shown in Figure \ref{fig:IKflowchart}, and provide a tool for visualization.
% All of these have nice visualizations associated to them and the loci in particular can be somewhat thought of as a kind of inverse kinematics (and so we include them in Figure \ref{fig:IKflowchart}). 

\subsection{Distal Normal Field}

A priori, to determine the distal normal we would need to know the midplane. However, Theorem \ref{thm:DC2MP} will allow us to determine the distal normals directly from nonzero distal centers because such distal centers determine their own midplane. This gives rise to a vector field $\bv F(\bv x)$ which we call the \textbf{distal normal field}. It is given explicitly in the following theorem.

\begin{theorem}\label{thm:distalfield}
If a Canfield joint configuration has distal center $\bv x\neq\bv 0$, then the distal normal is given by
\[
    \bv F(\bv x) 
    = 2\frac{(\bv x\cdot\uv z)\bv }{\norm{\bv x}^2}\bv x - \uv z
    = 2(\hat{\bv x}\cdot\uv z)\hat{\bv x} - \uv z.
\]
Moreover, for $\guv\rho\perp\uv z$ 
\[
    \bv F(\sin\theta\guv\rho + \cos\theta\uv z) 
    = \sin 2\theta\guv\rho + \cos 2\theta\uv z.
\]
\end{theorem}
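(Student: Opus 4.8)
The plan is to derive the formula for $\bv F(\bv x)$ directly from the forward kinematics (Proposition \ref{prop:forwardkinematics}) combined with the distal-center-to-midplane correspondence of Theorem \ref{thm:DC2MP}. Given a nonzero distal center $\bv x$, Theorem \ref{thm:DC2MP} tells us the midplane $P$ has normal $\bv N = \bv x$ and passes through $\tfrac{1}{2}\bv x$. By Proposition \ref{prop:forwardkinematics}, the distal normal is $\uv N_D = R_{[\bv N, \bv 0]}(-\uv z)$, and since we only need $R_{[\bv N,\bv 0]}$ (reflection through the plane with normal $\bv N$ through the origin), the point $\bv q = \tfrac{1}{2}\bv x$ is irrelevant for this computation. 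So the first step is simply to plug $\bv N = \bv x$ into the reflection formula from the Notation block: $R_{[\bv x,\bv 0]}(-\uv z) = -\uv z - 2\tfrac{\bv x\cdot(-\uv z)}{\norm{\bv x}^2}\bv x = -\uv z + 2\tfrac{(\bv x\cdot\uv z)}{\norm{\bv x}^2}\bv x$, which is exactly $\bv F(\bv x)$. Rewriting in terms of the unit vector $\hat{\bv x} = \bv x/\norm{\bv x}$ gives the second form $2(\hat{\bv x}\cdot\uv z)\hat{\bv x} - \uv z$ immediately, since the expression is homogeneous of degree zero in $\bv x$.

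For the second identity, I would substitute $\bv x = \sin\theta\,\guv\rho + \cos\theta\,\uv z$ with $\guv\rho\perp\uv z$ and $\norm{\guv\rho}=1$. Then $\norm{\bv x}^2 = \sin^2\theta + \cos^2\theta = 1$, so $\hat{\bv x} = \bv x$, and $\hat{\bv x}\cdot\uv z = \cos\theta$. Hence $\bv F(\bv x) = 2\cos\theta(\sin\theta\,\guv\rho + \cos\theta\,\uv z) - \uv z = 2\sin\theta\cos\theta\,\guv\rho + (2\cos^2\theta - 1)\uv z$, and the double-angle identities $2\sin\theta\cos\theta = \sin 2\theta$ and $2\cos^2\theta - 1 = \cos 2\theta$ finish it.

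There is essentially no obstacle here; the result is a direct corollary of two earlier theorems plus the explicit reflection formula. The only thing worth being careful about is the reduction from $R_P = R_{[\bv N,\bv q]}$ to $R_{[\bv N,\bv 0]}$: one should note explicitly (citing Remark \ref{rk:reflection} or just the fact that $\uv N_D$ is a direction vector, i.e. the difference of two reflected points, so the translation $\bv q$ cancels) that the distal normal depends only on the orientation of the midplane and not its offset. After that the computation is mechanical, so the proof will be short.

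\begin{proof}
Suppose a configuration has distal center $\bv x\neq\bv 0$. By Theorem \ref{thm:DC2MP}, its midplane $P$ is given by $\bv x\cdot(\bv y - \tfrac{1}{2}\bv x)=0$, so $P$ has normal $\bv N=\bv x$. By Proposition \ref{prop:forwardkinematics}, the distal normal is $\uv N_D = R_{[\bv N,\bv 0]}(-\uv z)$, where only the normal direction of $P$ enters. Using the explicit reflection formula,
\[
    \bv F(\bv x) = R_{[\bv x,\bv 0]}(-\uv z) = -\uv z - 2\frac{\bv x\cdot(-\uv z)}{\norm{\bv x}^2}\bv x = 2\frac{(\bv x\cdot\uv z)}{\norm{\bv x}^2}\bv x - \uv z.
\]
Since this expression is unchanged under scaling $\bv x\mapsto\lambda\bv x$ for $\lambda\neq 0$, we may divide through to obtain $\bv F(\bv x) = 2(\hat{\bv x}\cdot\uv z)\hat{\bv x} - \uv z$ with $\hat{\bv x}=\bv x/\norm{\bv x}$.

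For the second formula, let $\bv x = \sin\theta\,\guv\rho + \cos\theta\,\uv z$ with $\guv\rho\perp\uv z$ a unit vector. Then $\norm{\bv x}^2 = \sin^2\theta+\cos^2\theta = 1$, so $\hat{\bv x}=\bv x$, and $\hat{\bv x}\cdot\uv z = \cos\theta$. Therefore
\[
    \bv F(\bv x) = 2\cos\theta(\sin\theta\,\guv\rho + \cos\theta\,\uv z) - \uv z = (2\sin\theta\cos\theta)\guv\rho + (2\cos^2\theta - 1)\uv z = \sin 2\theta\,\guv\rho + \cos 2\theta\,\uv z,
\]
using the double-angle identities.
\end{proof}
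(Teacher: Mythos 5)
Your proof is correct and follows essentially the same route as the paper: invoke Theorem \ref{thm:DC2MP} to identify the midplane normal as $\bv x$, apply the reflection formula $R_{[\bv x,\bv 0]}(-\uv z)$ from Proposition \ref{prop:forwardkinematics}, and finish the parametrized case with the double-angle identities. Your explicit remark that the offset $\bv q=\tfrac{1}{2}\bv x$ drops out of the normal computation is a small but welcome clarification that the paper leaves implicit.
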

\begin{proof}
For a point $\bv x\neq \bv 0$ to be a distal center, the required midplane must be normal to $\bv x$ and pass through $\bv x/2$ (Theorem \ref{thm:DC2MP}). Thus, by Proposition \ref{prop:forwardkinematics}, the distal normal at $\bv x$ is given by $R_{[\bv x,\bv 0]}(-\uv z) = -\uv z + 2\tfrac{(\bv x\cdot\uv z)}{\norm{\bv x}^2}\bv x$ as desired. Finally, applying this formula with $\guv\rho\perp\uv z$,
\begin{align*}
    \bv F(\sin\theta\guv\rho + \cos\theta\uv z) 
    &=
    2(\cos\theta\uv z)(\sin\theta\guv\rho + \cos\theta\uv z) - \uv z \\
    &= 2\cos\theta\sin\theta\guv\rho + (2\cos^2\theta -1)\uv z \\
    &= \sin 2\theta\guv\rho + \cos 2\theta\uv z.
\end{align*}
\end{proof}

Since $\hat{\bv x}$ is the normal to the midplane associated to $\bv x$,\footnote{Where $\hat{\bv x}=\bv x/\norm{\bv x}$ is not to be confused with the unit vector $\uv x$.} the above shows that the distal normal doubles the polar angle. We visualize a 2D slice of this vector field in Figure \ref{fig:distalfield} and note a resemblance to the magnetic field of a dipole.\footnote{The magnetic field of a magnetic dipole with magnetic moment $\uv z$ is given by $3(\hatbv x\cdot \uv z)\hatbv x - \uv z$ times $\tfrac{\mu_0}{4\pi\norm{\bv x}^3}$. Comparing to $\bv F(\bv x)=2(\hatbv x\cdot\uv z)\hatbv x -\uv z$ we can see that these two are nearly the same when ignoring the scalar factor.}

\begin{figure}[ht]
\centering
\includegraphics[width=.45\textwidth]{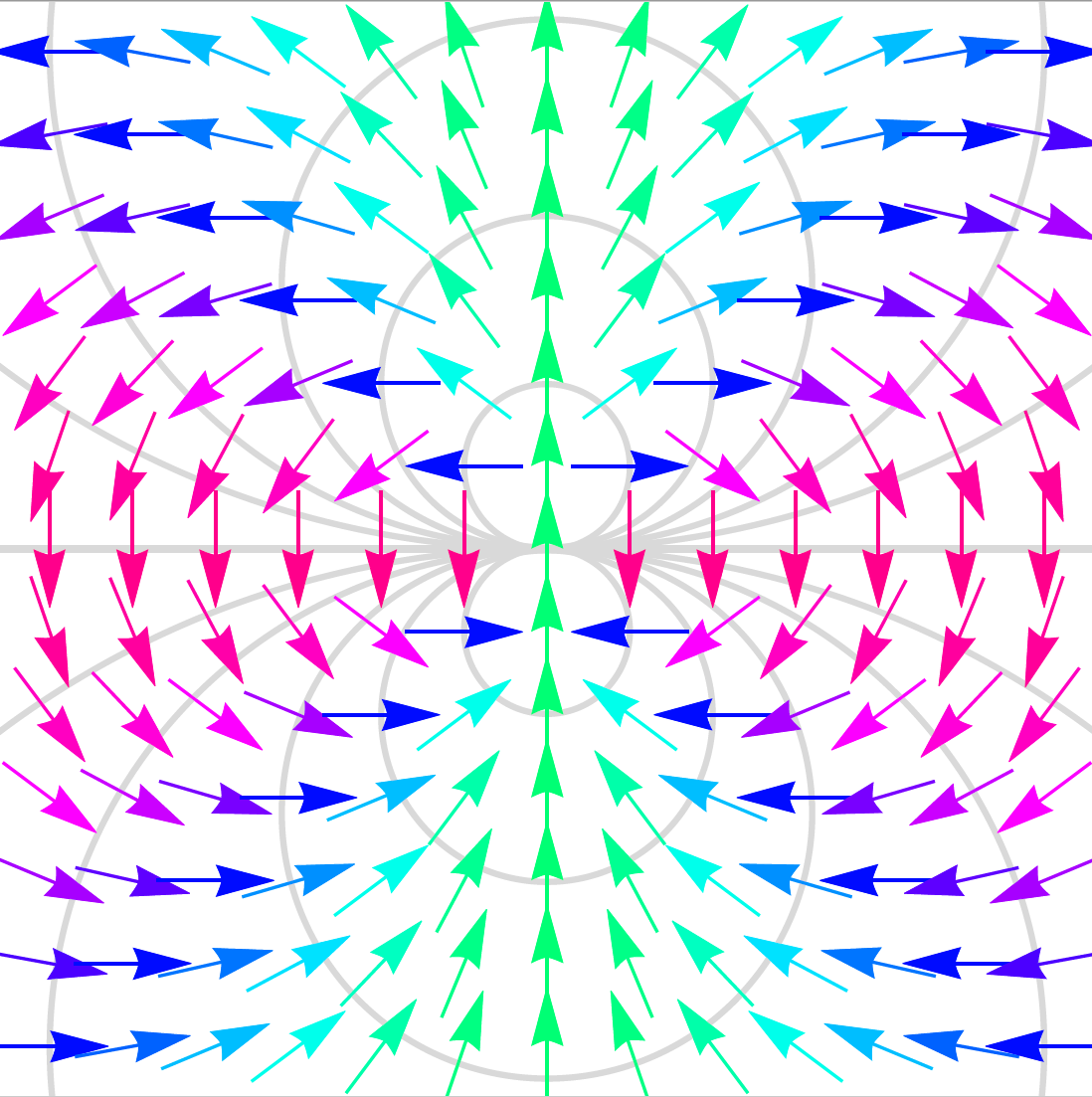}
\caption{2D slice of $\bv F(\bv x)$ on plane containing $Z$.}
\label{fig:distalfield}
\end{figure}

\subsection{Pointing Loci: Pointing Constraint to Distal Center}

\begin{figure*}[!htb]
\centering
\includegraphics[width=1\textwidth]{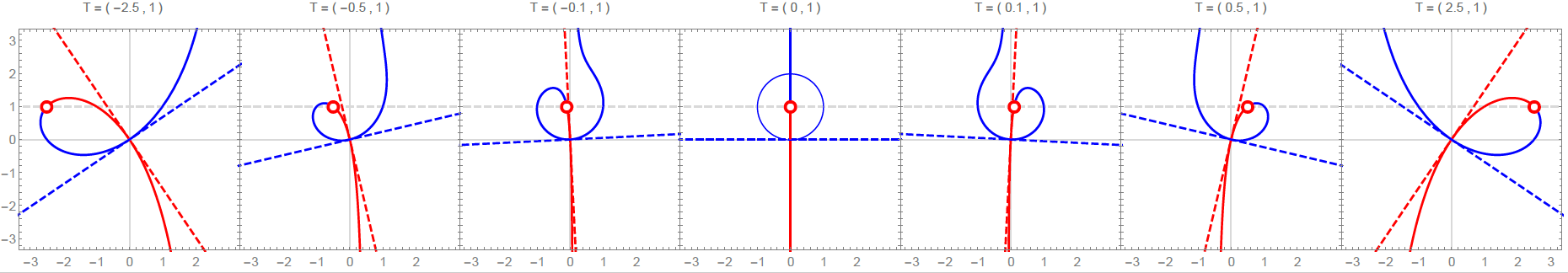}
\caption{The solid red, solid blue, dashed red, and dashed blue curves are $\Lambda^+_{\bv T}$, $\Lambda^-_{\bv T}$, $\tilde{\Lambda}^+_{\bv N}$, and $\tilde{\Lambda}^-_{\bv N}$ within $W$ respectively. We set $\bv T=\bv N$ and depict $\bv T$ with the hollow red dot. We vary $T_\rho$ from $-2.5$ to $2.5$ while keeping $T_z=1$ fixed. Note the transition from a single curve to a union of a circle and line. Also note that the Az/El locus is tangent to the affine locus at the origin.} 
\label{fig:affine+azelpointinglocus}
% \CBnote{Make text larger if necessary}
\end{figure*}

If we want a Canfield joint to point at an object $\bv T$ (affine pointing), or in a direction $\bv N$ given by azimuth and elevation (Az/El pointing), there is a continuum of distal center locations that would get the job done.  Here we will show how to find all the distal centers that accommodate such pointing objectives.

% \RSnote{WRITE:  Adjust exposition in Pointing Loci section - Key:  valid Distal Center location}
% When the authors first approached the problem of the frozen midjoint, it helped to understand the set of all possible ways to point at a fixed target $\bv T$. Though the arguments in earlier sections no longer explicitly use these ideas, the original approach relied heavily on understanding this ``pointing locus'' and so we include it here in the hope that it may help others.

\begin{definition}
Given a target point $\bv T$, let the \textbf{affine pointing locus} $\Lambda_{\bv T}$ be the set of distal centers $\bv D_c$ such that for some configuration $\bv T = \bv D_c + t\uv N_D$ for some $t \in \RR$. The \textbf{affine forward-pointing locus} $\Lambda^+_{\bv T}$ and \textbf{affine backward-pointing locus} $\Lambda^-_{\bv T}$ are the subset where $t\geq 0$ and $t\leq 0$ respectively. 
\end{definition}
\begin{remark}\label{rk:origininloci}
By this definition, $\bv B_c = \bv 0$ is always in $\Lambda_{\bv T}$ because the origin can point anywhere (see Remark \ref{rk:DCis0}). 
\end{remark}

Simply put, $\Lambda_{\bv T}$ is the set of places the distal-center can be so that $\bv T$ lies on the pointing axis of the configuration. This does not depend on a specific Canfield joint configuration and considers all possible Canfield joints at once. 

The following result provides necessary and sufficient conditions for identifying elements of $\Lambda_{\bv T}$.

\begin{theorem}\label{thm:AffinePointingLocus}
Let $\bv T=T_\rho\guv \rho + T_z\uv z$ where $\guv\rho\perp \uv z$, and let $W=\opn{span}\set{\guv\rho,\uv z}$ denote the $\rho z$-plane. Then $W\cap \Lambda_{\bv T}$ is the set of points $\rho\guv\rho + z\uv z$ which satisfy the equation
\[
    E_{\bv T}(\rho,z) \coloneqq \rho^3 + \rho z^2 + T_\rho(z^2-\rho^2)-2T_z \rho z = 0.
\]
Moreover, if $\bv T\notin Z$ then 
\[
    \Lambda_{\bv T} = \set{\rho\guv\rho+z\uv z\mid E_{\bv T}(\rho,z)=0}.
\]
\end{theorem}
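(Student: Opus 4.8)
The plan is to derive the equation $E_{\bv T}(\rho, z) = 0$ by combining the description of the affine pointing locus with the distal normal field formula from Theorem~\ref{thm:distalfield}, and then separately verify that every nonzero solution of $E_{\bv T}$ in $W$ actually arises as a distal center for an honest configuration (so the cubic really does cut out the full locus when $\bv T \notin Z$, not just a superset). By Theorem~\ref{thm:DC2MP}, a nonzero point $\bv D_c$ determines its own midplane, and by Theorem~\ref{thm:distalfield} the distal normal at $\bv D_c = \bv x$ is exactly $\bv F(\bv x) = 2(\hat{\bv x}\cdot\uv z)\hat{\bv x} - \uv z$. So $\bv x$ lies in $\Lambda_{\bv T}$ iff either $\bv x = \bv 0$ (always in $\Lambda_{\bv T}$ by Remark~\ref{rk:origininloci}) or $\bv x \neq \bv 0$ and $\bv T = \bv x + t\,\bv F(\bv x)$ for some $t \in \RR$, i.e. $\bv T - \bv x$ is parallel to $\bv F(\bv x)$ (including the degenerate case $\bv F(\bv x) \parallel (\bv T - \bv x) = \bv 0$).

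\medskip

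First I would reduce to the plane $W$: note that for $\bv x \neq \bv 0$, both $\bv x$ and $\bv F(\bv x) = 2(\hat{\bv x}\cdot\uv z)\hat{\bv x} - \uv z$ lie in $\opn{span}\{\bv x, \uv z\}$, so the pointing axis through $\bv x$ stays in $\opn{span}\{\bv x, \uv z\}$; for this axis to contain $\bv T = T_\rho\guv\rho + T_z\uv z$ we need $\bv T \in \opn{span}\{\bv x, \uv z\}$, which (when $T_\rho \neq 0$) forces $\bv x \in W = \opn{span}\{\guv\rho,\uv z\}$. So write $\bv x = \rho\guv\rho + z\uv z$. Then $\bv x\cdot\uv z = z$, $\norm{\bv x}^2 = \rho^2 + z^2$, and
\[
    \bv F(\bv x) = \frac{2z}{\rho^2+z^2}(\rho\guv\rho + z\uv z) - \uv z = \frac{2z\rho}{\rho^2+z^2}\guv\rho + \frac{z^2-\rho^2}{\rho^2+z^2}\uv z.
\]
The parallelism condition $(\bv T - \bv x) \parallel \bv F(\bv x)$ is the vanishing of the $2\times 2$ determinant of $(T_\rho - \rho,\; T_z - z)$ against $(2z\rho,\; z^2-\rho^2)$ (after clearing the positive denominator $\rho^2+z^2$), namely
\[
    (T_\rho - \rho)(z^2-\rho^2) - (T_z - z)(2z\rho) = 0.
\]
Expanding, $T_\rho z^2 - T_\rho \rho^2 - \rho z^2 + \rho^3 - 2T_z z\rho + 2z^2\rho = T_\rho z^2 - T_\rho\rho^2 + \rho^3 + \rho z^2 - 2T_z\rho z = E_{\bv T}(\rho,z)$, matching the claimed cubic. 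I would double-check the sign and grouping of the $z^2\rho$ terms here, since that is where an arithmetic slip is most likely.

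\medskip

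Next I would handle the ``moreover'' claim and the subtleties at the boundary of the generic case. The determinant computation above assumed $\rho^2 + z^2 \neq 0$; the point $\bv x = \bv 0$ is automatically in $\Lambda_{\bv T}$, and one checks directly that $E_{\bv T}(0,0) = 0$, so including the origin costs nothing — every point of $W\cap\Lambda_{\bv T}$ satisfies $E_{\bv T} = 0$ and conversely. For the converse (sufficiency) I must confirm that a solution $(\rho,z)\neq(0,0)$ of $E_{\bv T} = 0$ genuinely yields $t$ with $\bv T = \bv x + t\bv F(\bv x)$: since $\bv F(\bv x)$ is a unit vector (Theorem~\ref{thm:distalfield}, or direct computation: $(2z\rho)^2 + (z^2-\rho^2)^2 = (\rho^2+z^2)^2$) it is nonzero, so parallelism of $\bv T - \bv x$ with the nonzero vector $\bv F(\bv x)$ gives a unique scalar $t$; and by Theorem~\ref{thm:DC2MP} the plane $\bv x\cdot(\bv y - \tfrac12\bv x) = 0$ is an honest midplane realizing $\bv x$ as a distal center, so the configuration exists. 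The hypothesis $\bv T \notin Z$ enters precisely to pin $\bv x$ down to the plane $W$ in the "moreover": if $\bv T \in Z$ (i.e. $T_\rho = 0$), then many non-coplanar $\bv x$ can still point at $\bv T$ and $\Lambda_{\bv T}$ is strictly larger than its slice by $W$, so the equality $\Lambda_{\bv T} = \{E_{\bv T} = 0\}$ can fail.

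\medskip

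\textbf{The hard part} will not be the algebra but getting the logical structure of the ``moreover'' exactly right: one must argue that when $\bv T\notin Z$, \emph{every} configuration pointing at $\bv T$ has its distal center in $W$ (nonzero distal centers lie in $W$ by the coplanarity argument, and the zero distal center lies in $W$ trivially), so that $\Lambda_{\bv T} = W\cap\Lambda_{\bv T}$, and only then apply the first part. I would also want to be careful about the excluded degenerate sub-case $\bv T - \bv x = \bv 0$ (when $\bv x = \bv T$ itself is a distal center with $t = 0$): there $E_{\bv T}(T_\rho, T_z)$ should vanish, which it does since the determinant has a zero row, so the formula remains correct with no special-casing needed.
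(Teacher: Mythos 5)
Your proposal is correct, and it reaches the cubic by a genuinely different computation than the paper. The paper works with the Pointing Lemma: for $\bv x\neq\bv 0$ it takes the midplane $\bv x\cdot(\bv y-\tfrac12\bv x)=0$ from Theorem \ref{thm:DC2MP} and imposes that the reflection $R_{[\bv x,\bv x/2]}(\bv T)$ have vanishing $\guv\rho$-component (i.e.\ land on $Z$), which after clearing $\norm{\bv x}^2$ gives $T_\rho\norm{\bv x}^2-2(\bv T\cdot\bv x)\rho+\norm{\bv x}^2\rho=0$ and hence $E_{\bv T}=0$. You instead invoke the distal normal field of Theorem \ref{thm:distalfield} and encode ``$\bv T$ lies on the pointing axis'' as vanishing of the $2\times2$ determinant of $\bv T-\bv x$ against $\bv F(\bv x)$; your expansion to $E_{\bv T}(\rho,z)$ checks out, and there is no circularity since Theorem \ref{thm:distalfield} is established before this result and depends only on Theorems \ref{thm:DC2MP} and \ref{prop:forwardkinematics}. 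The two conditions are of course equivalent (reflecting $\bv T$ onto $Z$ versus aligning $\bv T-\bv x$ with the reflected axis direction), but yours makes the role of the vector field explicit and handles the degenerate case $\bv T=\bv x$ transparently via a zero column, whereas the paper's version avoids needing $\bv F(\bv x)\neq\bv 0$ at all. For the ``moreover'' both arguments rest on coplanarity of the pointing axis with $Z$; the paper derives it from the fact that a line and its mirror image are coplanar, while you observe directly that $\bv F(\bv x)\in\vspan\set{\bv x,\uv z}$, which is equally valid (points of $Z\setminus\set{\bv 0}$ are handled correctly in your argument since their pointing axis is $Z$ itself and so cannot contain $\bv T\notin Z$). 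No gaps.
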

\begin{proof}
By Remark \ref{rk:origininloci} we know that $\bv 0\in\Lambda_{\bv T}$, so assume $\bv x\neq\bv 0$. Note that $\bv x\in W\cap\Lambda_{\bv T}$ iff we can write $\bv x = \rho\guv\rho + z \uv z$ and $\bv T$ lies on the pointing axis of $\bv x$. By the Pointing Lemma \ref{lem:pointing} and distal-center inverse kinematics,  $\bv T$ will lie on the pointing axis if and only if the plane through $\bv x/2$ is normal to $\bv x$ (i.e. the midplane associated to $\bv x$) reflects $\bv T$ onto the $z$-axis. This is equivalent to saying that $R_{[\bv x,\bv x/2]}(\bv T)$ has no $\rho$-component, i.e. 
\[
    0=R_{[\bv x, \bv x/2]}(\bv T)\cdot\guv\rho 
    = \Big(\bv T - 2 \fr{(\bv T-\bv x/2)\cdot\bv x}{\norm{\bv x}^2}\bv x\Big)\cdot\guv\rho .
\]
Multiplying by $\norm{\bv x}^2$ and distributing the dot products, this is equivalent to
\[
    0 = T_\rho\norm{\bv x}^2 - 2 (\bv T\cdot \bv x)\rho +\norm{\bv x}^2 \rho  ,
\]
which is the same as
\[
     0 = T_\rho(\rho^2+z^2) - 2(T_\rho \rho + T_z z)\rho + (\rho^2 + z^2)\rho ,
\]
which when expanded reduces to $E_{\bv T}(\rho,z)=0$. 

Finally, if $\bv T\notin Z$ then $W=\opn{span}\set{\bv T,\uv z}$. It follows from the Pointing Lemma \ref{lem:pointing} that the pointing axis and $z$-axis must be reflections of each other over some plane. Hence these two lines must be coplanar. Since $\bv T\notin Z$ that means the plane containing the pointing axis and $Z$ is the plane containing $\bv 0, \uv z,$ and $\bv T$ which is $W$. Thus $\Lambda_{\bv T}\subseteq W$, so $\Lambda_{\bv T}\subseteq W\cap\Lambda_{\bv T}$. Since the reverse inclusion is automatic we have $\Lambda_{\bv T}=W\cap\Lambda_{\bv T}$.
\end{proof}

% \begin{figure}[!htb]
% \centering
% \subfloat[Graph of affine pointing locus for $\bv T = (4,6)$ \label{fig:affinepointinglocus}]{\includegraphics[width=0.38\textwidth]{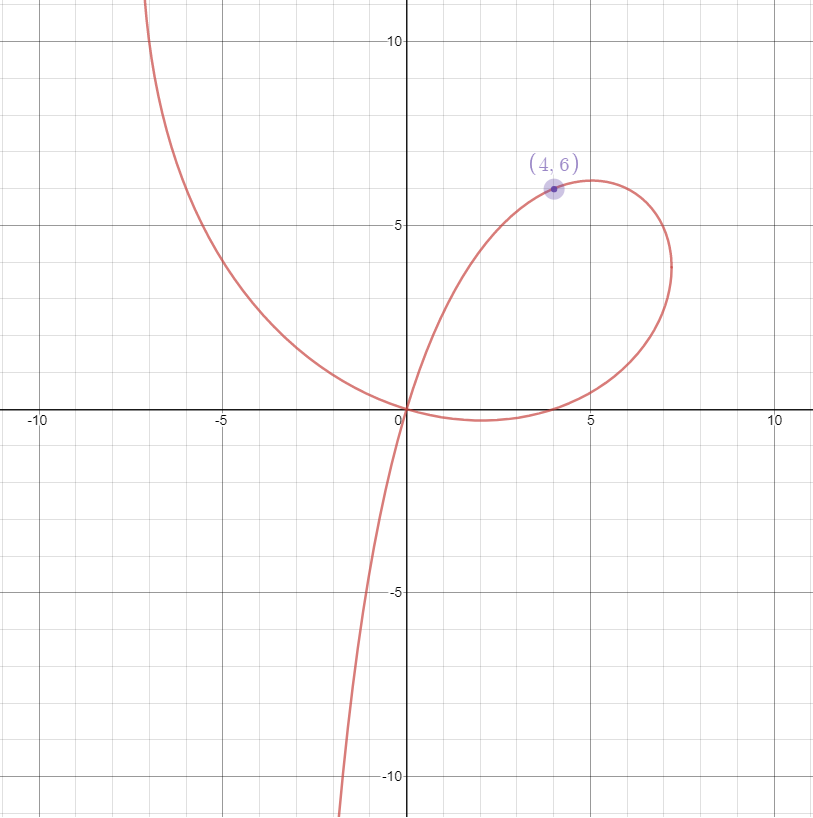}}\hfill
% \subfloat[Graph of AzEl pointing locus for $\bv N = (4,6)$ \label{fig:azelpointinglocus}] {\includegraphics[width=0.38\textwidth]{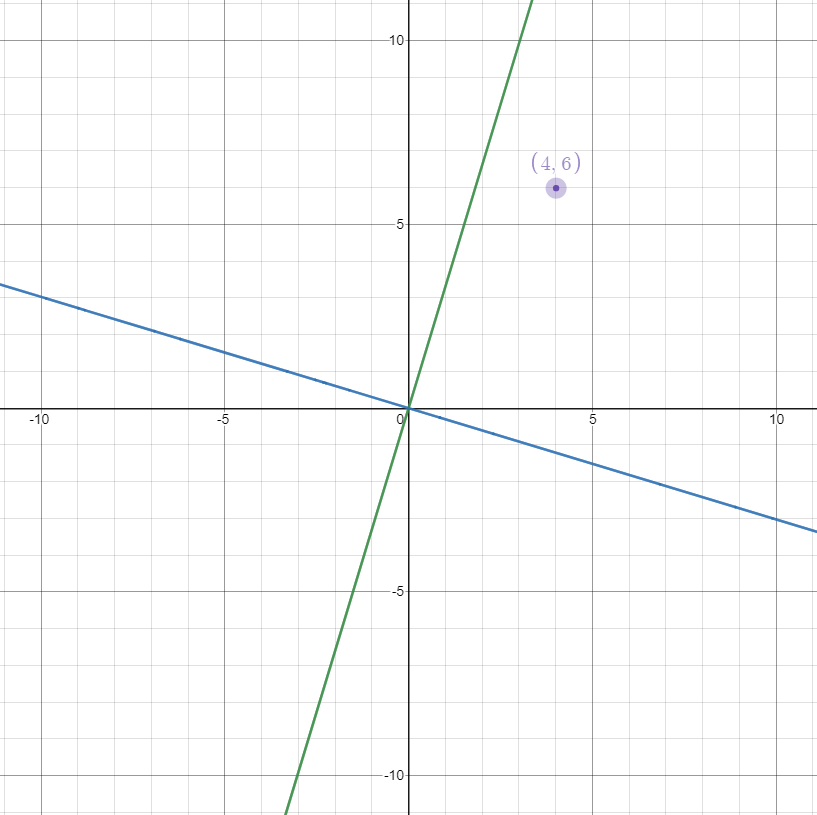}}\hfill
% \subfloat[The two above pointing loci for $\bv T = \bv N$ graphed together. \label{fig:affine+azelpointinglocus}] {\includegraphics[width=0.38\textwidth]{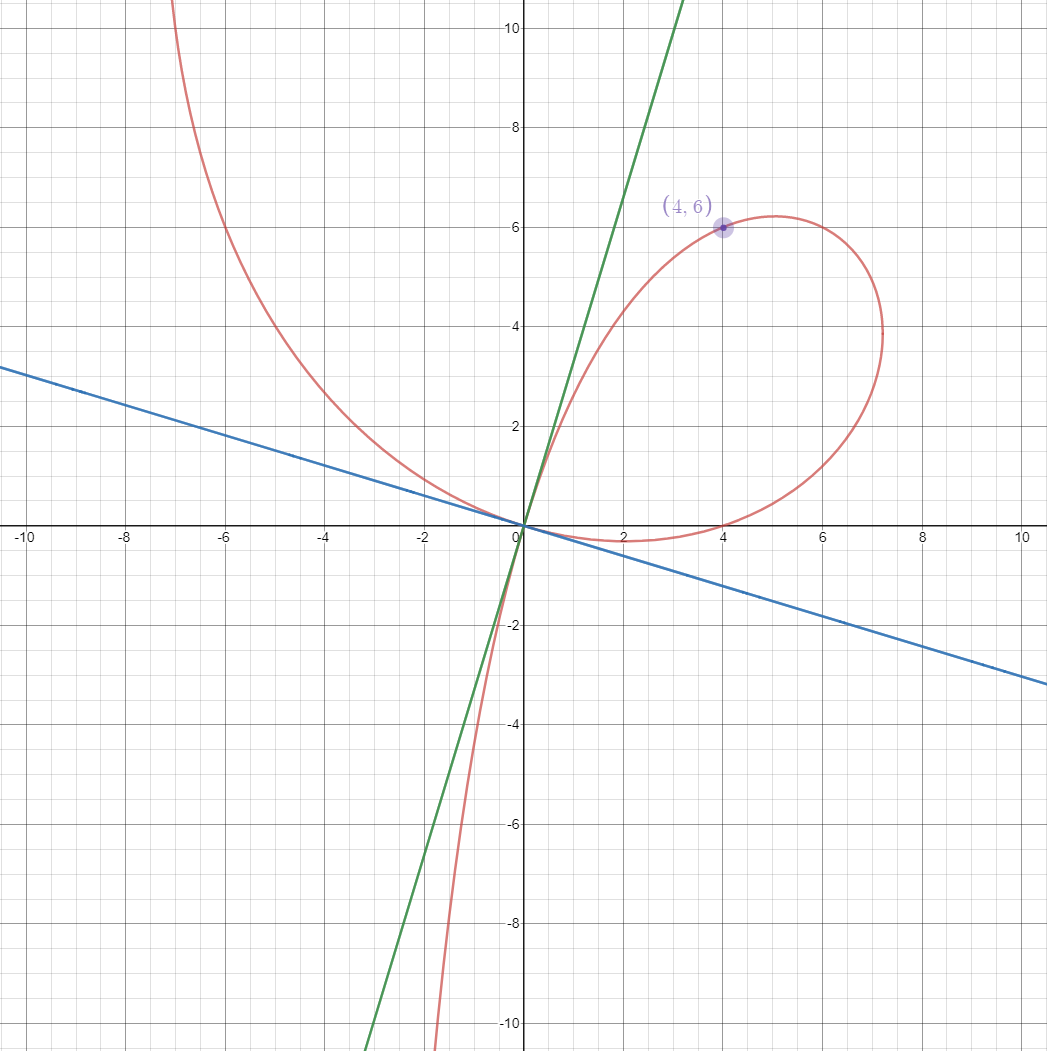}}\hfill
% \caption{Illustrations of inverse kinematics forms.} \
% \end{figure}

% \begin{figure*}[!htb]
% \centering
% \subfloat[Graph of affine pointing locus for $\bv T = (4,6)$ \label{fig:affinepointinglocus}]{\includegraphics[width=0.32\textwidth]{Figures/Affine_Pointing_Locus.png}}\hfill
% \subfloat[Graph of AzEl pointing locus for $\bv N = (4,6)$ \label{fig:azelpointinglocus}] {\includegraphics[width=0.32\textwidth]{Figures/AzEl_Pointing_Locus.png}}\hfill
% \subfloat[Both pointing loci for $\bv T = \bv N$ graphed together. \label{fig:affine+azelpointinglocus}] {\includegraphics[width=0.32\textwidth]{Figures/Affine+AzEl_Loci.png}}\hfill
% \caption{Pointing loci.} \
% \end{figure*}

% \begin{figure}[ht]
% \centering
% \includegraphics[width=0.5\textwidth]{Figures/Affine_Pointing_Locus.png}

% \caption{Graph of affine pointing locus for $\bv{T} = (6, 4)$\label{fig:affinepointinglocus}
% }
% \end{figure}

Note that $\Lambda_{\bv T}$ does not depend on the design parameters $\ell_i$, $b_i$, and $\uv N_i$, and only depended on midplane symmetry and that $\bv B_c=\bv 0$. The next result lets us understand the geometry of the affine pointing locus and provides a parametrization.

\begin{theorem}\label{thm:AffineLocusParametrized}
If $\bv T\in Z$ then $\Lambda_{\bv T}=Z\cup S_{(\bv T, \bv 0)}$. When $\bv T\notin Z$, then $\Lambda_{\bv T}$ is a planar nodal cubic curve passing through $\bv T$, with a node at $\bv 0$, and can be rationally parametrized as
\[
    \bv r(t) = \paran{\fr{\norm{\bv T}^2 - t^2}{\norm{\bv T - t\uv z}^2}}(\bv T-t\uv z),\qquad t\in\RR.
\]
Moreoever, $\Lambda^+_{\bv T}$ corresponds to $t\leq 0$, and $\Lambda^-_{\bv T}$ to $t\geq 0$.
\end{theorem}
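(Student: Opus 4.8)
The plan is to split on whether $\bv T \in Z$ and, in the main case $\bv T \notin Z$, to use Theorem \ref{thm:AffinePointingLocus} together with the distal-center-to-midplane correspondence to exhibit an explicit rational parametrization, then verify it traces out exactly the cubic $E_{\bv T}(\rho,z)=0$. For the degenerate case $\bv T = T_z\uv z \in Z$, I would argue directly: a distal center $\bv x \neq \bv 0$ points at $\bv T$ iff the midplane associated to $\bv x$ (normal to $\bv x$, through $\bv x/2$, by Theorem \ref{thm:DC2MP}) reflects $\bv T$ onto $Z$. Since $\bv T$ is already on $Z$, this happens iff either $R_{[\bv x,\bv x/2]}(\bv T)=\bv T$ (i.e. $\bv T \in P$, which forces $\bv x \perp (\bv T - \bv x/2)$, a condition cutting out the sphere $S_{(\bv T,\bv 0)}$ after completing the square) or $\bv x \in Z$ (then the midplane is horizontal and fixes $Z$ setwise). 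Together with $\bv 0 \in \Lambda_{\bv T}$ from Remark \ref{rk:origininloci}, this gives $\Lambda_{\bv T} = Z \cup S_{(\bv T,\bv 0)}$.

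For $\bv T \notin Z$, Theorem \ref{thm:AffinePointingLocus} already tells us $\Lambda_{\bv T} = \{\rho\guv\rho + z\uv z : E_{\bv T}(\rho,z)=0\}$, a planar curve in $W$. The cubic $E_{\bv T}(\rho,z) = \rho^3 + \rho z^2 + T_\rho(z^2-\rho^2) - 2T_z\rho z$ is homogeneous of degree $3$ in no single sense, but its lowest-degree part (degree $2$) is $T_\rho(z^2 - \rho^2) - 2T_z\rho z = -(T_\rho\rho + T_z z - \cdots)$; more precisely the quadratic form $T_\rho(z^2-\rho^2) - 2T_z\rho z$ is nondegenerate (discriminant $T_\rho^2 + T_z^2 = \norm{\bv T}^2 > 0$), so the origin is a node (two distinct tangent lines). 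This identifies the curve type. To get the parametrization, I would pass a line of variable slope through the node: since a line through a node of a cubic meets the cubic in exactly one further point, substituting the pencil of lines through $\bv 0$ yields a rational parametrization. Concretely, I expect the substitution that makes the geometry transparent is to write candidate distal centers as the point where the $z$-axis reflects: if $\bv K = t\uv z \in Z$ is the reflection $R_P(\bv T)$, then $P$ is the perpendicular bisector of $\overline{\bv T\bv K}$ (Theorem \ref{thm:AP2MP}), and $\bv D_c = R_P(\bv 0)$ is the reflection of the origin over that bisector. Computing $R_{[\bv T - t\uv z,\, (\bv T+t\uv z)/2]}(\bv 0)$ via the Notation formula gives $\bv D_c = 2\frac{(\bv T - t\uv z)\cdot(\bv T + t\uv z)}{2\norm{\bv T - t\uv z}^2}(\bv T - t\uv z) = \frac{\norm{\bv T}^2 - t^2}{\norm{\bv T - t\uv z}^2}(\bv T - t\uv z)$, which is exactly $\bv r(t)$.

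It then remains to check three things: (i) $\bv r(t)$ satisfies $E_{\bv T}(\rho,z)=0$ for all $t$ — a direct algebraic substitution using $\rho\guv\rho + z\uv z = \bv r(t)$, i.e. $\rho = \frac{(\norm{\bv T}^2 - t^2)T_\rho}{\norm{\bv T - t\uv z}^2}$ and $z = \frac{(\norm{\bv T}^2 - t^2)(T_z - t)}{\norm{\bv T - t\uv z}^2}$; (ii) every point of the cubic is hit, which follows because a real nodal cubic with a real node admits a global rational parametrization by the line-pencil argument and the map $t \mapsto \bv r(t)$ is that parametrization (the node $\bv 0$ is attained as $t \to \pm\infty$, and $\bv T$ is attained at... checking, $\bv r(t) = \bv T$ needs $\norm{\bv T}^2 - t^2 = \norm{\bv T - t\uv z}^2 = \norm{\bv T}^2 - 2tT_z + t^2$, i.e. $t^2 = tT_z - t^2$... more carefully $t(t - T_z) + t^2 =0$... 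I will let the algebra settle this, but $\bv T$ lies on the curve by Theorem \ref{thm:AffinePointingLocus} since $E_{\bv T}(T_\rho, T_z) = T_\rho^3 + T_\rho T_z^2 + T_\rho T_z^2 - T_\rho^3 - 2T_\rho T_z^2 = 0$); and (iii) the forward/backward split: by Theorem \ref{thm:AP2MP} (or its proof via the Pointing Lemma) forward-pointing corresponds to $\bv K \in Z_{\leq 0}$, i.e. $t \leq 0$, and backward to $t \geq 0$, matching the claim. The main obstacle I anticipate is purely bookkeeping: verifying (i) cleanly, since clearing the denominator $\norm{\bv T - t\uv z}^2 = T_\rho^2 + (T_z - t)^2$ produces a degree-$6$ polynomial identity in $t$ that must vanish identically; organizing this so it is short (e.g. by factoring out $(\norm{\bv T}^2 - t^2)$ and recognizing the remaining factor as a perfect square or as $\norm{\bv T - t\uv z}^2$ times something) is where care is needed. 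Everything else — the node classification, the global-parametrization claim, the $t$-sign dictionary — is conceptual and follows from results already in the paper.
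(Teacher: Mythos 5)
Your proposal is correct, and for the main case $\bv T\notin Z$ it follows the same route as the paper: take $\bv K=t\uv z\in Z$, use Theorem \ref{thm:AP2MP} to realize the admissible midplanes as perpendicular bisectors of $\overline{\bv T\bv K}$, and compute $\bv D_c=R_P(\bv 0)$ to obtain exactly $\bv r(t)$, with the sign of $t$ governing forward versus backward pointing as you state. Two of your sub-arguments differ from the paper's, both legitimately. For $\bv T\in Z$ the paper factors $E_{\bv T}(\rho,z)=\rho\paran{\rho^2+(z-T_z)^2-T_z^2}$ and appeals to cylindrical symmetry to leave the plane $W$, whereas you argue directly in $\RR^3$ that $R_P(\bv T)\in Z$ forces either $\bv T\in P$ (which cuts out the sphere $S_{(\bv T,\bv 0)}$) or a horizontal midplane (which gives the axis $Z$); your version avoids the symmetry appeal entirely. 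For the node, the paper observes that $\bv r(t)=\bv 0$ exactly at $t=\pm\norm{\bv T}$ while $\bv r$ is injective elsewhere; you instead check that the degree-two part of $E_{\bv T}$ is a nondegenerate indefinite binary quadratic form, giving two distinct real tangent lines at $\bv 0$ --- a more standard algebraic criterion that buys the same conclusion. Finally, your step (i) (substituting $\bv r(t)$ into $E_{\bv T}=0$ and clearing a degree-six polynomial identity) is unnecessary and is the one place your plan creates work the paper avoids: $\bv r(t)$ lies in $\Lambda_{\bv T}$ by construction via Theorem \ref{thm:AP2MP}, and $\Lambda_{\bv T}=\set{\rho\guv\rho+z\uv z\mid E_{\bv T}(\rho,z)=0}$ by Theorem \ref{thm:AffinePointingLocus}, so the parametrized curve coincides with the cubic with no further computation.
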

\begin{proof}
If $\bv T\in Z$ then $T_\rho=0$ and $\bv T=T_z\uv z$. Thus,
\[
    0=E_{\bv T}(\rho,z)= \rho^3 + \rho z^2-2T_z \rho z = \rho\!\paran{\rho^2\!+\!(z\!-\!T_z)^2\!-\!T_z^2}\!.
\]
This is zero iff $\rho=0$ or $\rho^2+(z-T_z)^2=T_z^2$. By Theorem \ref{thm:AffinePointingLocus} and cylindrical symmetry, this means $\Lambda_{\bv T}=Z\cup S_{(\bv T, \bv 0)}$. 

Now suppose $\bv T\notin Z$ and consider $t\uv z\in Z$ for $t\in\RR$. By Theorem \ref{thm:AP2MP}, the planes that point/backward-point at $\bv T$ are precisely the ones normal to $\bv N=\bv T-t\uv z$ containing the midpoint $\bv q=\tfrac{1}{2}(\bv T+t\uv z)$ for $t\leq 0$ and $t\geq 0$ respectively. The distal centers associated to these planes are given by
\[
    \bv r(t) = R_P(\bv 0) = 2\frac{\bv N\cdot\bv q}{\norm{\bv N}^2}\bv N 
    = \paran{\fr{\norm{\bv T}^2 - t^2}{\norm{\bv T - t\uv z}^2}}(\bv T-t\uv z).
\]
Thus, this curve is precisely $\Lambda_{\bv T}$ where $\Lambda^+_{\bv T}$ and $\Lambda^-_{\bv T}$ correspond to  $t\leq 0$ and $t\geq 0$ respectively.

Finally, since $\bv T\notin Z$, we know $\Lambda_{\bv T}$ is a planar and given by a polynomial of third degree by Theorem \ref{thm:AffinePointingLocus}. This cubic curve has a node at $\bv 0$ because $\bv r(t)=\bv 0$ has two solutions of $t = \pm\norm{\bv T}$, but it is one-to-one everywhere else.
\end{proof}

For Az/El pointing problems, we need a different pointing locus. We mimic the affine pointing locus results below.

\begin{definition}
Let $\bv N \neq 0$ and let $\uv{N} = \tfrac{\bv N}{\norm{\bv N}}$.  The \textbf{Az/El forward-pointing locus} $\tilde{\Lambda}^+_{\bv N}$ is the set of distal centers where $\uv N_D = \uv{N}$ for some configuration. Similarly, the \textbf{Az/El backward-pointing locus} $\tilde{\Lambda}^-_{\bv N}$ is the set of distal centers where $\uv N_D = \shortminus\uv{N}$ for some configuration (i.e. the configuration has distal backward-normal $\uv N$). Their union is the \textbf{Az/El pointing locus} $\tilde{\Lambda}_{\bv N}=\tilde{\Lambda}^+_{\bv N}\cup \tilde{\Lambda}^-_{\bv N}$. 
\end{definition}

\begin{remark}\label{rk:AzEllocustrick}
Note that $\tilde\Lambda^+_{-\bv N} = \tilde\Lambda^-_{\bv N}$ and $\tilde\Lambda^+_{\alpha\bv N}=\tilde\Lambda^+_{\bv N}$ for all $\alpha>0$. So it will suffice to know $\tilde\Lambda^+_{\uv N}$ for all unit vectors $\uv N$. 
\end{remark}
\begin{remark}\label{rk:AzEllocustrick2}
A configuration with distal center $\bv x$ and distal normal $\tfrac{\bv N}{\|\bv N\|}$ points at $\bv x+\bv N$ and backward-points at $\bv x-\bv N$. As a result we can relate the Az/El and affine pointing loci by: $\bv x\in\tilde{\Lambda}_{\bv N}$ iff $\bv x\in\Lambda_{\bv x+\bv N}$ when $\bv N\neq\bv 0$.
\end{remark}
The following uses Theorem \ref{thm:AffinePointingLocus} and Remark \ref{rk:AzEllocustrick2} to give necessary and sufficient criteria for belonging to $\tilde\Lambda_{\bv N}$.

\begin{theorem}\label{thm:AzElPointingLocus}
Let $\bv N=N_\rho\guv\rho + N_z\uv z\neq\bv 0$ where $\guv\rho\perp\uv z$, and let $W = \operatorname{span} \{\guv\rho , \uv z \}$ denote the $\rho z$-plane.  Then $W \cap \tilde{\Lambda}_{\bv N}$ is the set of points $\rho \guv\rho + z \uv z$ given by 
\[
    \tilde{E}_{\bv N}(\rho,z) := N_\rho(z^2-\rho^2) - 2N_z \rho z =0.
\]
Moreover, if $\bv N\notin Z$, then
\[ 
    \tilde{\Lambda}_{\bv N} = \{ \rho \guv \rho + z \uv z \, | \, \tilde{E}_{\bv N}(\rho,z)=0 \}.
\]
\end{theorem}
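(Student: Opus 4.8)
The plan is to derive everything from the affine pointing locus via Remark~\ref{rk:AzEllocustrick2}, which (for $\bv N\neq\bv 0$) identifies $\bv x\in\tilde\Lambda_{\bv N}$ with $\bv x\in\Lambda_{\bv x+\bv N}$, and then to feed the target $\bv T:=\bv x+\bv N$ into Theorem~\ref{thm:AffinePointingLocus}. For the first assertion I would fix $\bv x=\rho\guv\rho+z\uv z\in W$. Since $\bv N\in W$, the target $\bv T=(\rho+N_\rho)\guv\rho+(z+N_z)\uv z$ also lies in $W$, so $W$ legitimately serves as the $\rho z$-plane for $\bv T$ in the sense of Theorem~\ref{thm:AffinePointingLocus}: it equals $\opn{span}\set{\bv T,\uv z}$ when $\bv T\notin Z$, and is an allowed choice when $\bv T\in Z$; moreover $T_\rho=\rho+N_\rho$ and $T_z=z+N_z$ are exactly the coordinates entering the polynomial $E_{\bv T}$. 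Combining Remark~\ref{rk:AzEllocustrick2} with Theorem~\ref{thm:AffinePointingLocus} then gives $\bv x\in W\cap\tilde\Lambda_{\bv N}$ if and only if $E_{\bv T}(\rho,z)=0$.

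The computation is then a one-liner: substituting $T_\rho=\rho+N_\rho$ and $T_z=z+N_z$ into $E_{\bv T}(\rho,z)=\rho^3+\rho z^2+T_\rho(z^2-\rho^2)-2T_z\rho z$, every cubic term cancels and one is left with $N_\rho(z^2-\rho^2)-2N_z\rho z=\tilde E_{\bv N}(\rho,z)$. Because this is a polynomial identity it automatically handles degenerate inputs such as $\bv T=\bv 0$, so no case split is needed. Together with the previous paragraph this proves $W\cap\tilde\Lambda_{\bv N}=\set{\rho\guv\rho+z\uv z\mid\tilde E_{\bv N}(\rho,z)=0}$.

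For the ``moreover'' it suffices to show $\tilde\Lambda_{\bv N}\subseteq W$ whenever $\bv N\notin Z$, since the first part then upgrades this to equality. Take $\bv x\in\tilde\Lambda_{\bv N}$, so $\bv x\in\Lambda_{\bv x+\bv N}$ by Remark~\ref{rk:AzEllocustrick2}. If $\bv x+\bv N\notin Z$, the planarity clause of Theorem~\ref{thm:AffinePointingLocus} gives $\Lambda_{\bv x+\bv N}\subseteq\opn{span}\set{\bv x+\bv N,\uv z}$, so writing $\bv x=\alpha(\bv x+\bv N)+\beta\uv z$ and rearranging to $(1-\alpha)\bv x=\alpha\bv N+\beta\uv z$ we see $\alpha=1$ is impossible (it would force $\bv N\in Z$), hence $\bv x\in\opn{span}\set{\bv N,\uv z}=W$. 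If instead $\bv x+\bv N=c\uv z\in Z$, then $\bv x=c\uv z-\bv N\in\opn{span}\set{\uv z,\bv N}=W$ as well (again using $\bv N\notin Z$ so that $\opn{span}\set{\uv z,\bv N}=W$). Thus $\tilde\Lambda_{\bv N}=W\cap\tilde\Lambda_{\bv N}=\set{\rho\guv\rho+z\uv z\mid\tilde E_{\bv N}(\rho,z)=0}$. It is worth noting that $\bv N\notin Z$ is essential here: when $\bv N\in Z$ the backward-pointing part $\tilde\Lambda^-_{\bv N}$ fattens into a full plane and escapes $W$.

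I expect the only genuine obstacle to be the bookkeeping in the first paragraph, namely checking that the single fixed plane $W$ really is an admissible $\rho z$-plane for every moving target $\bv T=\bv x+\bv N$ that arises, and tracking where $\bv N\notin Z$ is needed; the algebra itself is trivial. As a cross-check, one can also prove the theorem directly: by Theorem~\ref{thm:DC2MP} the midplane realizing a nonzero distal center $\bv x$ is normal to $\bv x$, so by Theorem~\ref{thm:AE2MP} the distal normal equals $\uv N$ iff $\bv x\parallel\uv N+\uv z$ and equals $-\uv N$ iff $\bv x\parallel-\uv N+\uv z$; hence (for $\bv N\notin Z$) $\tilde\Lambda_{\bv N}$ is the union of the two lines through the origin in these directions, and the product of their defining linear forms works out to a nonzero scalar multiple of $\tilde E_{\bv N}(\rho,z)$, which recovers both claims at once.
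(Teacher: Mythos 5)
Your proposal is correct and follows essentially the same route as the paper's proof: invoke Remark~\ref{rk:AzEllocustrick2} to reduce to $\bv x\in\Lambda_{\bv x+\bv N}$, substitute $T_\rho=\rho+N_\rho$, $T_z=z+N_z$ into $E_{\bv T}$ from Theorem~\ref{thm:AffinePointingLocus}, and watch the cubic terms cancel. The only difference is that where the paper dismisses the ``moreover'' clause with ``a similar argument as in the proof of Theorem~\ref{thm:AffinePointingLocus},'' you actually supply the containment $\tilde\Lambda_{\bv N}\subseteq W$ via the planarity of $\Lambda_{\bv x+\bv N}$, which is a correct and welcome filling-in of that gap.
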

\begin{proof}
By Remark \ref{rk:AzEllocustrick2}, we can deduce that $\bv x\in W \cap \tilde{\Lambda}_{\bv N}$ iff $\bv x\in W \cap \Lambda_{\bv x+\bv N}$. We know $\bv x\in W$ means $\bv x = \rho \guv\rho + z \uv z$ and so $\bv x+\bv N=(\rho+N_\rho)\guv\rho+(z+N_z)\uv z$. This together with Theorem \ref{thm:AffinePointingLocus} tells us that $\bv x\in W \cap \Lambda_{\bv x+\bv N}$ iff $\bv x = \rho \guv\rho + z \uv z$ and 
\[
    0\!=\!E_{\bv x+\bv N}(\rho,z)\!=\!\rho^3 + \rho z^2 + (\rho+N_\rho)(z^2-\rho^2)-2(z+N_z) \rho z.
\] 
Simplifying this, we get the desired equation
\[
    0 = N_\rho(z^2-\rho^2) - 2N_z \rho z.
\]  
Finally, if $\bv N \notin Z$, then we can use a similar argument as in the proof of Theorem \ref{thm:AffinePointingLocus} to show that $\tilde{\Lambda}_{\bv N} = W \cap \tilde{\Lambda}_{\bv N}$.
\end{proof}
% \begin{proof}
% Suppose $\bv x \in W \cap \tilde{\Lambda}_{\bv N}$, so we can write $\bv x = \rho \guv\rho + z \uv z$.  Since $\bv x \in \tilde{\Lambda}_{\bv N}$, $\bv x$ a distal center such that $\uv N_D = \uv N$.  Let $P$ be the midplane guaranteed by Theorem \ref{thm:DC2MP}.  Note that a Canfield joint with $\bv x$ as its distal center and where $\uv N_D = \uv N$ must point at $\bv x + \bv N$, so by the Pointing Lemma $R_P(\bv x + \bv N) \in Z$.  This is equivalent to
% \[ 
% 0 = R_P(\bv x + \bv N) \cdot \guv\rho  = \left( \bv x + \bv N - 2 \frac{(\bv x/2 + \bv N) \cdot \bv x}{\norm{\bv x}^2}\bv x  \right) \cdot \guv\rho.
% \]
% Multiplying by $\norm{\bv x}^2$ and distributing dot products yields
% \[ 
% 0 = \norm{\bv x}^2\bv N\cdot \guv\rho + 2(\bv N \cdot \bv x)(\bv x\cdot \guv\rho).
% \]
% Since everything is in $W$, we can rewrite this as
% \[ 
% 0 = (\rho^2 + z^2)N_\rho - 2(N_\rho \rho + N_z z)\rho = N_\rho(z^2-\rho^2) - 2N_z \rho z
% \]

% \noindent which is the equation $\tilde{E}_{\bv N}(\rho,z)=0$.

% Finally, if $\bv N \notin Z$, then we can use a similar argument as in the proof of Theorem \ref{thm:AffinePointingLocus} to show that $\tilde{\Lambda}_{\bv N} = W \cap \tilde{\Lambda}_{\bv N}$.
% \end{proof}

% \begin{figure}[ht]
% \centering
% \includegraphics[width=0.5\textwidth]{Figures/AzEl_Pointing_Locus.png}

% \caption{Graph of AzEl pointing locus for $\bv{N} = (6, 4)$\label{fig:azelpointinglocus}
% }
% \end{figure}

Examining the graphs of the Az/El pointing locus in Figure \ref{fig:affine+azelpointinglocus}, we see it is composed of two lines.  These are given by 
\[
N_\rho \rho + (N_z +\norm{\bv N})z=0, \quad N_\rho \rho + (N_z - \norm{\bv N})z=0.
\]
when $N_\rho\neq 0$. We can check this by computing their product:
\begin{align*}
    (N_\rho \rho + (N_z + \norm{\bv N})z)\, (N_\rho \rho + (N_z - \norm{\bv N})z) \\
    = N_\rho^2 (\rho^2-z^2) +2N_\rho N_z \rho z.
\end{align*}
Setting this to 0 and multiplying by $-1$ yields:
\[
0 = N_\rho^2 (z^2-\rho^2)-2N_\rho N_z \rho z = N_\rho \tilde{E}_N(\rho,z).
\]

When $N_\rho \neq 0$, this is equivalent to $\tilde{E}_N(\rho,z)=0$. Conveniently, one of these lines is the forward-pointing locus $\tilde{\Lambda}_{\bv N}^+$ and the other is the backward-pointing locus $\tilde{\Lambda}_{\bv N}^-$.  Below, we shall give an alternate parametrization of these two lines.

\begin{theorem}\label{thm:AzElLocusParametrization}
Both $\tilde\Lambda^+_{-\uv z}$ and $\tilde\Lambda^-_{\uv z}$ are the base plane. For the other cases, $\tilde\Lambda^+_{\bv N}$ and $\tilde\Lambda^-_{\bv N}$ are orthogonal lines through the origin which can parametrized respectively by
\begin{align*}
    \tilde{\bv r}^+(t) 
    &=  t\paran{\uv z + \tfrac{\bv N}{\norm{\bv N}}}, \quad t\in\RR,\, \tfrac{\bv N}{\norm{\bv N}}\neq -\uv z,\\
    \tilde{\bv r}^-(t) 
    &=  t\paran{\uv z - \tfrac{\bv N}{\norm{\bv N}}}, \quad t\in\RR,\, \tfrac{\bv N}{\norm{\bv N}}\neq \uv z.
\end{align*}
\end{theorem}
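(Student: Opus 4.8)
The plan is to read off the locus directly from the forward kinematics together with the distal-center and Az/El inverse-kinematic results already in hand, rather than from the cubic equation $\tilde E_{\bv N}=0$. By Remark \ref{rk:AzEllocustrick} it suffices to identify $\tilde\Lambda^+_{\uv N}$ for each unit vector $\uv N$, since $\tilde\Lambda^-_{\bv N}=\tilde\Lambda^+_{-\bv N}$ and $\tilde\Lambda^+_{\bv N}$ depends only on the direction of $\bv N$; the two degenerate claims and the orthogonality will then fall out by symmetry.

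First I would dispose of the degenerate case $\uv N=-\uv z$. A point $\bv x\neq\bv 0$ lies in $\tilde\Lambda^+_{-\uv z}$ iff some configuration with distal center $\bv x$ has $\uv N_D=-\uv z$. By Theorem \ref{thm:DC2MP} the only candidate midplane is the plane normal to $\bv x$ through $\tfrac12\bv x$, and by Theorem \ref{thm:AE2MP}(2) the resulting configuration has $\uv N_D=-\uv z$ exactly when that midplane is parallel to $Z$, i.e. when $\bv x\cdot\uv z=0$, i.e. when $\bv x$ lies in the base plane. Since $\bv 0\in\tilde\Lambda^+_{-\uv z}$ by Remark \ref{rk:DCis0}, this shows $\tilde\Lambda^+_{-\uv z}$ is exactly the base plane, and $\tilde\Lambda^-_{\uv z}=\tilde\Lambda^+_{-\uv z}$ is the same.

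Now suppose $\uv N\neq-\uv z$. For $\bv x\neq\bv 0$, Theorem \ref{thm:DC2MP} again says the associated midplane has normal $\bv x$, so by Theorem \ref{thm:AE2MP}(1) a configuration with distal center $\bv x$ has $\uv N_D=\uv N$ iff $\uv N+\uv z$ is a nonzero scalar multiple of $\bv x$, i.e. iff $\bv x=t(\uv z+\uv N)$ for some $t\neq 0$. Conversely, for every such $\bv x$ one evaluates the distal normal via Theorem \ref{thm:distalfield}: writing $\hatbv x=\pm\tfrac{\uv z+\uv N}{\norm{\uv z+\uv N}}$ and using $\norm{\uv z+\uv N}^2=2(1+\uv N\cdot\uv z)$, the expression $\bv F(\bv x)=2(\hatbv x\cdot\uv z)\hatbv x-\uv z$ is insensitive to the sign of $\hatbv x$ and simplifies to $\uv N$. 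Adjoining $\bv 0$ (Remark \ref{rk:DCis0}) gives $\tilde\Lambda^+_{\bv N}=\set{t\paran{\uv z+\tfrac{\bv N}{\norm{\bv N}}}:t\in\RR}$, and replacing $\bv N$ by $-\bv N$ gives $\tilde\Lambda^-_{\bv N}=\set{t\paran{\uv z-\tfrac{\bv N}{\norm{\bv N}}}:t\in\RR}$ whenever $\tfrac{\bv N}{\norm{\bv N}}\neq\uv z$. Orthogonality of the two lines is then immediate from $\paran{\uv z+\tfrac{\bv N}{\norm{\bv N}}}\cdot\paran{\uv z-\tfrac{\bv N}{\norm{\bv N}}}=1-1=0$.

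I expect the one genuine subtlety to be the converse direction in the non-degenerate case: one must confirm that \emph{both} rays $t>0$ and $t<0$ of the line $t(\uv z+\uv N)$ realize $\uv N_D=\uv N$ (and not $-\uv N$ on one of them), which is exactly why it is cleaner to invoke the explicit distal normal field of Theorem \ref{thm:distalfield}, whose manifest invariance under $\hatbv x\mapsto-\hatbv x$ settles the sign issue at once, rather than arguing only through the ``up to a scalar'' phrasing of Theorem \ref{thm:AE2MP}. The identification of which line is the forward-pointing locus and which is the backward-pointing one is then just the $\bv N\mapsto-\bv N$ symmetry recorded in Remark \ref{rk:AzEllocustrick}; everything else is routine bookkeeping.
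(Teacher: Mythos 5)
Your proof is correct, and it reaches the statement by a slightly different route than the paper. The paper works ``forward'': by Theorem \ref{thm:AE2MP} the midplanes realizing $\uv N_D=\uv N\neq-\uv z$ are exactly those of the form $(\uv z+\uv N)\cdot(\bv x-\bv q)=0$, and applying the forward-kinematic reflection formula $R_{[\uv z+\uv N,\bv q]}(\bv 0)=2\tfrac{(\uv z+\uv N)\cdot\bv q}{\|\uv z+\uv N\|^2}(\uv z+\uv N)$ and letting $\bv q$ range freely shows the distal centers sweep out precisely the line $t(\uv z+\uv N)$; the $-\uv z$ case and the $\tilde\Lambda^-$ statement are handled just as you do, via parallel-to-$Z$ midplanes and Remark \ref{rk:AzEllocustrick}. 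You instead test each candidate distal center $\bv x\neq\bv 0$: Theorem \ref{thm:DC2MP} pins down its unique midplane, Theorem \ref{thm:AE2MP} forces $\bv x\parallel\uv z+\uv N$, and Theorem \ref{thm:distalfield} confirms that every nonzero multiple of $\uv z+\uv N$ actually yields $\uv N_D=+\uv N$. Each version has a small advantage: the paper's parametrization of $\bv q$ makes surjectivity onto the whole line immediate, whereas your appeal to the distal normal field $\bv F$ (which the paper does not use here) makes the sign question --- that both rays $t>0$ and $t<0$ give $+\uv N$ rather than $-\uv N$ --- completely explicit, a point the paper's proof settles only implicitly through the ``iff'' of Theorem \ref{thm:AE2MP}. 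You are also right to flag that inverting Theorem \ref{thm:AE2MP} requires knowing that distinct unit normals $\neq-\uv z$ give non-parallel vectors $\uv N+\uv z$; your computation with $\bv F$ sidesteps that entirely. No gaps.
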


\begin{proof}
By Theorem \ref{thm:AE2MP}, a configuration has distal normal $-\uv z$ iff the midplane is parallel to $Z$. Forward kinematics implies such midplanes yield distal centers in the base plane. Thus $\tilde\Lambda^+_{-\uv z}=\tilde\Lambda^-_{\uv z}$ is the base plane.

Next let $\uv N:=\tfrac{\bv N}{\norm{\bv N}}\neq -\uv z$. By Theorem \ref{thm:AE2MP}, a configuration has distal normal $\uv N\neq -\uv z$ iff the midplane is orthogonal to $\uv z+\uv N$. Thus, midplanes for such configurations have midplanes of the form $(\uv z+\uv N)\cdot(\bv x-\bv q)=0$ for some $\bv q$. Forward kinematics implies such midplanes yield distal centers: 
\[
    \bv D_C = R_{[\uv z+\uv N, \bv q]}(\bv 0) = 2\frac{(\uv z+\uv N)\cdot\bv q}{\|\uv z +\uv N\|^2}(\uv z + \uv N).
\]
Since $\bv q$ was arbitrary, the coefficient of $(\uv z +\uv N)$ can be any real number. Thus it follows $\tilde{\bv r}^+(t)=t(\uv z +\uv N)$ parametrizes $\tilde\Lambda^+_{\bv N}$. By Remark \ref{rk:AzEllocustrick}, we know  $\tilde\Lambda^-_{\bv N}=\tilde\Lambda^+_{-\bv N}$, so it follows that $\tilde{\bv r}^-(t)=t(\uv z-\uv N)$ parametrizes $\tilde\Lambda^-_{\bv N}$ for $\uv N\neq \uv z$.
\end{proof}

The observant reader may notice that the two pointing loci equations are quite similar and in fact, they only differ by $\rho^3+\rho z^2$ if we replace $T_\rho$ and $T_z$ with $N_\rho$ and $N_z$ respectively. Additionally, in Figure \ref{fig:affine+azelpointinglocus} we can see that for a given point $\bv T$, the lines tangent to $\Lambda_{\bv T}$ at the origin are precisely the lines given by $\tilde\Lambda_{\bv T}$. This is not a coincidence. One can show directly via projective geometry that $\Lambda_{\alpha \bv T}\to \tilde\Lambda_{\bv T}$ as $\alpha\to\infty$.\footnote{In projective coordinates (also known as homogenous coordinates), $\alpha\bv T$ is given by $[\alpha T_\rho:\alpha T_z:1]= [T_\rho:T_z:\tfrac{1}{\alpha}]$. The point at infinity obtained by taking $\alpha\to\infty$ is $\bv T^*:=[T_\rho:T_z:0]$.  Extending $E_{\bv T}=0$ to projective space (w.r.t $\bv T$) yields $T_w(\rho^3 + \rho z^2) + T_\rho(z^2-\rho^2)-2T_z \rho z = 0$ for points $[T_\rho:T_z:T_w]$. Plugging in $\bv T^*$ yields $T_\rho(z^2-\rho^2)-2T_z \rho z = 0$ which is the Az/El locus $\tilde E_{\bv T}(\rho,z)=0$.} %\RSnote{Desmos link for this?}
This can be interpreted as saying that the Az/El locus becomes the affine locus when $\bv T$ is infinitely far away.  Alternatively, one can show this by computing tangent slopes using the results from Theorem \ref{thm:AzElPointingLocus} and Theorem \ref{thm:AffineLocusParametrized}.

\subsection{Feasible Distal Normals for the Standard Canfield Joint}\label{sec:feasibleDN}
Let's consider a standard Canfield joint and recall that in this case we use the center of the equilateral base plate as the base center $\bv B_c$. Moreover, since we are dealing with a standard Canfield joint, the midcircle planes $P_i$ all intersect in a common line, namely $Z$.

In Section \ref{sec:MP2MJ} we describe how to obtain the midjoints from a midplane. In particular, Lemma \ref{lem:Midplane2MidjointDiscriminant} tells us that 
\[
    \Delta_i=\big(\tfrac{\bv \nu_i}{\norm{\bv\nu_i}}\cdot\bv\delta_i \big)^2 - \norm{\bv\delta_i}^2 + \ell_i^2 \geq 0
\]
is the necessary and sufficient criterion for determining whether a plane $P$ given by $\bv N\cdot(\bv x-\bv q)=0$ will intersect $M_i$ (assuming $P$ is not parallel to $P_i$). 

Multiplying both sides by $\norm{\bv\nu_i}^2$ and applying Lagrange's identity yields the constraint,
% \[
%     \paran{\bv\nu_i\cdot\bv\delta_i}^2 - \norm{\bv\nu_i}^2\norm{\bv\delta_i}^2 + \ell_i^2 \norm{\bv\nu_i}^2\geq 0.
% \]
% Using Lagrange's identity we obtain
\[
    \norm{\bv\nu_i}^2\Delta_i=\ell_i^2 \norm{\bv\nu_i}^2 - \norm{\bv\nu_i\times\bv\delta_i}^2 \geq 0.
\]  
This condition is satisfied iff $\bv\nu_i=\bv 0$ or $\Delta_i\geq 0$.\footnote{Note $\bv\nu_i=\bv 0$ iff $P$ is parallel to $P_i$. Since $P_i$ contains $Z$, this means $P$ is parallel to $Z$ and so $\bv\nu_i=\bv 0$ implies $\uv N_D=-\uv z$ by Theorem \ref{thm:AE2MP}.}

Recall that $\bv\nu_i=\bv N\times\uv N_i$ and $\bv\delta_i=\bv q_i-\bv B_i$ where $\bv q_i$ is any point on $P\cap P_i$. We can replace $\ell_i$ with $\ell$ since all arm lengths are equal in a standard Canfield joint. Thus we obtain
\[
    \ell^2 \norm{\bv N\times\uv N_i}^2 - \norm{(\bv N\times\uv N_i)\times(\bv q_i-\bv B_i)}^2 \geq 0,
\]  
and have such a constraint for each $i=1,2,3$. 

Now we will apply this to both the frozen midjoint failure mode and to the plunge constrained model with $p_d\in\RR$. This means we assume $P$ is given by $\bv N\cdot(\bv x-\bv q)=0$ where $\bv q=\bv m^*$ or $\bv q=p_d\uv z$. In the failure mode case, $\bv q_i$ will be the projection of $\bv m^*$ onto the line $P\cap P_i$ (see Remark \ref{rk:qi}). In the plunge case, $\bv q_i$ can be chosen to be $p_d\uv z$ for all $i$ (because all $P_i$ intersect at $Z$).

By Theorem \ref{thm:distalfield} we know that the polar angle for $\uv N_D$ is twice that of the polar angle of $\bv D_c$, and by Theorem \ref{thm:DC2MP} we also know that $\bv D_c$ is a normal to the midplane (when $\bv D_c\neq\bv 0)$. Thus, in this assumed context, these results allow us to take a candidate distal normal $\uv N_D=\sin\theta\guv\rho+\cos\theta\uv z$, and produce the associated midplane $\uv N\cdot(\bv x-\bv q)=0$ where $\uv N=\sin\tfrac{\theta}{2}\guv\rho+\cos\tfrac{\theta}{2}\uv z$. Converting $\uv N_D$ into $\uv N$ and then testing with the above criteria for each $i=1,2,3$, gives us a way of determining exactly which distal normals are achievable when accounting for the geometry of the base half-joint. If  $\norm{\bv\nu_i}^2\Delta_i\geq 0$ for all $i=1,2,3$, then this distal normal is achievable. If even one fails, then it is not. Again note this takes into account the sizes of the midcircles.

Using this pipeline, we visualize pointing capabilities of plunge constrained standard Canfield joints for different choices of parameters in Figure \ref{fig:feaspointing}. Similarly, we visualize the pointing capabilities of a standard Canfield joint in different frozen midjoint failure modes in Figure \ref{fig:feaspointing_frozen}. 

\begin{figure}[h]
\begin{tikzpicture}
\node [anchor=west] (l1) at (-1,1.55) {${\scriptstyle \ell = 0.75}$};
\node [anchor=west] (l2) at (-1,0) {${\scriptstyle \ell = 1.50}$};
\node [anchor=west] (l3) at (-1,-1.5) {${\scriptstyle \ell = 3.00}$};

\node [anchor=west] (t1) at (0.45,2.6) {${\scriptstyle r = 0.50}$};
\node [anchor=west] (t1) at (1.95,2.6) {${\scriptstyle r = 0.80}$};
\node [anchor=west] (t1) at (3.5,2.6) {${\scriptstyle r = 0.95}$};
\node [anchor=west] (t1) at (5,2.6) {${\scriptstyle r = 1.00}$};
\node [anchor=west] (t1) at (6.5,2.6) {${\scriptstyle r = 1.50}$};

\begin{scope}[xshift=4cm]
    \node[anchor=center,inner sep=0] (image) at (0,0) {\includegraphics[width=0.45\textwidth]{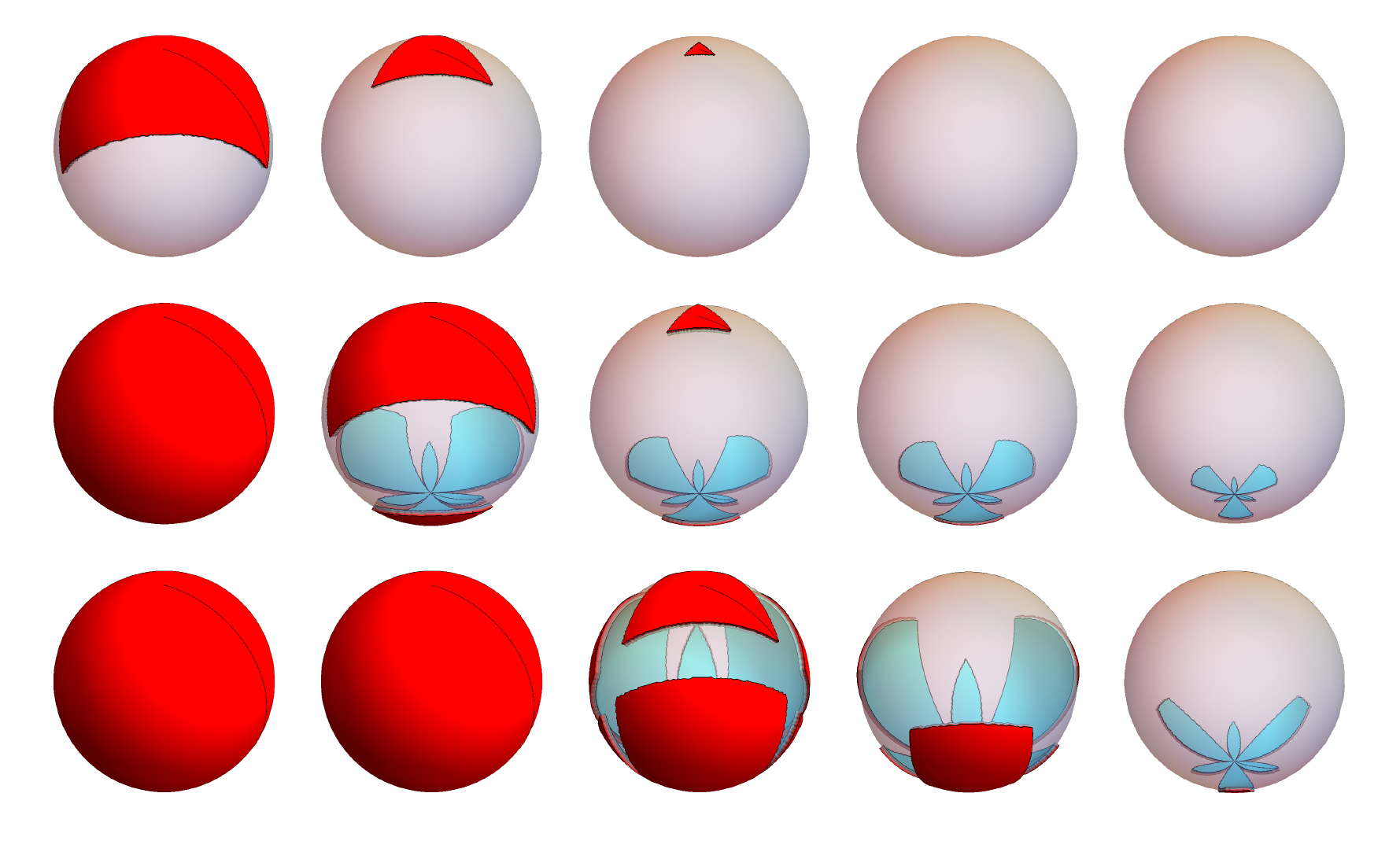}};
\end{scope}
\end{tikzpicture}
\caption{Here $\ell$ is arm length, $p_d=r\ell$, and $b=\sqrt{3}$ (so the distance from $\bv B_i$ to the base triangle center is $1$). The covered regions of the sphere indicate which distal normal orientations are possible with those parameters (the color code is as in Figure \ref{fig:feaspointing_frozen}). Note that when $r\geq 1$ (i.e. $p_d\geq \ell$), it is impossible to point up, but for sufficiently long arms it is still possible to point down. Also note that the ability to point down only appears after $\ell\geq 1$. As usual, this is under the assumption that the elements can pass through each other. }
\label{fig:feaspointing}
\end{figure}

\section{Future work}\label{sec:future}

The foregoing presents the most general and rigourous kinematic analysis of the Canfield joint to date. This was in large part feasible due to the key assumption of midplane symmetry. However, it is known that this condition can be violated in special cases (see \cite{KCThesis}). Thus, further research is warranted into what circumstances lead to loss of midplane symmetry and into what effect that may have on the kinematics. Such questions will require a thorough study of the full configuration space of the Canfield joint. To date, no such study has been conducted.

An additional reason to study the full configuration space of the Canfield joint is to shed light on its singularities. On the one hand, in line with the work in \cite{Canfield1998}, we conjecture that the set of singular base settings is very small and specifically is of measure zero. On the other hand, empirical evidence suggest that even being near a singular configuration can prove deleterious in physical applications. Ideally, future investigations would lead to criteria by which dangerous base settings can be identified and avoided - either by software control or by adjusting the physical design of the Canfield joint.  Conversely, dimensions and configurations of the Canfield joint may be designed to confer extraordinary pointing capability, as in the infinite plunge distance example shown in Figure \ref{fig:vertmidplane}, which holds promise for novel mechanisms.

% A thorough investigation of the configuration space, with particular focus on determining criteria for kinematic singularities, will inform robust motion planning for a wide variety of physical constructions. 

% Singular configurations, in which the robot cannot be reliably controlled, tend to occur at sets of base angles where midplane symmetry can be violated, as described in \cite{KCThesis}. \CBnote{More investigation of midplane symmetry violations} Avoiding these configurations, whether through software control or by adjusting the physical design of the Canfield joint to restrict the range of motion, is desirable in gimbal applications.  

% More work is warranted, however, in the extending and application of this analysis to physical robotic control. 

Motion planning and control of a prototype Canfield joint was validated for the mission needs of a deep space optical communication use case, and the methods and results of this analysis will be discussed in a subsequent paper.  However, the frozen midjoint case described above has not yet been integrated into motion planning algorithms for the Canfield joint or tested in a laboratory setting.

Finally, attention is also deserved by several concerns affecting physical robots which are not accounted for in this model:  for example, the limitations of the physical midjoint linkages will limit the configuration space accessible to the robot and have important implications for future analysis of the dynamics.  

\bibliographystyle{IEEEtran}
\bibliography{references.bib}

\end{document}